\title{
{Contrastive Learning as Kernel Approximation}\\~\\~\\
{\large Konstantinos Christopher Tsiolis \\
    Department of Mathematics and Statistics \\ 
	McGill University, Montreal \\ 
	August 2023 \\~\\~\\
	A thesis submitted to McGill University in partial fulfillment of the requirements of the degree of \\~\\ Master of Science }\\~\\
}
\author{\textcopyright Konstantinos Christopher Tsiolis, 2023}
\date{}
\begin{document}
\maketitle

\chapter*{Abstract}
\label{sec:engAbstract}
\addcontentsline{toc}{section}{\nameref{sec:engAbstract}}

In standard supervised machine learning, it is necessary to provide a label for every input in the data. While raw data in many application domains is easily obtainable on the Internet, manual labelling of this data is prohibitively expensive. To circumvent this issue, contrastive learning methods produce low-dimensional vector representations (also called features) of high-dimensional inputs on large unlabelled datasets. This is done by training with a contrastive loss function, which enforces that similar inputs have high inner product and dissimilar inputs have low inner product in the feature space. Rather than annotating each input individually, it suffices to define a means of sampling pairs of similar and dissimilar inputs. Contrastive features can then be fed as inputs to supervised learning systems on much smaller labelled datasets to obtain high accuracy on end tasks of interest. 

The goal of this thesis is to provide an overview of the current theoretical understanding of contrastive learning, specifically as it pertains to the minimizers of contrastive loss functions and their relationship to prior methods for learning features from unlabelled data. We highlight popular contrastive loss functions whose minimizers implicitly approximate a positive semidefinite (PSD) kernel. The latter is a well-studied object in functional analysis and learning theory that formalizes a notion of similarity between elements of a space. PSD kernels provide an implicit definition of features through the theory of reproducing kernel Hilbert spaces.

\chapter*{Abrégé}
\label{sec:frAbstract}
\addcontentsline{toc}{section}{\nameref{sec:frAbstract}}

L’apprentissage automatique supervisé nécessite que tous les données soient étiquetées. Dans plusieurs domaines, des données sont facilement obtenues sur Internet, mais l’étiquetage manuel est long. Les méthodes contrastives surmontent cet obstacle. Elles apprennent des vecteurs de caractéristiques de petite dimension pour des exemples non étiquetés situés dans un espace de grande dimension. L’entraînement s’effectue avec une fonction de perte contrastive qui encourage que les exemples similaires aient un produit scalaire élevé et que les exemples non similaires aient un produit scalaire faible dans l’espace des caractéristiques. Au lieu d’étiqueter chaque exemple manuellement, il suffit de spécifier une façon d'échantillonner des paires d'exemples similaires ainsi que des paires d'exemples non similaires. Les vecteurs de caractéristiques qui résultent de l’apprentissage contrastif peuvent ensuite être utilisés comme entrées pour des systèmes supervisés entraînés sur un petit nombre d’exemples afin d’obtenir un faible taux d’erreur pour divers tâches d’apprentissage.

L’objectif de cette thèse est de faire le survol des connaissances théoriques concernant l’apprentissage contrastif. Nous concentrons surtout sur les minimiseurs des fonctions de perte contrastives et la relation entre l'apprentissage contrastif et des méthodes moins récentes qui apprennent des vecteurs de caractéristiques à partir de données non étiquetées. Nous soulignons des exemples populaires de fonctions de perte contrastives dont les minimiseurs effectuent implicitement l’approximation d’un noyau symmétrique et defini positif. Ce dernier est traité dans l’analyse fonctionelle et la théorie de l’apprentissage automatique. Il formalise la notion de similarité entre les éléments d’un espace et fournit une définition implicite des caractéristiques grâce à la théorie des espaces de Hilbert à noyau reproduisant.

\chapter*{Acknowledgements}
\label{sec:ded}
\addcontentsline{toc}{section}{\nameref{sec:ded}}

While this thesis is the product of the work that I have conducted during my master's degree at McGill, it is also a tribute to all those who have helped me get to this point. Without them, this would not be possible.

First and foremost, I would like to thank my parents for providing me with everything I need to pursue my dreams. To Mom, thank you for the countless hours you spent helping me with my homework in elementary school, and thank you for teaching me to always do more than what is expected of me. To Dad, thank you for driving me to my high school through rush hour traffic every morning, and thank you for being my coach not only in soccer but also in life. 

I am deeply grateful to my supervisor, Prof.\@ Adam Oberman, who shares my enthusiasm for contrastive learning and has mentored me since my time as an undergraduate student. In our lab group, I was fortunate to receive guidance on implementing machine learning pipelines and designing experiments from Dr.\@ Tiago Salvador and Vikram Voleti. I would also like to thank Noah Marshall for our many discussions on the topics addressed in this thesis and above all for being an amazing friend.

I further wish to acknowledge the research environment at McGill and Mila that has encouraged me to expand my knowledge and work on multiple projects over the course of my degree. Thank you to Profs.\@ Courtney and Elliot Paquette for establishing a student seminar on random matrix theory, machine learning, and optimization. This was a tremendous help as I learned how to parse theoretical research and present it. I am also excited to be working on a project on random matrix theory for neural networks with Hugo Latourelle-Vigeant and Prof.\@ Elliot Paquette\footnote{This project is equally interesting in my opinion, but will not be addressed in this thesis.}. 

I would also like to thank all of the mentors and collaborators I worked with before my master's studies. I am grateful to Prof.\@ Jackie Cheung, who supervised me while I was an undergraduate student for two summers. During those summers, I was fortunate to work with Dr.\@ Edward Newell, Jingyi (Kylie) He, and Kian Kenyon-Dean on word embeddings. This was my first formal research experience, and it sparked my interest in contrastive learning. Also important to my development was the support I received from Dr.\@ Ivan Ivanov and Sandi Mak, two of my mathematics teachers at Vanier College, who introduced me to machine learning.

And of course, where would I be without my family and friends? Thank you to Melina and Stephania for being the best sisters I could ever ask for. Thank you to my grandparents for the sacrifices that they made in emigrating to Canada to provide their children and grandchildren with opportunities like the one I have now. Thank you to my friends for always being there for me.

Finally, I would like to acknowledge the funding that I received from the Natural Sciences and Engineering Research Council of Canada (NSERC) through the CGS M scholarship and from the Fonds de recherche du Québec - Nature et technologies (FRQNT) through the B1X Scholarship.

\tableofcontents
\listoffigures %
\addcontentsline{toc}{section}{\listfigurename}

\clearpage 
\pagenumbering{arabic} 

\chapter{Introduction}\label{chapter:intro}

Substantial progress has been made in the field of machine learning in the past decade, and deep learning\footnote{Here, \textit{deep learning} is understood to mean learning with artificial neural networks.} \cite{lecun2015deep,goodfellow2016deep} stands out as the major contributing factor. Through the composition of vector-valued functions, neural network models map high-dimensional inputs (e.g., images, words) to low-dimensional representations --- also called \textit{features} or \textit{embeddings} --- in Euclidean space. Though there is no exact definition of what makes a "good" representation \cite{bengio2013representation}, a plethora of heuristics (e.g., loss functions, neural network architectures, regularizers, sampling strategies) have been developed to train neural network feature encoders.

The effectiveness of representation learning is a consequence of the \textit{generalization} ability of neural networks. The latter are parametric function approximators that produce an output for each input in the source domain, regardless of whether or not the input appears in the training data. Across a variety of tasks and application domains, predictors which make use of neural network features achieve superior performance\footnote{The way performance is measured varies according to the task and application domain (e.g., accuracy for classification tasks, perplexity for language modelling, cumulative reward for control tasks).} on unseen data than those which do not \cite{bengio2000neural,mnih2015human,krizhevsky2017imagenet}. 


The use of convolutional neural networks (CNNs) \cite{lecun1998gradient,krizhevsky2017imagenet,he2016deep} and Vision Transformers (ViTs) \cite{dosovitskiy2021image} to learn features on large datasets such as ImageNet \cite{russakovsky2015imagenet} for image classification is now common practice. Such features are also employed to generate new images by training \textit{decoder} neural networks to map from the feature space back to the input space \cite{kingma2014auto,goodfellow2020generative,ho2020denoising,ramesh2022hierarchical}.

In natural language processing (NLP), representation learning has replaced feature engineering, the manual specification of features which are believed to be relevant to the task at hand (e.g.\@ word frequencies, parts of speech, parse tree attributes) \cite{bikel1999algorithm,florian2003named,higgins2003machine}. Word embedding methods \cite{mikolov2013distributed,pennington2014glove} automatically produce vector representations of words based on statistics from a large collection of text (called a \textit{corpus}). More recently, large neural network language models learn context-informed representations of words that are applied to a variety of linguistic tasks, such as question answering, translation, and text generation \cite{peters2018deep,devlin2019bert,brown2020language}.

In reinforcement learning, neural networks predict optimal actions to solve control tasks without an explicit model of the environment \cite{sutton2018reinforcement}. Notable successes include achieving human-level performance on Atari games \cite{mnih2015human} and defeating Lee Sedol, one of the world's top Go players \cite{silver2016mastering}.

Deep learning has applications to medicine \cite{esteva2021deep,ahsan2022machine}, biology \cite{jumper2021highly}, the mitigation of climate change \cite{rolnick2022tackling}, finance \cite{dixon2020machine}, and education \cite{kuvcak2018machine}, among many others. However, its increasing applicability raises important concerns. Deep learning systems can amplify biases based on gender and race, among others \cite{mehrabi2021survey}. \cite{buolamwini2018gender} found that several commercial facial recognition systems are substantially less accurate in identifying darker-skinned females compared to lighter-skinned males, a discrepancy that they attributed to an unbalanced training dataset. A journalistic investigation deemed COMPAS, a software consulted by judges in American courts to predict recidivism risk, to be biased against African Americans \cite{angwin2016machine}. Word embeddings trained with the word2vec algorithm \cite{mikolov2013distributed} (which we will discuss in Section \ref{section:word_embeddings}), exhibit stereotypical associations between gender and occupation (e.g. "woman" and "homemaker") \cite{bolukbasi2016man}. 

The widespread adoption of deep learning has heightened the need for a more complete understanding of these methods in order assess risks and failure modes such as the ones described above. Deep learning is now the backbone of algorithms that govern hiring, justice, and healthcare decisions. Neural networks are often viewed as black-box function approximators, and it is thus difficult to explain the factors leading to a decision for a specific input \cite{tjoa2020survey,li2023trustworthy}. For example, in the case of word embeddings, there is no immediate linguistic interpretation of each of the components of a word's vector representation \cite{csenel2018semantic}. 

Statistical learning theory \cite{valiant1984theory,vapnik1999overview,shalev2014understanding,mohri2018foundations} and the more recent deep learning theory \cite{belkin2021fit} seek to explain the generalization ability of learning systems. These theories are primarily focused on \textit{supervised learning}, which solely leverages labelled data for the prediction task at hand. In this setting, neural networks are trained \textit{end-to-end}: the mapping from input space to feature space and the mapping from feature space to predictions are learned simultaneously. 

When labelled data is scarce, a common approach is to disentangle feature learning and prediction. While the latter requires labelled data, the former step can leverage the wealth of unlabelled data that is available on the Internet for many domains of interest (e.g. text, images). \textit{Unsupervised learning} techniques operate on unlabelled data to glean the structure of the input space. For example, dimensionality reduction techniques such as multidimensional scaling (MDS) \cite{cox2000multidimensional}, ISOMAP \cite{tenenbaum2000global}, Locally Linear Embeddings (LLE) \cite{roweis2000nonlinear}, and Laplacian Eigenmaps \cite{belkin2003laplacian} learn an approximate distance-preserving map between the input space and a low-dimensional feature space. The resulting features can then serve as initialization for a supervised model applied to end tasks of interest (called \textit{downstream tasks}). However, this paradigm has important limitations. It requires that a similarity (or distance) function be pre-specified for every pair of inputs, which is unfeasible for large datasets. Furthermore, many approaches incorporate a computationally expensive spectral decomposition step and are unable to generalize to new inputs.

\textit{Self-supervised learning} (SSL) mitigates the issues outlined above. It is similar to unsupervised learning in that it does not assume access to labels, but it distinguishes itself by relying on additional knowledge that can be automatically extracted or generated. This knowledge procures labels for an auxiliary supervised task which is believed to be relevant to the downstream tasks. Solving the auxiliary task produces features for downstream models. 

We focus on \textit{contrastive learning}, an umbrella term for SSL methods which define the auxiliary task of classifying pairs of inputs as similar or dissimilar. \textit{Contrastive loss functions} capture the intuitive idea that representations for similar inputs should be close together while representations for dissimilar inputs should be far apart \cite{chopra2005learning,hadsell2006dimensionality,gutmann2012noise,mikolov2013distributed,sohn2016improved,oord2018representation,chen2020simple,haochen2021provable}. Unlike unsupervised methods, it is not necessary to explicitly define a similarity function on all pairs of inputs. Instead, one specifies a procedure for sampling similar (\textit{positive}) and dissimilar (\textit{negative}) pairs. The definition of positive and negative pairs varies according to the application area of interest. For word embedding algorithms such as word2vec \cite{mikolov2013distributed}, two words which appear together in a text corpus form a positive pair. In computer vision, a common choice is to designate two transformed versions (called \textit{views}) of the same source image as a positive pair \cite{oord2018representation, wu2018unsupervised, hjelm2019learning, bachman2019learning, misra2020self, henaff2020data, chen2020simple}. We highlight SimCLR \cite{chen2020simple}, a contrastive learning algorithm for computer vision upon which many theoretical analyses are based \cite{wang2020understanding,zimmermann2021contrastive,wen2021toward,haochen2021provable,balestriero2022contrastive,johnson2022contrastive}. On a suite of downstream tasks, \cite{chen2020simple} determined that SimCLR achieves higher accuracy than fully supervised training.

There is a growing literature which aims to explain the empirical success of contrastive learning \cite{arora2019theoretical,tosh2021contrastive_a,tosh2021contrastive_b,wang2020understanding,wen2021toward,zimmermann2021contrastive,haochen2021provable,balestriero2022contrastive,johnson2022contrastive}. This thesis is intended to summarize these theoretical developments, with emphasis on the minima of contrastive loss functions and the relationship between contrastive learning and dimensionality reduction. We provide the necessary background to interpret recent results and motivate future work. 

Contrastive learning was first developed as a remedy to computationally expensive unsupervised dimensionality reduction methods, many of which lack the ability to generalize \cite{chopra2005learning,hadsell2006dimensionality}. Recent theoretical work established a connection between the two \cite{haochen2021provable,balestriero2022contrastive,johnson2022contrastive}. As a result, contrastive learning can be interpreted as a parametric approach for approximating a notion of similarity between elements the input space. This notion of similarity is captured by a positive semidefinite (PSD) kernel, a well-understood object in functional analysis. The theory of reproducing kernel Hilbert spaces (RKHS) establishes that PSD kernels implicitly define a mapping from the input space to a Hilbert space of functions, which we can in turn interpret as features. This insight has led to the development of algorithms that explicitly learn features from PSD kernels with neural networks \cite{deng2022neuralef,deng2022neural}. 

The thesis is structured as follows. Chapter \ref{chapter:ml_prelims} covers machine learning fundamentals. We review the paradigms of supervised, unsupervised, and self-supervised learning before presenting a formal definition of neural networks and loss functions. With this in hand, we outline the \textit{logistic regression} method for classification, which features prominently in solving the auxiliary task defined by contrastive learning. Chapter \ref{chapter:rkhs} concerns RKHS theory, with emphasis on the results that connect PSD kernels to feature maps. In Chapter \ref{chapter:dim_reduction}, we discuss dimensionality reduction methods, namely principal component analysis (PCA) \cite{pearson1901on,hotelling1933analysis}, MDS, ISOMAP, LLE, Laplacian Eigenmaps, the Nyström method \cite{drineas2005nystrom}, and Random Fourier Features \cite{rahimi2007random}. Chapter \ref{chapter:contrastive} introduces contrastive methods and explains how they overcome the limitations of unsupervised dimensionality reduction.
We provide examples of contrastive methods and loss functions before addressing the theoretical results linking PSD kernels, dimensionality reduction, and contrastive learning.

\chapter{Machine Learning Preliminaries}\label{chapter:ml_prelims}

In this chapter, we provide the machine learning background that is necessary for our subsequent discussion of contrastive learning. In Section \ref{section:paradigms}, we provide a formalism to capture the learning problems that we consider. We describe the supervised, unsupervised, and self-supervised learning paradigms. In Section \ref{section:classification}, we review the theory of binary and multiclass classification. The results we present therein will be useful in Chapter \ref{chapter:contrastive} to address the link between contrastive learning and kernel approximation.

\section{Paradigms and Terminology}\label{section:paradigms}

Machine learning is concerned with leveraging data to make accurate predictions on tasks of interest. The MNIST problem \cite{lecun1998gradient} is a famous example. The objective is to find a classifier which can determine the identity of a handwritten digit with the lowest possible error. The machine learning approach advocates using available data (which forms a \textit{dataset}) to define a suitable classification rule. If the dataset contains both images of handwritten digits and their corresponding identities (called \textit{labels} or \textit{targets}), we are in the \textit{supervised} setting. If the dataset instead contains only the images, we are in the \textit{unsupervised} setting.  

\textbf{Supervised Learning.} We establish a formalism for supervised learning based on \cite{mohri2018foundations,shalev2014understanding}. Let $\Xcal$ denote the input/data space (e.g., the space of RGB images, the space of words) and $\Ycal$ denote the label space. We focus our attention on classification problems, where $\Ycal$ is finite (e.g., it is the set $\{0,\dots,9\}$ for MNIST) with cardinality corresponding to the number of classes $C$. We refer to the case $C = 2$ as \textit{binary} classification and the case $C > 2$ as \textit{multiclass} classification. For convenience, for every $m \in \N$, we use the notation $[m] := \{1,\dots,m\}$. Now, consider the data generating process defined below.

\begin{definition}[Labelled Dataset]
    Let $p$ be a probability distribution over $\Xcal \times \Ycal$. A collection $\Dcal := \{(x_i,y_i)\}_{i=1}^N$ of samples drawn i.i.d.\@ from $p$ is called a \textbf{labelled dataset} drawn from $p$.
\end{definition}

\begin{remark}\label{rmk:learning_theory}
    In our setting, it is convenient to consider supervised problems as approximating $p$ (or, more accurately, the conditional distributions $p(\cdot|x)$ for all $x \in \Xcal$). However, in the theoretical frameworks of \cite{mohri2018foundations,shalev2014understanding}, it is assumed that there exists a deterministic ground truth labelling function $h^*: \Xcal \to \Ycal$ that we wish to approximate. The dataset is then generated by sampling inputs $x_1,\dots,x_N$ i.i.d.\@ from a distribution over $\Xcal$ and taking the labels to be $h^*(x_1),\dots,h^*(x_N)$. If $h^*$ were allowed to be random, then this formulation would be equivalent to ours.
\end{remark}

\begin{definition}[Hypothesis Class]
    Let $\Hcal$ be a collection of functions of the form $h: \Xcal \to \pi_C$, where $\pi_C$ denotes the $C$-dimensional probability simplex. Then, $\Hcal$ is called a \textbf{hypothesis class} and each $h \in \Hcal$ is called a \textbf{hypothesis}. 
\end{definition}

\begin{definition}[Feedforward Neural Network]
    Let $L \in \Nbb$. We define a \textbf{feedforward neural network} (FFNN) of $L$ layers $\hbf^{(L)}: \Xcal \to \pi_C$ inductively as follows. Assume $\Xcal \subseteq \R^{n_0}$ for some $n_0 \in \N$. Let $n_1,\dots,n_{L-1} \in \N$ and $n_L = C$. For every $l \in [L]$, let $\Wbf_l$ be a real $n_l \times n_{l-1}$ matrix (called a \textbf{weight matrix}) and $\bbf_l \in \R^{n_l}$ (called a \textbf{bias vector}). Further let $f_l: \R \to \R$, $l \in [L-1]$, and $\tilde{\fbf}: \R^C \to \pi_C$ be nonlinear functions (called \textbf{activation functions}). Define for each $\xbf \in \Xcal$ and $l \in [L]$
    \begin{equation}
        \hbf^{(l)}(\xbf) = 
        \begin{cases}
            f_1(\Wbf_1 \xbf + \bbf_1), & l = 1 \\
            f_l\big(\Wbf_l \hbf^{(l-1)}(\xbf) + \bbf_l \big), & 1 < l < L \\
            \tilde{\fbf}\big(\Wbf_L \hbf^{(L-1)}(\xbf) + \bbf_L\big), & l=L, 
        \end{cases}
    \end{equation}
    where $f_1,\dots,f_{L-1}$ are understood as being applied elementwise to vectors.
\end{definition}
The architecture of an FFNN is specified by the number of layers $L$, the layer sizes $n_1,\dots,n_{L-1}$, and the activation functions $f_1,\dots,f_{L-1},\tilde{\fbf}$. An example of a hypothesis space is then the space of neural networks of a particular architecture.

\begin{definition}[Neural Network Encoder and Features]
    Assume $\Xcal \subseteq \R^{n_0}$. Let $\hbf^{(L)}$ be a feedforward neural network with $L \geq 2$. Then, $\hbf^{(L-1)}: \Xcal \to \R^{n_{L-1}}$ is called an \textbf{encoder}. Given $\xbf \in \Xcal$, we call $\hbf^{(L-1)}(\xbf)$ the neural network \textbf{features} (or \textbf{representation}) associated with $\xbf$. 
\end{definition}

The above definitions give us two perspectives on feedforward neural networks. First, a neural network is a composition of nonlinear functions parametrized by the weights $\Wbf_1,\dots,\Wbf_L$ and biases $\bbf_1,\dots,\bbf_L$. Second, a feedforward neural network $\hbf^{(L)}$ can be decomposed into an encoder $\hbf^{(L-1)}$ that produces a low-dimensional representation of the input and a linear classifier $\tilde{\fbf}(\Wbf_L(\cdot) + \bbf_L)$ that outputs a prediction given a representation. 

\begin{remark}
    The use of feedforward neural networks alone is uncommon in practice. Specialized neural network architectures which make additional assumptions about the nature of the input data are more popular. Convolutional neural network architectures are used extensively for image inputs \cite{lecun1998gradient,krizhevsky2017imagenet,he2016deep,chen2020simple} and Transformers are designed to handle sequential inputs \cite{vaswani2017attention,devlin2019bert,brown2020language}. However, the analyses of contrastive learning that we discuss in Chapter \ref{chapter:contrastive} abstract away the details of the neural network architecture. They are sufficiently general to handle any function which maps from $\Xcal$ to a feature space $\R^d$.
\end{remark}

Once the architecture is specified, training a neural network corresponds to minimizing a loss function over the weight matrices $\Wbf_1,\dots,\Wbf_L$ and biases $\bbf_1,\dots,\bbf_L$.

\begin{definition}[Loss Function]
    A function $\ell: \Ycal \times \pi_C \to \R^+_0$ is called a \textbf{loss function}.
\end{definition}

The first argument of the loss function is the true label $y$ for a given input $x$ and the second argument is the vector of conditional probabilities $\phat(\cdot|x)$ assigned by the model over the set of classes.

\begin{example}[Cross-Entropy Loss]
    Suppose $\Ycal = [C]$. Then, the \textbf{cross-entropy loss} is
    \begin{equation}\label{eq:xent}
        \ell(y,\yhat) = \sum_{c=1}^C \1(y=c)\log(\yhat_c),
    \end{equation}
    where $\log(\cdot)$ denotes the natural logarithm and $\yhat_c$ is the $c$-th entry of $\yhat$. This function is non-negative, convex in $\yhat$, and is minimized when $\yhat_y = 1$.
\end{example}

\begin{definition}[Expected Loss]
    Let $\ell$ be a loss function and $h \in \Hcal$. Then, the \textbf{expected loss} of the hypothesis $h$ is
    \begin{equation}
        L(h) := \E_{(x,y) \sim p} \big[\ell(y,h(x))\big].
    \end{equation}
\end{definition}

Computing an expectation with respect to $p$ is often intractable and we must resort to a Monte Carlo estimate.

\begin{definition}[Empirical Loss]
    Let $\ell$ be a loss function and $h \in \Hcal$. Let $\Dcal = \{(x_i,y_i)\}_{i=1}^N$ be a dataset drawn from $p$. Then, the \textbf{empirical loss} of the hypothesis $h$ (with respect to $\Dcal$) is
    \begin{equation}
        L_{\Dcal}(h) := \frac{1}{N} \sum_{i=1}^N \ell(y_i,h(x_i)).
    \end{equation}
\end{definition}

\begin{definition}[Learning Algorithm]
    A \textit{learning algorithm} $\Acal$ is an algorithm which takes a dataset $\Dcal$ as input and outputs a hypothesis $h \in \Hcal$.
\end{definition}

Abstracting away the details, the common learning algorithm for supervised deep learning is to take a neural network hypothesis class with a large number of layers $L$ (e.g., 50 layers in the case of the standard ResNet-50 architecture \cite{he2016deep} for classification on ImageNet \cite{russakovsky2015imagenet}) and minimize the empirical loss on a dataset $\Dcal$ using a variant of stochastic gradient descent (SGD).

The Probably Approximately Correct (PAC) Learning framework \cite{valiant1984theory} for statistical learning theory provides \textit{generalization bounds} for learning algorithms. That is, there exist upper bounds on the \textit{generalization gap}
\begin{equation}\label{eq:generalization_gap}
    |L(h) - L_{\Dcal}(h)|
\end{equation}
which hold with high probability. This captures (with high probability) the worst-case performance of a hypothesis $h$ on unseen data from the distribution $p$ given its performance on the training data $\Dcal$. Generalization bounds increase as hypothesis class complexity increases and decrease as training dataset size increases. The former concerns the "expressiveness" of the function class $\Hcal$, and is often captured by the Vapnik-Chervonenkis (VC) dimension or the Rademacher complexity \cite{mohri2018foundations,shalev2014understanding}. 

\begin{remark}
    Generalization bounds in statistical learning theory work under the assumptions outlined in Remark \ref{rmk:learning_theory}, rather than our setup of approximating a probability distribution. The loss function considered in statistical learning theory is the 0-1 loss 
    \begin{equation}\label{eq:zero_one_loss}
        \ell(y,\yhat) = \1(y \neq \yhat).
    \end{equation}
    Hence, the expected loss in this setting is the probability that a hypothesis misclassifies an input.
\end{remark}

\textbf{Unsupervised Learning.} There is no formalism akin to the one above for unsupervised learning, since the notion of learning the structure of the input space from unlabelled data is open-ended. 

\begin{definition}[Unlabelled Dataset]
    Let $\Xcal$ be an input space and let $p_x$ be a probability distribution over $\Xcal$. Then, a collection $\Dcal_x := \{x_i\}_{i=1}^N$ of samples drawn i.i.d.\@ from $p_x$ is said to be an \textit{unlabelled dataset} drawn from $p_x$.
\end{definition}

Unlabelled datasets are far easier to obtain than labelled datasets. While the latter often requires manual annotation (e.g., a human looking through images of handwritten digits and labelling each one), the former is readily available on the Internet. A plethora of text and image data can be extracted from social media, news websites, and online encyclopedias. A natural machine learning pipeline is then to first perform unsupervised learning on a large quantity of unlabelled data (with the aim of learning features), followed by supervised learning on whatever labelled data is available for the downstream task of interest.

Broadly speaking, the goal of unsupervised learning is to learn how data in the input space $\Xcal$ is distributed \cite{mohri2018foundations,goodfellow2016deep}. For example, in the case of clustering, we wish to group nearby (according to some notion of distance or similarity) samples \cite{ng2001spectral}. In dimensionality reduction (covered in detail in Chapter \ref{chapter:dim_reduction}), we wish to identify the subspace of $\Xcal$ on which data concentrates in order to produce low-dimensional representations \cite{pearson1901on,hotelling1933analysis,tenenbaum2000global,roweis2000nonlinear,belkin2003laplacian}. In general, such a subspace is nonlinear, which motivates \textit{manifold learning} approaches such as ISOMAP, LLE, and Laplacian Eigenmaps. 

In the machine learning context, a manifold is informally understood as "a connected set of points that can be approximated well by considering only a small number of degrees of freedom, or dimensions, embedded in a higher-dimensional space" \cite{goodfellow2016deep}. The \textit{manifold hypothesis} states that real-world data is concentrated on a low-dimensional manifold embedded in much higher-dimensional Euclidean space. The authors of \cite{goodfellow2016deep} outline two arguments in favour of the manifold hypothesis: (i) we are unlikely to obtain real-world text and image data by sampling uniformly over characters and pixel values respectively, and (ii) small perturbations (e.g., changing the brightness, cropping, translating) of real-world data would still be considered real-world data. This second point of particular importance for manifold learning methods, which specify a graph structure over the inputs in $\Dcal_x$ whereby similar examples are linked by a weighted edge (which depends on the notion of distance or similarity being employed). Low-dimensional representations then arise from the spectral decomposition of the adjacency matrix (or a related matrix, such as the graph Laplacian).

\textbf{Self-Supervised Learning.} Much of the developments in self-supervised learning have taken place over the last 5 years as its empirical performance began to rival that of supervised learning \cite{oord2018representation,chen2020simple,devlin2019bert,brown2020language}. As a result, theoretical knowledge in SSL is limited compared to the more established supervised paradigm \cite{haochen2021provable}. In fact, there is no consensus definition of self-supervised learning. We adopt the one from \cite{balestriero2022contrastive}, which states that "[self-supervised learning] places itself in-between supervised and unsupervised learning as it does not require labels but does require knowledge of what makes some samples semantically close to others." The goal of self-supervised learning is then to leverage unlabelled data along with this additional knowledge to learn representations of input data in $\Xcal$. The learned representations should be such that semantically close (i.e., similar) inputs have representations which are close (as measured by some notion of similarity or distance in the feature space).

An example of self-supervised learning in action is the work \cite{mahajan2018exploring}. The authors train a neural network encoder on 3.5 billion images from Instagram to learn representations for image classification. More specifically, they take the features that result from training a neural network to classify an image according to its associated hashtag(s). These hashtags serve as additional knowledge which can be automatically extracted. The auxiliary task of hashtag classification is referred to as the \textit{pretext task} and learning a model to solve this task is referred to as \textit{pre-training}. 

 Suppose we are interested in solving a set of downstream classification tasks $\Tcal_1,\dots,\Tcal_m$ over some domain $\Xcal$. Each task $\Tcal_i$, $i \in [m]$, is a tuple containing labelled training and test sets $\Tcal_i = (\Dcal_i^{train}, \Dcal_i^{test})$. One approach is to follow the supervised learning paradigm where for each $i \in [m]$, we train a classifier $h_i$ on $\Dcal_i^{train}$. Though this is natural, it limits the amount of training data available for each task. We are ignoring the data in $\Dcal_j^{train}$ for all $j \neq i$ in spite of the fact that it is drawn from the same domain.

Instead, we make use of a larger unlabelled dataset $\Dcal^{pre}$ (which can also contain the $\Dcal_i^{train})$. During pre-training, we define a pretext task $\Tcal^{pre}$ on $\Dcal^{pre}$. In contrastive learning, the pretext task is to distinguish similar pairs of inputs from dissimilar ones. The notion of similarity between inputs is user-defined and serves as the additional knowledge. The task is solved by learning a neural network feature map $\phibf: \Xcal \to \R^d$ for the inputs and computing a similarity score (e.g., dot product) between features. For example, the contrastive pretext task in SimCLR (see Section \ref{section:contrastive_augmentation}) is to determine which transformed images correspond to the same base image. Two random croppings of the same image of a cat should have a high dot product in the feature space, while an image of a cat and an image of a dog should have a low dot product. 

Then, on the downstream tasks, we train classifiers $h_1(\cdot) = g_1(\phibf(\cdot)),\dots,h_m(\cdot) = g_m(\phibf(\cdot))$ on the respective datasets $\Dcal_1^{train},\dots,\Dcal_m^{train}$. If $\phibf$ is held fixed and only $g$ is learned, this process is called \textit{probing}. Moreover, if $g$ is a linear classifier (i.e., a one-layer neural network), we call this \textit{linear probing}. Alternatively, if $\phibf$ is not held fixed, then we call this \textit{fine-tuning}.

\section{Classification}\label{section:classification}

Following our setup of a supervised machine learning problem in the previous section, we introduce the \textit{logistic regression} approach to classification, which will feature prominently in our discussion of contrastive learning in Chapter \ref{chapter:contrastive}.

\textbf{Binary Classification.} Consider a classification problem with $C = 2$ and a labelled dataset $\Dcal = \{(x_i,y_i)\}_{i=1}^N$ drawn from a distribution $p$ over $\Xcal \times \Ycal$. Without loss of generality, assume $\Ycal = \{0,1\}$. We would like to model the conditional probability $p(y=1|x)$ for all $x \in \Xcal$ (or, at least, all $x$ in the support of the marginal $p_x$).

Let $N^{(0)}, N^{(1)}$ denote the number of samples in $\Dcal$ with label 0 and 1, respectively. For each $x \in \Xcal$, denote by $N_x$ the number of times $x$ appears in $\Dcal$, and by $N_x^{(0)}, N_x^{(1)}$ the number of times $x$ appears with label 0 and 1 respectively. Let $k = \frac{N^{(0)}}{N^{(1)}}$. Further define $p_1(x) := p(x|y=1)$ and $p_0(x) := p(x|y=0)$ for every $x \in \Xcal$.

We learn a parametric function $s_{\theta}: \Xcal \to \R$ that assigns an unnormalized likelihood of belonging to class 1 for each input. Here, the parameter vector $\theta$ resides in some parameter space $\Theta$. Given $x \in \Xcal$, a larger value of $s_{\theta}(x)$ indicates that, according to our model, $x$ is more likely to belong to class 1. Examples of such functions include the affine map $x \mapsto w^{\top}x + b$ or a neural network. In both of these examples, the parameters which comprise $\theta$ are the weights and biases. 

Ultimately, we are interested in the probability that an input $x \in \Xcal$ was sampled from $p_1$. Thus, to convert $s_{\theta}(x)$ to a probability, we apply the \textit{sigmoid function}.

\begin{definition}[Sigmoid Function]
    The sigmoid function is the map $\sigma: \R \to \R$ defined by
    \begin{equation}
        \sigma(z) = \frac{1}{1+e^{-z}}.
    \end{equation}
\end{definition}

\begin{prop}[Properties of the Sigmoid Function]
    The following are useful properties of the sigmoid function $\sigma$.
    \begin{itemize}[topsep=0pt]
        \item[(1)] The range of $\sigma$ is the open interval $(0,1)$.
        \item[(2)] $\sigma(0) = \frac{1}{2}$.
        \item[(3)] The derivative of the sigmoid is $\sigma'(z) = \sigma(z)\big(1-\sigma(z)\big)$. In particular, $\sigma$ is a strictly increasing function over its domain.
    \end{itemize}
\end{prop}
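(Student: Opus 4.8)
The plan is to verify the three claims in order, each by a short direct computation using only elementary properties of the exponential function.

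For (1), I would first note that $e^{-z} > 0$ for every $z \in \R$, so the denominator $1 + e^{-z}$ is strictly greater than $1$, which immediately gives $0 < \sigma(z) < 1$; hence the range of $\sigma$ is contained in $(0,1)$. To show this inclusion is an equality, I would exhibit a preimage for each $y \in (0,1)$: solving $y = 1/(1+e^{-z})$ for $z$ yields $e^{-z} = (1-y)/y$, i.e.\ $z = \log\big(y/(1-y)\big)$, which is well defined precisely because $y/(1-y) > 0$ when $y \in (0,1)$. Thus $\sigma$ maps $\R$ onto $(0,1)$. (Alternatively, one could argue from continuity of $\sigma$ together with the limits $\sigma(z) \to 0$ as $z \to -\infty$ and $\sigma(z) \to 1$ as $z \to +\infty$, and invoke the intermediate value theorem.) For (2), substituting $z = 0$ gives $e^{-0} = 1$, so $\sigma(0) = 1/(1+1) = 1/2$.

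For (3), I would write $\sigma(z) = (1+e^{-z})^{-1}$ and differentiate by the chain rule: $\sigma'(z) = -(1+e^{-z})^{-2}\cdot(-e^{-z}) = e^{-z}/(1+e^{-z})^2$. Then I would factor this as $\big[1/(1+e^{-z})\big]\cdot\big[e^{-z}/(1+e^{-z})\big]$ and observe that the second factor equals $(1+e^{-z}-1)/(1+e^{-z}) = 1 - \sigma(z)$, giving $\sigma'(z) = \sigma(z)\big(1-\sigma(z)\big)$. Finally, since part (1) establishes $0 < \sigma(z) < 1$, both factors on the right are strictly positive, so $\sigma'(z) > 0$ for all $z$, and therefore $\sigma$ is strictly increasing on $\R$.

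The argument is entirely routine; the only point requiring a little care is the surjectivity half of (1), where one must either invert $\sigma$ explicitly (as above) or appeal to a limiting argument together with the intermediate value theorem, rather than reading "range $= (0,1)$" off the formula directly.
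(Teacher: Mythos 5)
Your proof is correct in all three parts; the paper itself states this proposition without proof, treating it as an elementary calculus fact, so there is no authorial argument to compare against. Your care over the surjectivity half of (1) — exhibiting the explicit inverse $z = \log\big(y/(1-y)\big)$ rather than reading the range off the formula — is the only nontrivial point, and you handle it properly.
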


A plot of the sigmoid function is provided in Figure \ref{fig:sigmoid}. From the properties above, we can interpret the output of our model in a straightforward fashion. Define 
\begin{equation}
    p_{\theta}(y=1|x) := \sigma\big(s_{\theta}(x)\big).
\end{equation}
$p_{\theta}(y=1|x)$ is our model's estimate of $p(y=1|x)$. Hence, our hypothesis class is
\begin{equation}
    \Hcal = \{\sigma(s_{\theta}(\cdot)): \theta \in \Theta\}.
\end{equation}


\begin{figure}
    \centering
    \includegraphics[width=0.5\linewidth]{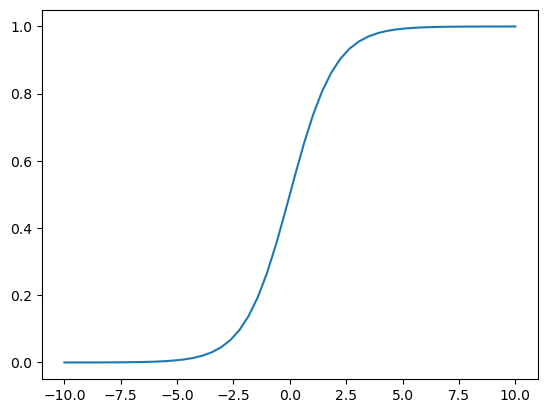}
    \caption{Plot of the sigmoid function.}
    \label{fig:sigmoid}
\end{figure}

A natural objective to learn $\theta$ is to maximize the likelihood $\Lcal$ of the data:
\begin{equation}\label{eq:binary_xent}
    \Lcal(\theta) = \prod_{i=1}^N \sigma\big(s_{\theta}(x_i)\big)^{\1(y_i=1)} \big(1-\sigma\big(s_{\theta}(x_i)\big)\big)^{\1(y_i=0)}.
\end{equation}
For computational reasons, we consider the equivalent objective of minimizing the negative log-likelihood, which we treat as the empirical loss:
\begin{equation}\label{eq:nll}
    -\log \Lcal(\theta) = \sum_{i=1}^N -\1(y_i=1) \log \sigma\big(s_{\theta}(x_i)\big) - \1(y_i=0) \log \big(1-\sigma\big(s_{\theta}(x_i)\big)\big).
\end{equation}
This is the empirical loss associated with the binary cross-entropy loss (\ref{eq:xent}).

We are interested in finding an optimal value of $\sigma(s_{\theta}(x))$ for each input $x \in \Xcal$. Since cross-entropy is convex in $\sigma(s_{\theta}(x))$, it suffices to find a root of its derivative with respect to this argument. Re-writing
\begin{equation}
    -\log \Lcal(\theta) = -\sum_{x \in \Xcal} N_x^{(1)} \log \sigma\big(s_{\theta}(x)\big) - N_x^{(0)} \log\big(1-\sigma\big(s_{\theta}(x)\big)\big),
\end{equation}
we may minimize term-by-term. For fixed $x \in \Xcal$, we have the derivative
\begin{equation}
    -\frac{N_x^{(1)}}{\sigma\big(s_{\theta}(x)\big)} + \frac{N_x^{(0)}}{1-\sigma\big(s_{\theta}(x)\big)}.
\end{equation}
Setting this to zero, we obtain
\begin{equation}
    \sigma\big(s_{\theta}(x)\big) = \frac{N_x^{(1)}}{N_x^{(0)}+N_x^{(1)}} = \frac{N_x^{(1)}}{N_x}.
\end{equation}

\begin{definition}[Empirical Probability]
    In the current setting, define the \textit{empirical (conditional) probabilities}
    \begin{equation}
        \phat(y=1|x) = \frac{N_x^{(1)}}{N_x} \hs\hs \phat(y=0|x) = \frac{N_x^{(0)}}{N_x}.
    \end{equation}
    for all $x \in \Xcal$.
\end{definition}

Now, we introduce the notion of a proper scoring function, which allows us to generalize the results we will develop to a broader class of loss functions.

\begin{definition}[Proper Scoring Function]
    An empirical loss $L_{\Dcal}: \Hcal \to \R$ is said to be a proper scoring function if its minimizer $h^*$ is such that
    \begin{equation}
        h^*(x) = \phat(y=1|x)
    \end{equation}
    for all $x \in \Xcal$.
\end{definition}
Clearly, the empirical loss obtained from the cross-entropy loss is a proper scoring function.

\begin{theorem}\label{thm:shifted_pmi}
    Let $L_{\Dcal}$ be a proper scoring function for our problem. Let $\theta^*$ be a minimizer of $L_{\Dcal}$. Then, for every $x \in \Xcal$,
    \begin{equation}\label{eq:shifted_pmi}
        s_{\theta^*}(x) = \log\bigg(\frac{\phat(y=1|x)}{\phat(y=0|x)}\bigg) = \log \bigg(\frac{\phat_1(x)}{\phat_0(x)}\bigg) - \log k,
    \end{equation}
    where $\log$ denotes the natural logarithm.
\end{theorem}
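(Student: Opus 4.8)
The plan is to chain together the proper scoring property, the explicit form of the hypothesis class, and an inversion of the sigmoid, after which only elementary bookkeeping with the sample counts remains. Since $L_{\Dcal}$ is assumed to be a proper scoring function, its minimizer $h^*$ satisfies $h^*(x) = \phat(y=1|x)$ for every $x \in \Xcal$. Every hypothesis in our class has the form $h = \sigma(s_\theta(\cdot))$, so the minimizing parameter $\theta^*$ is characterized by $\sigma(s_{\theta^*}(x)) = \phat(y=1|x)$ for all $x$.

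First I would invert the sigmoid. For any $z$, writing $u = \sigma(z) \in (0,1)$ and solving $u = 1/(1+e^{-z})$ gives $z = \log\big(u/(1-u)\big)$. Applying this with $z = s_{\theta^*}(x)$ and using $1 - \phat(y=1|x) = \phat(y=0|x)$ --- which holds because $N_x^{(0)} + N_x^{(1)} = N_x$ --- yields
\[
    s_{\theta^*}(x) = \log\bigg(\frac{\phat(y=1|x)}{\phat(y=0|x)}\bigg),
\]
the first equality in (\ref{eq:shifted_pmi}). Here one implicitly restricts to inputs $x$ with $N_x \geq 1$ and $0 < N_x^{(1)} < N_x$, so that the logit is finite; the degenerate cases are handled by the convention $\log 0 = -\infty$.

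For the second equality I would express both sides through the counts. On the left,
\[
    \frac{\phat(y=1|x)}{\phat(y=0|x)} = \frac{N_x^{(1)}/N_x}{N_x^{(0)}/N_x} = \frac{N_x^{(1)}}{N_x^{(0)}}.
\]
On the right, the empirical class-conditional distributions are $\phat_1(x) = N_x^{(1)}/N^{(1)}$ and $\phat_0(x) = N_x^{(0)}/N^{(0)}$, so that
\[
    \frac{\phat_1(x)}{\phat_0(x)} = \frac{N_x^{(1)}}{N_x^{(0)}} \cdot \frac{N^{(0)}}{N^{(1)}} = k \cdot \frac{\phat(y=1|x)}{\phat(y=0|x)}.
\]
Taking logarithms of $\phat(y=1|x)/\phat(y=0|x) = k^{-1}\,\phat_1(x)/\phat_0(x)$ then gives the claimed identity.

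There is no genuinely hard step: the substance is recognizing that the proper scoring property pins down $\sigma(s_{\theta^*}(x))$ exactly, after which everything is a one-line sigmoid inversion plus count arithmetic. The only place that needs care is the treatment of inputs absent from $\Dcal$ or occurring with just one label, where the logit is infinite or undefined; the cleanest resolution is to phrase the statement over the relevant support and adopt the usual extended-real conventions for $\log$.
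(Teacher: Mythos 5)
Your proposal is correct and follows essentially the same route as the paper: use the proper scoring property to pin down $\sigma(s_{\theta^*}(x)) = \phat(y=1|x)$, invert the sigmoid to get the logit, and then convert between posterior odds and class-conditional likelihood ratios (your count arithmetic is just the empirical Bayes rule the paper invokes, written out explicitly). Your added remark about inputs with $N_x = 0$ or a single observed label is a fair point of care that the paper's proof silently skips.
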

\begin{proof}
    Let $x \in \Xcal$. Since $\theta^*$ minimizes $L_{\Dcal}$, we have
    \begin{align*}
        &\sigma\big(s_{\theta^*}(x)\big) = \phat(y=1|x) \\
        \iff &\frac{1}{1+e^{-s_{\theta^*}(x)}} = \phat(y=1|x) \\
        \iff & 1+e^{-s_{\theta^*}(x)} = \frac{1}{\phat(y=1|x)} \\
        \iff &e^{-s_{\theta^*}(x)} = \frac{1}{\phat(y=1|x)} - 1 = \frac{1-\phat(y=1|x)}{\phat(y=1|x)} = \frac{\phat(y=0|x)}{\phat(y=1|x)} \\
        \iff &s_{\theta^*}(x) = \log\bigg(\frac{\phat(y=1|x)}{\phat(y=0|x)}\bigg).
    \end{align*}
    By Bayes Rule, we have
    \begin{equation*}
        \phat(y=1|x) = \frac{\phat(x|y=1)\phat(y=1)}{\phat(x)} = \frac{\phat_1(x)\phat(y=1)}{\phat(x)}
    \end{equation*}
    and
    \begin{equation*}
        \phat(y=0|x) = \frac{\phat(x|y=0)\phat(y=0)}{\phat(x)} = \frac{\phat_0(x)\phat(y=0)}{\phat(x)}.
    \end{equation*}
    Therefore,
    \begin{equation*}
        \log\bigg(\frac{\phat(y=1|x)}{\phat(y=0|x)}\bigg) = \log\bigg(\frac{\phat_1(x)}{\phat_0(x)}\bigg) - \log\bigg(\frac{N^{(0)}}{N^{(1)}}\bigg).
    \end{equation*}
\end{proof}

Using the Law of Large Numbers, we can also provide an asymptotic statement (i.e., as the dataset size tends to infinity).
\begin{corollary}\label{cor:shifted_pmi_limit}
    Let $\{(x_n,y_n)\}_{n=1}^{\infty}$ be a sequence of i.i.d.\@ samples from $p$. For every $N \in \N$, let $\Dcal_N = \{(x_n,y_n)\}_{n=1}^N$ and let $\theta_N^*$ be the minimizer of $L_{\Dcal_N}$. Further let $\kappa = \frac{p(y=0)}{p(y=1)}$. Then, for all $x \in \Xcal$,
    \begin{equation}\label{eq:shifted_pmi_limit}
        \lim_{N \to \infty} s_{\theta^*_N}(x) = \log\bigg(\frac{p_1(x)}{p_0(x)}\bigg) - \log \kappa
    \end{equation}
   almost surely.
\end{corollary}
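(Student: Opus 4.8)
The plan is to combine Theorem~\ref{thm:shifted_pmi} with the strong law of large numbers. By Theorem~\ref{thm:shifted_pmi} applied to $\Dcal_N$, for every $N$ and every $x \in \Xcal$ we have
\begin{equation*}
    s_{\theta_N^*}(x) = \log\bigg(\frac{\phat_N(y=1|x)}{\phat_N(y=0|x)}\bigg),
\end{equation*}
where $\phat_N$ denotes the empirical conditional probabilities computed from $\Dcal_N$; note that the right-hand side does not depend on which minimizer $\theta_N^*$ is chosen. Hence it suffices to show that $\phat_N(y=1|x) \to p(y=1|x)$ and $\phat_N(y=0|x) \to p(y=0|x)$ almost surely, then pass to the limit inside the continuous map $z \mapsto \log z$, and finally reapply Bayes' rule exactly as in the proof of Theorem~\ref{thm:shifted_pmi}.

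For the convergence step, fix $x \in \Xcal$ with $p_x(x) > 0$ and write
\begin{equation*}
    \phat_N(y=1|x) = \frac{\frac{1}{N}\sum_{n=1}^N \1(x_n = x,\, y_n = 1)}{\frac{1}{N}\sum_{n=1}^N \1(x_n = x)}.
\end{equation*}
Numerator and denominator are empirical averages of i.i.d.\@ bounded random variables, so by the strong law of large numbers they converge almost surely to $\E[\1(x_1 = x,\, y_1 = 1)]$ and to $\E[\1(x_1 = x)] = p_x(x) > 0$ respectively; the ratio therefore converges almost surely to $p(y=1|x)$, and likewise $\phat_N(y=0|x) \to p(y=0|x)$. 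Equivalently, one can apply the law of large numbers directly to $k_N = N^{(0)}/N^{(1)} \to \kappa$ and to $\phat_{1,N}(x) \to p_1(x)$, $\phat_{0,N}(x) \to p_0(x)$, and use the second form of~(\ref{eq:shifted_pmi}).

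To finish, restrict to the probability-one event on which both convergences hold. Assuming the population conditionals at $x$ are nondegenerate, i.e.\@ $p(y=1|x) \in (0,1)$, continuity of the logarithm and of division away from zero yields
\begin{equation*}
    \lim_{N\to\infty} s_{\theta_N^*}(x) = \log\bigg(\frac{p(y=1|x)}{p(y=0|x)}\bigg),
\end{equation*}
and Bayes' rule rewrites the right-hand side as $\log\!\big(p_1(x)/p_0(x)\big) - \log\big(p(y=0)/p(y=1)\big) = \log\!\big(p_1(x)/p_0(x)\big) - \log\kappa$, which is~(\ref{eq:shifted_pmi_limit}).

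The hard part is not any deep probabilistic input — the strong law does all the work — but the measure-theoretic housekeeping implicit in the setup: one needs $x$ to carry positive mass under $p_x$ so the denominators are eventually nonzero and the ratios are well defined, and one needs $p(y=1|x) \in (0,1)$ so the limiting log-odds is finite. I would also stress that the claim is pointwise in $x$: ``for all $x \in \Xcal$, (\ref{eq:shifted_pmi_limit}) holds almost surely'' asks for one almost-sure event per $x$, so no uniformity over the (possibly uncountable) space $\Xcal$ is needed. In a continuous setting one simply reads $\phat_j, p_j$ as density estimates and densities; the rest of the argument is unchanged.
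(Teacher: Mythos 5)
Your proposal is correct and follows essentially the same route as the paper, which simply cites Theorem~\ref{thm:shifted_pmi} together with the Strong Law of Large Numbers; you have merely filled in the details of applying the SLLN to the numerator and denominator of the empirical conditionals and passing the limit through the logarithm. Your explicit caveats --- that $x$ must have positive mass under $p_x$ and that $p(y=1|x)\in(0,1)$ for the limiting log-odds to be finite --- are implicit assumptions the paper leaves unstated, so flagging them is a worthwhile addition rather than a deviation.
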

\begin{proof}
    The result follows immediately from Theorem \ref{thm:shifted_pmi} and the Strong Law of Large Numbers.
\end{proof}

Let us take a closer look at Equation (\ref{eq:shifted_pmi_limit}). If $\kappa = 1$, then we can disregard the $\log \kappa$ term. In this case, for an input $x \in \Xcal$, in the infinite sample limit, our model will conclude that $x$ was more likely to be sampled from $p_1$ if and only if $p_1(x) > p_0(x)$. This is a natural decision criterion. However, when $k \neq 1$ (which is commonly the case in contrastive learning), this will not be the case. For instance, when $k > 1$, it is possible that $p_1(x) > p_0(x)$ but our model deems it more likely that $x$ was drawn from $p_0$. This occurs because the label $y=0$ has a higher prior probability.

It is possible to adjust our model to incorporate the prior into our activation function rather than into $s$. It is sufficient to modify the sigmoid activation function \cite{gutmann2012noise}.
\begin{definition}[$k$-sigmoid]
    Define $\sigma_k: \R \to \R$ such that
    \begin{equation}\label{eq:k_sigmoid}
        \sigma_k(z) = \frac{1}{1+ke^{-z}}.
    \end{equation}
\end{definition}

We plot this function for various settings of $k$ in Figure \ref{fig:k_sigmoid}. Notice that if there are more negative samples than positive samples ($k > 1$), then $\sigma_k$ attains the value 1/2 at a point strictly larger than zero, and vice versa.

\begin{figure}
    \centering
    \includegraphics[width=0.5\linewidth]{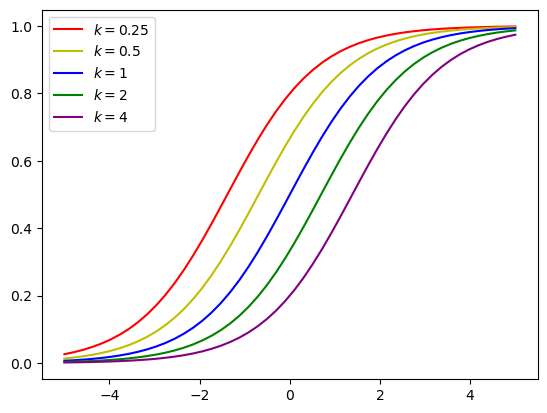}
    \caption{The modified sigmoid activation function (\ref{eq:k_sigmoid}) for different values of $k$.}
    \label{fig:k_sigmoid}
\end{figure}

\begin{corollary}\label{cor:centred_pmi}
    Let $L_{\Dcal}$ be a proper scoring function for our problem. Suppose that we modify our model to use the activation function $\sigma_k$ rather than $\sigma$. Let $\theta^*$ be a minimizer of $L_{\Dcal}$. Then, for every $x \in \Xcal$,
    \begin{equation}
        s_{\theta^*}(x) = \log\bigg(\frac{\phat_1(x)}{\phat_0(x)}\bigg).
    \end{equation}
\end{corollary}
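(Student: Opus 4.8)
The plan is to mimic the proof of Theorem \ref{thm:shifted_pmi} almost verbatim, replacing $\sigma$ by $\sigma_k$, and to observe that the extra factor of $k$ in the activation (\ref{eq:k_sigmoid}) precisely cancels the $-\log k$ correction that Bayes' rule introduces in (\ref{eq:shifted_pmi}).

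First I would record that, once the activation is changed to $\sigma_k$, the model's estimate of the conditional probability becomes $p_{\theta}(y=1|x) = \sigma_k\big(s_{\theta}(x)\big)$, so the hypothesis class is now $\Hcal = \{\sigma_k(s_{\theta}(\cdot)) : \theta \in \Theta\}$. Since $L_{\Dcal}$ is a proper scoring function, its minimizer (viewed as an element of this class) equals the empirical conditional probability; that is, $\sigma_k\big(s_{\theta^*}(x)\big) = \phat(y=1|x)$ for every $x \in \Xcal$. As in Theorem \ref{thm:shifted_pmi}, this implicitly assumes the class is expressive enough to realize the function $x \mapsto \phat(y=1|x)$; this is the only non-formal point, and it is inherited from the setup of the section rather than being something new to prove.

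Next I would invert $\sigma_k$. Solving $\frac{1}{1+k e^{-s_{\theta^*}(x)}} = \phat(y=1|x)$ yields, by the same chain of equivalences as in the proof of Theorem \ref{thm:shifted_pmi}, the identity $k\,e^{-s_{\theta^*}(x)} = \frac{1-\phat(y=1|x)}{\phat(y=1|x)} = \frac{\phat(y=0|x)}{\phat(y=1|x)}$, hence
\[
    s_{\theta^*}(x) = \log\left(\frac{\phat(y=1|x)}{\phat(y=0|x)}\right) + \log k.
\]
Finally, I would reuse the Bayes' rule computation from the proof of Theorem \ref{thm:shifted_pmi} word for word --- namely $\log\big(\phat(y=1|x)/\phat(y=0|x)\big) = \log\big(\phat_1(x)/\phat_0(x)\big) - \log k$, since $k = N^{(0)}/N^{(1)}$ --- and substitute it above, so that the two $\log k$ terms cancel and leave $s_{\theta^*}(x) = \log\big(\phat_1(x)/\phat_0(x)\big)$, as claimed.

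There is no genuine obstacle here: the argument is a one-line modification of Theorem \ref{thm:shifted_pmi}. The cleanest packaging, which I would likely adopt in the final write-up, is to note the identity $\sigma_k(z) = \sigma(z - \log k)$, so that using $\sigma_k$ with score $s_{\theta}$ is the same as using the ordinary sigmoid $\sigma$ with the shifted score $s_{\theta} - \log k$; applying Theorem \ref{thm:shifted_pmi} to this shifted score and adding back $\log k$ gives the result immediately. The point worth stressing in the exposition is the conceptual one --- that replacing $\sigma$ by $\sigma_k$ is exactly the right amount of prior correction to remove the class-imbalance term $\log k$ and recover the log density ratio $\log(\phat_1(x)/\phat_0(x))$ on the nose.
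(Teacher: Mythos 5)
Your proposal is correct and matches the paper's intended argument: the paper's proof of this corollary simply states that the steps are the same as in Theorem \ref{thm:shifted_pmi}, and you have carried out exactly that substitution, correctly showing that the $+\log k$ from inverting $\sigma_k$ cancels the $-\log k$ from Bayes' rule. Your closing observation that $\sigma_k(z) = \sigma(z-\log k)$ is a clean way to package the same computation, not a different method.
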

\begin{proof}
    The steps of the proof are the same as in Theorem \ref{thm:shifted_pmi}.
\end{proof}

\begin{corollary}\label{cor:centred_pmi_limit}
    Suppose we are in the same setting as Corollary \ref{cor:shifted_pmi_limit}, but the activation function is replaced by $\sigma_k$. Then, for all $x \in \Xcal$,
    \begin{equation}\label{eq:centred_pmi_limit}
        \lim_{N \to \infty} s_{\theta^*_N}(x) = \log\bigg(\frac{p_1(x)}{p_0(x)}\bigg)
    \end{equation}
    almost surely.
\end{corollary}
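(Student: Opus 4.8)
The plan is to reduce the statement to the finite-sample identity of Corollary~\ref{cor:centred_pmi} and then pass to the limit via the Strong Law of Large Numbers, mirroring the proof of Corollary~\ref{cor:shifted_pmi_limit}. First I would fix $x \in \Xcal$ and, for each $N \in \N$, apply Corollary~\ref{cor:centred_pmi} to the dataset $\Dcal_N$ equipped with its own empirical ratio $k_N = N^{(0)}_N/N^{(1)}_N$ and the corresponding $k_N$-sigmoid activation. Since $L_{\Dcal_N}$ is a proper scoring function for every $N$, this gives
\[
    s_{\theta_N^*}(x) = \log\bigg(\frac{\phat_1^{(N)}(x)}{\phat_0^{(N)}(x)}\bigg),
\]
where $\phat_1^{(N)}(x)$ and $\phat_0^{(N)}(x)$ denote the empirical conditional densities $N_x^{(1)}/N^{(1)}$ and $N_x^{(0)}/N^{(0)}$ computed on $\Dcal_N$.

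Second, I would establish almost sure convergence of the right-hand side. Writing $\phat_1^{(N)}(x) = (N_x^{(1)}/N)/(N^{(1)}/N)$ and applying the Strong Law of Large Numbers separately to the i.i.d.\ indicator sequences $\1(x_n = x,\, y_n = 1)$ and $\1(y_n = 1)$ yields $N_x^{(1)}/N \to p(x,y=1)$ and $N^{(1)}/N \to p(y=1)$ almost surely, hence $\phat_1^{(N)}(x) \to p(x,y=1)/p(y=1) = p_1(x)$ almost surely, and symmetrically $\phat_0^{(N)}(x) \to p_0(x)$. Third, since $\log$ is continuous on $(0,\infty)$, the algebra of limits together with continuity of $\log$ gives $s_{\theta_N^*}(x) \to \log(p_1(x)/p_0(x))$ almost surely, which is the claim.

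The argument is routine, so the only real work is the bookkeeping of degenerate cases. One needs $p(y=1) > 0$ and $p(y=0) > 0$ so that the denominators $N^{(1)}/N$, $N^{(0)}/N$ are eventually positive and their limits nonzero, and one needs $p_0(x) > 0$ (equivalently $p_1(x) > 0$) so that the limiting ratio and its logarithm are defined --- this is the implicit restriction of $x$ to the relevant support already built into the statement, and, as elsewhere in this section, it is tacit that $\Xcal$ is such that the counts $N_x$ are meaningful. It is worth noting that $k_N$ depends on $N$, so the activation at stage $N$ is $\sigma_{k_N}$ rather than a fixed $\sigma_k$; this causes no trouble because Corollary~\ref{cor:centred_pmi} is a statement purely about the minimizer of a given empirical loss for a given dataset. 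Finally, the exceptional null set on which convergence fails depends on the fixed $x$; since the conclusion is pointwise in $x$ this is harmless, though it does not by itself upgrade to a statement holding simultaneously for all $x \in \Xcal$.
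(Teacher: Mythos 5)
Your proof is correct and takes essentially the route the paper intends: the paper omits an explicit proof for this corollary, but the evident argument (mirroring Corollary~\ref{cor:shifted_pmi_limit}) is exactly yours --- apply the finite-sample identity of Corollary~\ref{cor:centred_pmi} for each $N$ and pass to the limit via the Strong Law of Large Numbers. Your additional bookkeeping (the $N$-dependence of $k_N$, positivity of the class priors and of $p_0(x)$, and the $x$-dependence of the null set) fills in details the paper leaves tacit.
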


Thus, in learning how to distinguish between $p_0$ and $p_1$, we are also learning the log ratio
\begin{equation}
    \log\bigg(\frac{p_1(x)}{p_0(x)}\bigg)
\end{equation}
in the limit of infinite samples. This quantity is useful beyond the abstract task that we have presented in this section. For instance, if $p_0$ is a known distribution and we have learned the log ratio, then we have a means of estimating $p_1$. In other words, it is possible to model an unknown distribution by comparing it with a known distribution $p_0$ and solving the classification problem that we have outlined above. This is the idea underlying noise-contrastive estimation \cite{gutmann2010noise,gutmann2012noise}, which we discuss in Section \ref{section:nce}.

\textbf{Multiclass Classification.} In the case $C > 2$, we have a similar setup to the binary case. Once again, we have a dataset $\Dcal = \{(x_i,y_i)\}_{i=1}^N$ drawn i.i.d.\@ from a distribution $p$ over $\Xcal \times \Ycal$. Assume without loss of generality that $\Ycal = [C]$. For every class $c \in \Ycal$, define $N^{(c)}$ to be the number of samples in $\Dcal$ with label $c$. Further, for every $x \in \Xcal$, define $N_x^{(c)}$ and $p_c(x)$ similarly to the binary case. 

We once again learn a parametric function $s_{\theta}: \Xcal \to \R^C$ which represents the "scores" our model assigns to each class. To enforce the constraint $\sum_{c=1}^C p_{\theta}(y=c|x) = 1$, we convert scores to probabilities using the softmax function, an extension of the sigmoid to the multiclass case.

\begin{definition}[Softmax Function]
    Fix $j \in \{1,\dots,C\}$. The \textit{softmax} function is the map $\sm: \R^C \to \R^C$ defined (coordinate-wise) by
    \begin{equation}
        \sm(z)_c = \frac{\exp(z_c)}{\sum_{c'=1}^C \exp(z_{c'})}.
    \end{equation}
\end{definition}

We thus define, for all $x \in \Xcal$ and $c \in \{1,\dots,C\}$,
\begin{equation}
    p_{\theta}(y=c|x) = \sm\big(s_{\theta}(x)\big)_c.
\end{equation}
The data likelihood is then
\begin{equation}
    \Lcal(\theta) = \prod_{i=1}^N \prod_{c=1}^C \big(p_{\theta}(y=c|x_i)\big)^{\1(y_i=c)}.
\end{equation}
We may then write the negative log-likelihood as
\begin{align}
    -\log \Lcal(\theta) &= \sum_{i=1}^N \sum_{c=1}^C -\1(y_i=c) \log p_{\theta}(y=c|x_i) \\
    &= \sum_{x \in \Xcal} \sum_{c=1}^C -N_x^{(c)} \log p_{\theta}(y=c|x).
\end{align}
Minimizing this reduces to solving the following optimization problem over $(p_{\theta}(y=1|x),\dots,p_{\theta}(y=C|x))$ for each $x \in \Xcal$:
\begin{align}\label{eq:constrained_optim}
    \text{Minimize} \hs &f\big(p_{\theta}(y=1|x),\dots,p_{\theta}(y=C|x)\big) := \sum_{c=1}^C -N_x^{(c)} \log p_{\theta}(y=c|x) \\
    \text{subject to} \hs &g\big(p_{\theta}(y=1|x),\dots,p_{\theta}(y=C|x)\big) := \sum_{c=1}^C p(y=c|x) = 1.
\end{align}

\begin{prop}
    The above problem is a convex optimization problem.
\end{prop}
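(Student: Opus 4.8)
The plan is to verify the two defining properties of a convex optimization problem: that the feasible set is convex, and that the objective $f$ is a convex function on it. First I would pin down the domain. Since each term $-N_x^{(c)}\log p_{\theta}(y=c|x)$ is only defined when $p_{\theta}(y=c|x) > 0$, the natural domain is the (relatively) open probability simplex: writing $p_c$ for $p_{\theta}(y=c|x)$ to lighten notation, this is the set of $(p_1,\dots,p_C)$ with $p_c > 0$ for all $c$ and $\sum_{c=1}^C p_c = 1$. This set is the intersection of the affine hyperplane $\{p : \sum_c p_c = 1\}$ with the open positive orthant $\{p : p_c > 0 \text{ for all } c\}$. An affine subspace is convex and the positive orthant is convex, and an intersection of convex sets is convex, so the feasible set is convex.

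Next I would check convexity of the objective. The one-variable map $t \mapsto -\log t$ is convex on $(0,\infty)$, since its second derivative is $1/t^{2} > 0$ there. For each fixed $c$, the coordinate projection $p \mapsto p_c$ is linear, and precomposing a convex function of one variable with a linear (hence affine) map preserves convexity; thus $p \mapsto -\log p_c$ is convex on the domain above. Each coefficient $N_x^{(c)}$ is a nonnegative integer, so
$$
f(p) = \sum_{c=1}^C N_x^{(c)} \cdot \big(-\log p_c\big)
$$
is a nonnegative linear combination of convex functions, and a nonnegative linear combination of convex functions is convex; hence $f$ is convex. (Terms with $N_x^{(c)} = 0$ simply vanish and may be dropped.) Combining the two observations, we are minimizing a convex function over a convex set, which is exactly the definition of a convex optimization problem.

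I do not expect any genuine obstacle here; the only point requiring a little care is the domain specification — making sure the positivity constraints $p_c > 0$ are included so that $\log$ is well-defined and the feasible set is the open simplex rather than its closure. One could alternatively work over the closed simplex with the convention $-0\log 0 = 0$ whenever the matching $N_x^{(c)} = 0$, but restricting to $p_c > 0$ is cleaner, and the minimizer found earlier, $p_c = N_x^{(c)}/N_x$, lies in this set whenever every class is represented at $x$.
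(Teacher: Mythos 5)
Your proposal is correct and follows essentially the same strategy as the paper: verify that the feasible set is convex and that the objective is convex. The only difference is cosmetic --- the paper establishes convexity of $f$ by computing its Hessian (a diagonal matrix with nonnegative entries, hence positive semidefinite), whereas you use the composition and nonnegative-combination rules applied to $t \mapsto -\log t$; your additional care about restricting to the open simplex so that the logarithms are defined is a point the paper glosses over.
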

\begin{proof}
    The lone constraint is affine, and the constraint set is therefore convex. Moreover,
    \begin{equation}
        \nabla f = \bigg(-\frac{N_x^{(1)}}{p_{\theta}(y=1|x)},\dots,-\frac{N_x^{(C)}}{p_{\theta}(y=C|x)}\bigg)
    \end{equation}
    and
    \begin{equation}
        \nabla^2 f = \diag\bigg(\frac{N_x^{(1)}}{\big(p_{\theta}(y=1|x)\big)^2},\dots,\frac{N_x^{(C)}}{\big(p_{\theta}(y=C|x)\big)^2}\bigg),
    \end{equation}
    which is positive semidefinite. 
\end{proof}

We can solve the minimization problem via the method of Lagrange multipliers. By convexity, any solution we find is guaranteed to be a global minimum. We have the system of equations
\begin{equation}
    \forall \, c \in [C] \hs\hs \frac{N_x^{(c)}}{p_{\theta}(y=c|x)} = \lambda.
\end{equation}
By the sum-to-one constraint, we have
\begin{equation}
    \sum_{c=1}^C N_x^{(c)} = \lambda \hs\hs \Longrightarrow  \hs\hs N_x = \lambda.
\end{equation}
Therefore, we obtain the maximum likelihood estimates
\begin{equation}
    \forall \, C \in \{1,\dots,C\} \hs\hs \phat(y=c|x) := p_{\theta^*}(y=c|x) = \frac{N_x^{(c)}}{N_x}.
\end{equation}

As in the case of binary classification, we say that an empirical loss $L_{\Dcal}$ is a proper scoring rule if the MLE is an optimum.

\chapter{Reproducing Kernel Hilbert Spaces}\label{chapter:rkhs}

The theory of reproducing kernel Hilbert spaces (RKHSs) was rigorously established in the 1950s, most notably with the seminal work \cite{aronszajn1950theory}, and in recent decades kernel methods for machine learning have been well-studied \cite{scholkopf2018learning}. We are particularly interested in positive semidefinite kernels, as they specify a notion of pairwise similarity on the input space $\Xcal$ and implicitly define features.

In this section, we summarize the main results concerning kernels that will be relevant to our study, with \cite{wainwright2019high} and \cite{mohri2018foundations} being our main points of reference.

\section{Positive Semidefinite Kernels}\label{section:psd_kernels}

\begin{definition}[Positive Semidefinite Kernel]
    Let $K: \Xcal \times \Xcal \to \R$ be a symmetric function (called a \textbf{kernel}). If for every $n \in \N$ and $\{x_i\}_{i=1}^n \subseteq \Xcal$, the matrix $\Gbf \in \R^{n \times n}$ (called a \textbf{Gram matrix}) with entries $G_{ij} = K(x_i,x_j)$ is positive semidefinite, then $K$ is said to be a \textbf{positive semidefinite (PSD) kernel}.
\end{definition}

\begin{prop}\label{prop:inner_product_kernel}
    Let $\Vbb$ be an inner product space and suppose $\Xcal \subseteq \Vbb$. Then, the kernel $K: \Xcal \times \Xcal \to \R$ defined by $K(x,z) = \ev{x,z}_{\Vbb}$ is PSD.
\end{prop}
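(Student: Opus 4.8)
The plan is to verify the two defining properties of a PSD kernel directly from the axioms of an inner product space. First, symmetry of $K$ is immediate: for any $x,z \in \Xcal$ we have $K(x,z) = \ev{x,z}_{\Vbb} = \ev{z,x}_{\Vbb} = K(z,x)$, since an inner product is symmetric (over $\R$). So $K$ is a legitimate kernel in the sense of the definition.

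Next I would show that every Gram matrix is positive semidefinite. Fix $n \in \N$ and points $x_1,\dots,x_n \in \Xcal$, and let $\Gbf \in \R^{n\times n}$ be the matrix with $G_{ij} = K(x_i,x_j) = \ev{x_i,x_j}_{\Vbb}$. To check $\Gbf \succeq 0$, take an arbitrary vector $c = (c_1,\dots,c_n)^\top \in \R^n$ and compute the quadratic form $c^\top \Gbf c = \sum_{i=1}^n \sum_{j=1}^n c_i c_j \ev{x_i,x_j}_{\Vbb}$. Using bilinearity of the inner product, this equals $\ev{\sum_{i=1}^n c_i x_i,\ \sum_{j=1}^n c_j x_j}_{\Vbb} = \big\| \sum_{i=1}^n c_i x_i \big\|_{\Vbb}^2 \ge 0$, where the final inequality is the positivity axiom of the inner product (and $\|\cdot\|_{\Vbb}$ is the induced norm). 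Since $c$ was arbitrary, $\Gbf$ is PSD, and since $n$ and the $x_i$ were arbitrary, $K$ is a PSD kernel.

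There is essentially no obstacle here: the only point requiring a word of care is the bilinearity step, which lets one pull the scalars $c_i, c_j$ out of the double sum and collapse it into a single inner product of two sums — this is just a finite manipulation and needs no continuity or completeness assumption, so the statement holds for a bare inner product space (no Hilbert space structure required). I would keep the write-up to the three lines above.
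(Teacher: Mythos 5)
Your proposal is correct and follows essentially the same argument as the paper: expand the quadratic form $c^\top \Gbf c$ by bilinearity into $\big\langle \sum_i c_i x_i, \sum_j c_j x_j\big\rangle_{\Vbb} \geq 0$. Your explicit check of symmetry is a small addition the paper omits, but the substance is identical.
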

\begin{proof}
    Let $n \in \N$, $\{\xbf_i\}_{i=1}^n \subseteq \Xcal$, and $\Gbf \in \R^{n \times n}$ be the associated Gram matrix, i.e., $G_{ij} = \ev{\xbf_i,\xbf_j}_{\Vbb}$. Let $\abf \in \R^n$. Then, by bilinearity of $\ev{\cdot,\cdot}_{\Vbb}$,
    \begin{equation}
        \abf^T \Gbf \abf = \sum_{i=1}^n \sum_{j=1}^n a_i a_j \ev{\xbf_i,\xbf_j}_{\Vbb} = \bigg\langle \sum_{i=1}^n a_i \xbf_i, \sum_{j=1}^n a_j \xbf_j\bigg\rangle_{\Vbb} \geq 0.
    \end{equation}
\end{proof}

We outline examples of positive semidefinite (PSD) kernels provided in \cite{wainwright2019high} and \cite{mohri2018foundations}.

\begin{example}[Linear Kernel]
    Let $\Xcal = \R^d$ and for all $\xbf,\zbf \in \Xcal$, define $K(\xbf,\zbf) := \xbf \cdot \zbf$, i.e., the Euclidean dot product. By the previous proposition, $K$ is a PSD kernel.
\end{example}

\begin{example}[Polynomial Kernel]
    Let $\Xcal = \R^d$ and for all $\xbf,\zbf \in \Xcal$, define $K(\xbf,\zbf) := (\xbf \cdot \zbf)^m$ for some $m \in \N$. The previous example covers the case $m = 1$. Suppose now that $m = 2$. Then,
    \begin{equation}
        \forall \, \xbf,\zbf \in \Xcal \hs\hs K(\xbf,\zbf) = (\xbf \cdot \zbf)^2 = \sum_{j=1}^d x_j^2 z_j^2 + 2 \sum_{i < j} x_i x_j z_i z_j.
    \end{equation}
    This kernel in fact corresponds to a Euclidean dot product in the higher-dimensional space $\R^D$ with $D = d + \binom{d}{2}$. To see this, define the mapping $\Phibf: \R^d \to \R^D$ such that, for $\xbf \in \R^d$ and for each $j \in [d]$, $\Phibf(\xbf)$ contains the entries $x_j^2$ and $\sqrt{2} x_i x_j$ for all $i < j$.

    First, we notice that $K(\xbf, \zbf) = \Phi(\xbf) \cdot \Phi(\zbf)$ for all $\xbf,\zbf \in \Xcal$, which implies that $K$ is PSD. Second, we observe that the polynomial kernel implicitly defines a higher-dimensional representation of the elements in $\Xcal$. This finding, which is not exclusive to the polynomial kernel, is the basis of kernel methods in machine learning, such as the support vector machine \cite{boser1992training} and kernel principal component analysis \cite{scholkopf1998nonlinear}. 
\end{example}

\begin{example}[Gaussian Kernel] 
    Let $\Xcal = \R^d$ and for all $\xbf,\zbf \in \Xcal$, define $K(\xbf,\zbf) := \exp(-\frac{1}{2\sigma^2}||\xbf-\zbf||_2^2)$ for some $\sigma^2 > 0$. Notice
    \begin{align}
        \exp\bigg(-\frac{1}{2\sigma^2} ||\xbf-\zbf||_2^2\bigg) &= \exp\bigg(-\frac{1}{2\sigma^2} ||\xbf||^2\bigg)\exp\bigg(-\frac{1}{2\sigma^2} ||\zbf||^2\bigg)\exp\bigg(\frac{\xbf \cdot \zbf}{\sigma^2}\bigg) \\
        &= \underbrace{\exp\bigg(-\frac{1}{2\sigma^2}||\xbf||^2\bigg)}_{=: c(\xbf)}\underbrace{\exp\bigg(-\frac{1}{2\sigma^2}||\zbf||^2\bigg)}_{=:c(\zbf)}\sum_{k=0}^{\infty} \frac{(\xbf \cdot \zbf)^k}{\sigma^{2k}k!} \\
        &= \sum_{k=0}^{\infty} \frac{(c(\xbf)^{\frac{1}{k}}\xbf \cdot c(\zbf)^{\frac{1}{k}}\zbf)^k}{\sigma^{2k}k!}.
    \end{align}
    Thus, we may view the Gaussian kernel as an infinite sum of scaled polynomial kernels. Recalling our finding in the previous example, we see the Gaussian kernel as implicitly computing an inner product in an infinite-dimensional space. 
    
    The proof that the Gaussian kernel is a PSD kernel relies on properties we will not cover here (e.g., the pointwise limit of a sequence of PSD kernels is a PSD kernel). For details, see Exercise 12.19 in \cite{wainwright2019high}.
\end{example}

The following kernel will be relevant to our study of word embeddings in Section \ref{section:word_embeddings}. We proceed similarly to Proposition 2.19 in \cite{scholkopf2018learning} to show that it is PSD.

\begin{example}[Exponentiated Pointwise Mutual Information]\label{ex:exp_pmi}
    Let $(\Omega, \Fcal, \Prob)$ be a probability space. Define the kernel
    \begin{equation}
        K(A,B) := \frac{\Prob(A \cap B)}{\Prob(A) \Prob(B)} \hs\hs \forall \, A,B \in \Fcal.
    \end{equation}
    For every event $A \in \Fcal$, define $\phi(A) = \frac{\1_A}{\Prob(A)} \in L^2(\Omega;\Prob)$. Then, for any two events $A,B \in \Fcal$,
    \begin{equation}
        \bigg \langle \frac{\1_A}{\Prob(A)}, \frac{\1_B}{\Prob(B)} \bigg \rangle_{L^2(\Omega;\Prob)} = \frac{1}{\Prob(A) \Prob(B)}\int_{\Omega} \1_A \1_B \, d\Prob = \frac{\Prob(A \cap B)}{\Prob(A) \Prob(B)}.
    \end{equation}
    We appeal to Proposition \ref{prop:inner_product_kernel} to conclude that $K$ is PSD.
\end{example}

In Chapter \ref{chapter:contrastive}, we will frequently encounter exponentials and logarithms of kernels. In preparation for this, we note the following useful result. 
\begin{prop}[Exponential of PSD Kernel is PSD, \cite{scholkopf2018learning}]\label{prop:exp_kernel}
    Let $K: \Xcal \times \Xcal \to \R$ be a PSD kernel. Then, the kernel $\tilde{K}: \Xcal \times \Xcal \to \R$ defined by $\tilde{K}(x,z) = \exp\big(K(x,z)\big)$ is also PSD.
\end{prop}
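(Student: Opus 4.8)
The plan is to expand $\exp$ as a power series, show that each partial sum is a PSD kernel, and then pass to the limit. Three facts suffice: (i) the pointwise product $(x,z) \mapsto K_1(x,z)K_2(x,z)$ of two PSD kernels is PSD; (ii) a nonnegative linear combination $\sum_{j=1}^m c_j K_j$ of PSD kernels (with $c_j \geq 0$) is PSD; and (iii) if PSD kernels $K_n$ converge pointwise to a kernel $K_\infty$ (i.e.\ $K_n(x,z) \to K_\infty(x,z)$ for every $x,z \in \Xcal$), then $K_\infty$ is PSD. Facts (ii) and (iii) are immediate from the definition: fixing any finite $\{x_i\}_{i=1}^n \subseteq \Xcal$ and $\abf \in \R^n$, the quadratic form $\abf^T \Gbf \abf$ associated with the combination (resp.\ the limit) is a nonnegative combination (resp.\ a limit of nonnegative reals), hence $\geq 0$.

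Granting these, fix $x,z \in \Xcal$. Since $K(x,z) \in \R$, the scalar series $\sum_{n=0}^\infty \tfrac{1}{n!}K(x,z)^n$ converges to $\exp(K(x,z)) = \tilde{K}(x,z)$. The constant kernel $K^{(0)} \equiv 1$ is PSD, as its Gram matrix is the all-ones matrix $\mathbf{1}\mathbf{1}^T$; and by (i) together with induction, the $n$-th pointwise power $K^{(n)}(x,z) := K(x,z)^n$ is PSD for every integer $n \geq 1$. By (ii), the partial sums $S_N := \sum_{n=0}^N \tfrac{1}{n!}K^{(n)}$ are PSD kernels, and by the previous sentence they converge pointwise to $\tilde{K}$. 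Fact (iii) then yields that $\tilde{K}$ is PSD.

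It remains to establish (i), which is the crux (it is the Schur product theorem). Let $\{x_i\}_{i=1}^n \subseteq \Xcal$ be arbitrary and let $\Gbf_1,\Gbf_2 \in \R^{n \times n}$ be the Gram matrices of $K_1,K_2$; then the Gram matrix of the product kernel is the Hadamard (entrywise) product $\Gbf_1 \circ \Gbf_2$, with $(\Gbf_1 \circ \Gbf_2)_{ij} = (\Gbf_1)_{ij}(\Gbf_2)_{ij}$. Since $\Gbf_1,\Gbf_2$ are symmetric PSD, take spectral decompositions $\Gbf_1 = \sum_a \lambda_a \mathbf{u}_a \mathbf{u}_a^T$ and $\Gbf_2 = \sum_b \mu_b \mathbf{v}_b \mathbf{v}_b^T$ with all $\lambda_a,\mu_b \geq 0$. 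Using bilinearity of $\circ$ and the identity $(\mathbf{u}\mathbf{u}^T) \circ (\mathbf{v}\mathbf{v}^T) = (\mathbf{u} \circ \mathbf{v})(\mathbf{u} \circ \mathbf{v})^T$, we obtain $\Gbf_1 \circ \Gbf_2 = \sum_{a,b} \lambda_a \mu_b (\mathbf{u}_a \circ \mathbf{v}_b)(\mathbf{u}_a \circ \mathbf{v}_b)^T$, a nonnegative linear combination of rank-one PSD matrices, hence PSD. (Equivalently: if $X,Y$ are independent centered Gaussian vectors with covariances $\Gbf_1,\Gbf_2$, then $X \circ Y$ has covariance $\Gbf_1 \circ \Gbf_2$.) This establishes (i). The only real obstacle is this Hadamard-product step; once it is in hand, the rest is routine bookkeeping with the exponential series.
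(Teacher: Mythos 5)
Your proof is correct and is essentially the standard argument for this result (the one in the cited reference, which the paper itself omits): Schur's product theorem for pointwise products, closure of PSD kernels under nonnegative combinations and pointwise limits, and the exponential power series. The Hadamard-product step via the identity $(\mathbf{u}\mathbf{u}^{\top}) \circ (\mathbf{v}\mathbf{v}^{\top}) = (\mathbf{u}\circ\mathbf{v})(\mathbf{u}\circ\mathbf{v})^{\top}$ is handled correctly, so nothing is missing.
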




\section{Features from Kernels}\label{section:features_from_kernels}

We now formalize the notion that positive semidefinite kernels implicitly capture the geometry of high-dimensional spaces.

\begin{definition}[Reproducing Kernel]
    Let $\Hbb$ be a Hilbert space of real-valued functions on $\Xcal$ and $K: \Xcal \times \Xcal \to \R$ be a PSD kernel such that the partial evaluations $K(\cdot,x): \Xcal \to \R$ are in $\Hbb$ for every $x \in \Xcal$. Then, $K$ called a \textbf{reproducing kernel} if it satisfies
    \begin{equation}\label{eq:reproducing_property}
        \ev{f,K(\cdot,x)}_{\Hbb} = f(x) \hs\hs \forall \, f \in \Hbb \hs \forall \, x \in \Xcal.
    \end{equation}
    We call (\ref{eq:reproducing_property}) the \textbf{reproducing property}.
\end{definition}

\begin{definition}[Reproducing Kernel Hilbert Space]
    A \textbf{reproducing kernel Hilbert space (RKHS)} $\Hbb$ is a Hilbert space of real-valued functions on $\Xcal$ such that for all $x \in \Xcal$, the linear evaluation functional $L_x: \Hcal \to \R$ defined by $L_x(f) = f(x)$ is bounded.
\end{definition}

\begin{theorem}[Moore-Aronszajn]\label{thm:moore-aronszajn}
    Given a PSD kernel $K: \Xcal \times \Xcal \to \R$, there exists a unique RKHS $\Hbb$ such that $K$ satisfies the reproducing property. Conversely, for every RKHS $\Hbb$, there exists a unique reproducing kernel $K$.
\end{theorem}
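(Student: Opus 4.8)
The plan is to handle the two directions separately, with the forward direction --- building the RKHS out of $K$ --- doing most of the work. First I would form the vector space $\Hbb_0 := \mathrm{span}\{K(\cdot,x) : x \in \Xcal\}$ of finite linear combinations of partial evaluations, and equip it with the candidate bilinear form $\ev{\sum_i a_i K(\cdot,x_i), \sum_j b_j K(\cdot,z_j)} := \sum_{i,j} a_i b_j K(x_i,z_j)$. The first things to check are that this form is well-defined (independent of the chosen representations of the two functions) --- which follows because its value rewrites as $\sum_i a_i \big(\sum_j b_j K(x_i,z_j)\big) = \sum_i a_i g(x_i)$, hence depends only on the function $g$, and symmetrically only on $f$ --- and that it is symmetric, bilinear, and positive semidefinite, the last being exactly the PSD hypothesis on $K$ applied to Gram matrices. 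To upgrade to an honest inner product I would observe that the reproducing identity $\ev{f, K(\cdot,x)} = f(x)$ holds on $\Hbb_0$ by construction, so by Cauchy--Schwarz for PSD forms, $|f(x)|^2 = |\ev{f,K(\cdot,x)}|^2 \le \ev{f,f}\, K(x,x)$; thus $\ev{f,f} = 0$ forces $f \equiv 0$.

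Next I would take the completion of the pre-Hilbert space $(\Hbb_0, \ev{\cdot,\cdot})$ and realize it concretely as a space of functions on $\Xcal$. The crucial point is that every evaluation functional is bounded on $\Hbb_0$, with $|f(x)| \le \|f\|\sqrt{K(x,x)}$, so any norm-Cauchy sequence in $\Hbb_0$ is pointwise Cauchy; I would define $\Hbb$ to be the set of pointwise limits $f(x) := \lim_n f_n(x)$ of such sequences, check that norm-equivalent Cauchy sequences give the same function (again by the evaluation bound), transport the inner product, and verify completeness together with density of $\Hbb_0$ in $\Hbb$. Boundedness of evaluation persists in the limit, so $\Hbb$ is an RKHS, and the reproducing property extends by continuity: $\ev{f, K(\cdot,x)} = \lim_n \ev{f_n, K(\cdot,x)} = \lim_n f_n(x) = f(x)$.

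For uniqueness, if $\Hbb_1$ and $\Hbb_2$ are both RKHSs in which $K$ reproduces, then both contain $\Hbb_0$, the reproducing property forces their inner products to coincide on $\Hbb_0$, and $\Hbb_0$ is dense in each (an element orthogonal to all $K(\cdot,x)$ vanishes identically by the reproducing property); since norm convergence implies pointwise convergence in any RKHS, each space is exactly the set of pointwise limits of norm-Cauchy sequences from $\Hbb_0$, so $\Hbb_1 = \Hbb_2$ with matching inner products. For the converse, given an RKHS $\Hbb$ I would apply the Riesz representation theorem to the bounded functional $L_x$ to obtain $K_x \in \Hbb$ with $f(x) = \ev{f,K_x}$ for all $f$, then set $K(x,z) := \ev{K_z, K_x}$; symmetry of the inner product gives $K(x,z) = K(z,x)$, the identity $\sum_{i,j} a_i a_j K(x_i,x_j) = \|\sum_i a_i K_{x_i}\|^2 \ge 0$ gives positive semidefiniteness, and $K$ reproduces by construction. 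This kernel is unique: if $K$ and $K'$ both reproduce in $\Hbb$, then $\ev{f, K(\cdot,x) - K'(\cdot,x)} = 0$ for every $f$, so $K(\cdot,x) = K'(\cdot,x)$ for all $x$.

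The step I expect to be the main obstacle is the completion: the abstract metric-space completion produces equivalence classes of Cauchy sequences, and the real work lies in identifying these faithfully with genuine functions on $\Xcal$ and in showing that both the reproducing property and the RKHS structure survive passage to the limit. Everything there hinges on the uniform-in-$f$ estimate $|f(x)| \le \|f\|\sqrt{K(x,x)}$, which is precisely what makes norm convergence strong enough to pin down pointwise values.
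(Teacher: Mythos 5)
Your proposal follows essentially the same route as the paper's proof: build the pre-Hilbert space $\Hbb_0$ of finite combinations $\sum_i a_i K(\cdot,x_i)$, verify the inner product is well-defined and definite, complete via pointwise limits of norm-Cauchy sequences using the bound $|f(x)| \le \|f\|\sqrt{K(x,x)}$, and obtain the converse from the Riesz representation theorem. If anything your sketch is slightly tighter in two places --- you get definiteness directly from Cauchy--Schwarz plus the reproducing identity on $\Hbb_0$, and you explicitly argue uniqueness of the RKHS (via density of $\Hbb_0$ and the fact that norm convergence implies pointwise convergence), a point the paper's proof leaves implicit --- but the structure and key estimates are the same.
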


\begin{proof}
    We follow the proofs outlined in \cite{wainwright2019high} and \cite{mohri2018foundations}, filling in the details where necessary.
    $(\Rightarrow)$ Let $K$ be a PSD kernel. We construct the associated RKHS $\Hbb$. We construct the vector space
    \begin{equation}
        \Hbb_0 := \bigg\{\sum_{i=1}^m a_i K(\cdot,x_i): m \in \N, x_i \in \Xcal, a_i \in \R\bigg\},
    \end{equation}
    the set of all linear combinations of partial kernel evaluations.

    We define an inner product $\ev{\cdot,\cdot}_{\Hbb_0}$ in the following way. For all $f,g \in \Hbb_0$ with $f = \sum_{i=1}^m a_i K(\cdot,x_i)$ and $g = \sum_{j=1}^{m'} b_j K(\cdot,z_j)$ with
    \begin{equation}
        \ev{f,g}_{\Hbb_0} = \sum_{i=1}^m \sum_{j=1}^{m'} a_i b_j K(x_i,z_j) = \sum_{i=1}^m a_i g(x_i) = \sum_{j=1}^{m'} b_i f(z_j).
    \end{equation}
    Suppose now that we can also represent $f$ by $f = \sum_{i=1}^{\tilde{m}} c_i K(\cdot,\tilde{x}_i)$. Then we still have 
    \begin{equation}
        \ev{f,g}_{\Hbb_0} = \sum_{j=1}^{m'} b_i f(z_j).
    \end{equation}
    Similarly, we may argue that changing the representation of $g$ does not change the result, which confirms that $\ev{\cdot,\cdot}_{\Hbb_0}$ is well-defined. Symmetry and bilinearity is straightforward, and for all $f \in \Hbb_0$, we have
    \begin{equation}\label{eq:self_inner_product}
        \ev{f,f}_{\Hbb_0} = \sum_{i=1}^m \sum_{j=1}^m a_i a_j K(x_i,x_j) \geq 0
    \end{equation}
    since $K$ is PSD. It remains to verify that if the left-hand side of (\ref{eq:self_inner_product}) is zero, then $f = 0$. To see this, let $a \in \R$, $x \in \Xcal$. We have
    \begin{equation*}
        0 \leq \bigg|\bigg|a K(\cdot, x) + \sum_{i=1}^n a_i K(\cdot,x_i)\bigg|\bigg|_{\Hbb}^2 = a^2 K(x,x) + 2a \sum_{i=1}^n a_i K(x,x_i) + \ev{f,f}_{\Hbb_0}.
    \end{equation*}
    Since $K(x,x) \geq 0$ and the above holds for any choice of $a \in \R$, we must have $f = 0$. 

    Moreover, the reproducing property holds for $\ev{\cdot,\cdot}_{\Hbb_0}$. For $f = \sum_{i=1}^m a_i K(\cdot,x_i) \in \Hbb_0$ and $x \in \Xcal$, 
    \begin{equation}
        \ev{f,K(\cdot,x)} = \sum_{i=1}^m a_i K(x_i,x) = f(x).
    \end{equation}

    Now, we complete $\Hbb_0$ to a Hilbert space. Note that if $\{f_n\}_{n=1}^{\infty}$ is a Cauchy sequence in $\Hbb_0$, then $\{f_n(x)\}_{n=1}^{\infty}$ is Cauchy in $\R$ for all $x \in \Xcal$. Indeed, by Cauchy-Schwarz,
    \begin{equation*}
        |f_n(x) - f_m(x)|^2 = \big|\ev{f_n,K(\cdot,x)}-\ev{f_m,K(\cdot,x)}\big|^2 = \big|\ev{f_n-f_m,K(\cdot,x)}\big|^2 \leq ||f_n-f_m||_{\Hbb_0}^2 K(x,x)
    \end{equation*}
    for all $m,n \in \N$. Then, we may define $\Hbb$ to be the completion of $\Hbb_0$ by pointwise limits of Cauchy sequences and define the norm $||f||_{\Hbb} = \lim_{n \to \infty} ||f_n||_{\Hbb_0}$ for $f \in \Hbb$ such that $\lim_{n \to \infty} f_n(x) = f(x)$ for all $x \in \Xcal$.

    To check, that the norm is well-defined, let $\{g_n\}_{n=1}^{\infty}$ be a Cauchy sequence in $\Hbb_0$ such that $\lim_{n \to \infty} g_n(x) = 0$ for all $x \in \Xcal$. Suppose that $\{||g_n||_{\Hbb_0}\}_{n=1}^{\infty}$ does not converge to zero. Taking a subsequence if necessary, we have that $\lim_{n \to \infty} ||g_n||_{\Hbb_0} = 2\eps$ for some $\eps > 0$. Since we may write $g_m = \sum_{i=1}^{M_m} a_i K(\cdot,x_i)$, we have
    \begin{equation*}
        \ev{g_m,g_n}_{\Hbb_0} = \sum_{i=1}^{M_m} a_i g_n(x_i) \to 0
    \end{equation*}
    as $n \to \infty$ by pointwise convergence. Then, for $n,m$ sufficiently large, we simultaneously have $||g_n||_{\Hbb_0} > \eps$, $||g_m||_{\Hbb_0} > \eps$, $|\ev{g_m,g_n}_{\Hbb_0}| < \eps/2$, and $||g_n-g_m||_{\Hbb_0} < \eps/2$. But then,
    \begin{equation}
        ||g_n-g_m||_{\Hbb_0}^2 = ||g_n||_{\Hbb_0}^2 + ||g_m||_{\Hbb_0}^2 - 2\ev{g_m,g_n}_{\Hbb_0} \geq \eps + \eps - \eps = \eps
    \end{equation}
    yields a contradiction. Hence, our norm on $\Hbb$ is well-defined.

    The inner product on $\Hbb$ is then obtained by polarization:
    \begin{equation}\label{eq:inner_product_polarization}
        \ev{f,g}_{\Hbb} = \frac{1}{2}\big(||f+g||_{\Hbb}^2 - ||f||_{\Hbb}^2 - ||g||_{\Hbb}^2\big).
    \end{equation}
    We show that this inner product satisfies the reproducing property. Let $f \in \Hbb$ such that $f = \lim_{n \to \infty} f_n$ (pointwise) for some $\{f_n\}_{n=1}^{\infty} \subseteq \Hbb_0$ and let $x \in \Xcal$. We have
    \begin{equation*}
        ||K(\cdot,x)||_{\Hbb}^2 = \ev{K(\cdot,x), K(\cdot,x)}_{\Hbb_0} = K(x,x),
    \end{equation*}
    and
    \begin{align*}
        ||f + K(\cdot,x)||_{\Hbb}^2 &= \lim_{n \to \infty} ||f_n + K(\cdot,x)||^2_{\Hbb_0} \\
        &= \lim_{n \to \infty} \ev{f_n + K(\cdot,x), f_n + K(\cdot,x)}_{\Hbb_0} \\
        &= \lim_{n \to \infty} \bigg(||f_n||_{\Hbb_0}^2 + K(x,x) + 2f_n(x)\bigg) \\
        &= ||f||_{\Hbb}^2 + K(x,x) + 2f(x).
    \end{align*}
    From this and (\ref{eq:inner_product_polarization}), we obtain
    \begin{equation}
        \ev{f,K(\cdot,x)} = f(x),
    \end{equation}
    showing that the reproducing property is satisfied.

    Finally, let $L_x: \Hbb \to \R$ be the linear evaluation functional at $x \in \Xcal$. Then,
    \begin{equation}
        ||L_x(f)||^2 = |f(x)|^2 = \ev{f,K(\cdot,x)}^2 \leq \ev{f,f}K(x,x) = ||f||^2_{\Hbb_0} K(x,x).
    \end{equation}
    Thus, $L_x$ is a bounded linear operator.

    $(\Leftarrow)$ Let $\Hbb$ be an RKHS. Then, the linear evaluation functional $L_x: \Hbb \to \R$ is bounded for all $x \in \Xcal$. By the Riesz Representation Theorem, for each $x \in \Xcal$ there exists $R_x \in \Hbb$ such that
    \begin{equation}
        f(x) = L_x(f) = \ev{f,R_x} \hs \forall \, f \in \Hbb.
    \end{equation}
    Define $K: \Xcal \times \Xcal \to \R$ by
    \begin{equation}
        K(x,z) = \ev{R_x,R_z}_{\Hbb},
    \end{equation}
    which is PSD by Proposition \ref{prop:inner_product_kernel}. Moreover, for all $x,z \in \Xcal$,
    \begin{equation}
        K(z,x) = \ev{R_z,R_x}_{\Hbb} = R_x(z),
    \end{equation}
    which implies that $K(\cdot,x) = R_x$. Therefore, $K$ satisfies the reproducing property.

    To show uniqueness, suppose there exists another PSD kernel $K'$ which satisfies the reproducing property. Then, since $K$ also satisfies the reproducing property, we have
    \begin{equation}
        K(x,z) = \ev{K(\cdot,z), K'(\cdot,x)}_{\Hbb} = K'(x,z).
    \end{equation}
\end{proof}

\begin{remark}
    A key insight from the proof above is that every PSD kernel $K$ defines a feature map $\Phibf: \Xcal \to \Hbb$ given by $\Phibf(x) = K(\cdot,x)$. Evaluating the kernel for a pair of inputs $x,z \in \Xcal$ can be seen as computing an inner product in the (possibly infinite-dimensional) space $\Hbb$,
    \begin{equation}
        K(x,z) = \ev{K(\cdot,x), K(\cdot,z)}_{\Hbb} = \ev{\Phibf(x),\Phibf(z)}_{\Hbb}.
    \end{equation}
\end{remark}



A subsequent question of interest is whether kernels can also be used to produce low-dimensional embeddings of inputs in $\Xcal$. Mercer's Theorem provides an affirmative response for PSD kernels. We follow the presentation from \cite{wainwright2019high}.

\begin{theorem}[Mercer]\label{thm:mercer}
    Let $\mu$ be a nonnegative measure over a compact metric space $\Xcal$. Let $K$ be a continuous PSD kernel on $\Xcal \times \Xcal$ which satisfies
    \begin{equation}\label{eq:mercer_condition}
        \int_{\Xcal \times \Xcal} K(x,z) \, \mu(dx) \, \mu(dz) < \infty.
    \end{equation}
    Define the operator $T_K: L^2(\Xcal;\mu) \to L^2(\Xcal;\mu)$ by
    \begin{equation}
        T_K(f)(x) := \int_{\Xcal} K(x,z)f(z) \, \mu(dz).
    \end{equation}
    Then, there exist a sequence of eigenfunctions $\{\phi_j\}_{j=1}^{\infty}$ that form on orthonormal basis of $L^2(\Xcal;\mu)$ and nonnegative eigenvalues $\{\lambda_j\}_{j=1}^{\infty}$ such that
    \begin{equation}
        T_K(\phi_j) = \lambda_j\phi_j
    \end{equation}
    for all $j \in \N$. Moreover, $K$ has the expansion
    \begin{equation}\label{eq:eigenfunction_expansion}
        K(x,z) = \sum_{j=1}^{\infty} \lambda_j \phi_j(x)\phi_j(z),
    \end{equation}
    where the convergence holds absolutely and uniformly.
\end{theorem}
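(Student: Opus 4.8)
The plan is to identify the integral operator $T_K$ as a compact, self-adjoint, positive operator on $L^2(\Xcal;\mu)$, apply the spectral theorem to obtain the eigenfunctions and nonnegative eigenvalues, and then upgrade the resulting $L^2$-expansion of $K$ to an absolutely and uniformly convergent pointwise expansion by exploiting the positive semidefiniteness of the truncation remainders.

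First I would establish the operator-theoretic facts. Self-adjointness of $T_K$ follows from the symmetry of $K$ and Fubini. Since $\Xcal$ is compact and $K$ continuous, $K$ is bounded, hence $K \in L^2(\Xcal\times\Xcal;\mu\otimes\mu)$ ($\mu$ being finite), so $T_K$ is Hilbert--Schmidt and in particular compact. Positivity is where the PSD hypothesis enters at the operator level: for continuous $f$, the form $\ev{T_K f, f}_{L^2} = \iint_{\Xcal\times\Xcal} K(x,z) f(x) f(z)\,\mu(dx)\,\mu(dz)$ is a limit of finite sums $\sum_{i,j} K(x_i,x_j) f(x_i) f(x_j)\,\mu(A_i)\,\mu(A_j)\geq 0$ over partitions of $\Xcal$ into small Borel sets, with uniform continuity of $K$ and $f$ controlling the error; density of $C(\Xcal)$ in $L^2(\Xcal;\mu)$ extends positivity to all of $L^2$. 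The spectral theorem for compact self-adjoint operators then yields an orthonormal basis $\{\phi_j\}_{j=1}^\infty$ of $L^2(\Xcal;\mu)$ of eigenfunctions with real eigenvalues $\lambda_j \to 0$, and positivity forces $\lambda_j \geq 0$. For $\lambda_j > 0$ I would take the representative $\phi_j = \lambda_j^{-1} T_K \phi_j$, which is continuous since $T_K$ maps $L^2(\Xcal;\mu)$ into $C(\Xcal)$ by continuity of $K$ and dominated convergence.

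Next comes the key pointwise bound. Fix $n$, let $P_n$ project onto $\mathrm{span}\{\phi_1,\dots,\phi_n\}$, and set $R_n(x,z) := K(x,z) - \sum_{j=1}^n \lambda_j \phi_j(x)\phi_j(z)$, the continuous kernel of the positive operator $T_K(I-P_n)$; hence $R_n$ is itself PSD and $R_n(x,x)\geq 0$ for all $x$. With $s_n(x) := \sum_{j=1}^n \lambda_j \phi_j(x)^2$ this gives $s_n(x) \le K(x,x) \le M := \max_{x\in\Xcal} K(x,x) < \infty$, so $\sum_{j=1}^\infty \lambda_j \phi_j(x)^2 \le M$, and absolute convergence of (\ref{eq:eigenfunction_expansion}) follows from Cauchy--Schwarz, $\sum_j \lambda_j|\phi_j(x)\phi_j(z)| \le \big(\sum_j\lambda_j\phi_j(x)^2\big)^{1/2}\big(\sum_j\lambda_j\phi_j(z)^2\big)^{1/2} \le M$. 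For uniform convergence I would first show $\{s_n\}$ is equicontinuous: since $s_n$ is the diagonal of $K_n(x,z) := \sum_{j\le n}\lambda_j\phi_j(x)\phi_j(z)$ and $K_n = K - R_n$ with $K,R_n$ both PSD, a weighted Cauchy--Schwarz gives $|s_n(x)-s_n(y)|^2 \le 4M\big(K_n(x,x)+K_n(y,y)-2K_n(x,y)\big) \le 4M\big(K(x,x)+K(y,y)-2K(x,y)\big)$, whose right-hand side tends to $0$ uniformly as $d(x,y)\to 0$ by uniform continuity of $K$, independently of $n$. Being uniformly bounded, equicontinuous and monotone in $n$, the $s_n$ converge uniformly to a continuous limit; then $|K_n(x,z)-K_m(x,z)| \le (s_n(x)-s_m(x))^{1/2}(s_n(z)-s_m(z))^{1/2} \le \sup_{\Xcal}|s_n-s_m|$ shows $(K_n)$ is uniformly Cauchy, converging uniformly to a continuous $\tilde K$. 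Finally $\tilde K = K$: the products $\{\phi_i\otimes\phi_j\}$ form an orthonormal basis of $L^2(\Xcal\times\Xcal;\mu\otimes\mu)$ and $\ev{K,\phi_i\otimes\phi_j} = \ev{T_K\phi_i,\phi_j} = \lambda_i\1(i=j)$, so $K = \sum_i \lambda_i\,\phi_i\otimes\phi_i$ in $L^2(\mu\otimes\mu)$, which is also the $L^2$-limit of $K_n$; hence $\tilde K = K$ $\mu\otimes\mu$-a.e., and everywhere when $\mu$ has full support (otherwise on $\mathrm{supp}\,\mu$). This establishes (\ref{eq:eigenfunction_expansion}) with absolute and uniform convergence.

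I expect the principal obstacle to be the faithful transfer of the PSD property from $K$ to the operator side --- both $T_K \geq 0$ and, crucially, that each truncation remainder $R_n$ is a genuine PSD kernel, which is what powers the bounds $s_n \le K(x,x)$ and the equicontinuity estimate --- since this requires an honest approximation of double integrals by finite quadratic forms resting on compactness and uniform continuity. Within the uniform-convergence argument, obtaining an equicontinuity bound for $\{s_n\}$ that is uniform in $n$, expressed through the modulus of continuity of $K$, is the technical heart that makes Arzel\`a--Ascoli (or Dini) applicable. A secondary subtlety worth noting is the final pointwise identification $\tilde K = K$, which in full generality holds only on $\mathrm{supp}\,\mu$; the clean statement tacitly assumes $\mu$ assigns positive mass to every nonempty open set.
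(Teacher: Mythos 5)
Your proposal is correct and is essentially the classical proof of Mercer's theorem; note that the thesis itself states this result without proof, deferring to \cite{wainwright2019high}, so there is no in-paper argument to compare against. Your outline has all the right ingredients: Hilbert--Schmidt compactness and self-adjointness of $T_K$, transfer of positive semidefiniteness from the kernel to the operator by Riemann-type approximation of the double integral, the spectral theorem, continuity of the eigenfunctions via $\phi_j = \lambda_j^{-1}T_K\phi_j$, and the Dini/equicontinuity upgrade from $L^2$ to uniform convergence powered by the pointwise bound $s_n(x)\le K(x,x)$. You correctly identify the two places where care is genuinely required: (i) the converse lemma that a \emph{continuous} kernel whose integral operator is positive is pointwise PSD (needed for each remainder $R_n$, and only valid on $\mathrm{supp}\,\mu$ --- the usual argument is by contradiction, shrinking to a neighbourhood where the quadratic form stays negative and testing against indicators or bump functions supported there); and (ii) the final identification $\tilde K = K$, which holds everywhere only when $\mu$ has full support. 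Two small points worth tightening: the hypothesis as stated only gives $\iint K\,d\mu\,d\mu<\infty$, so you should say explicitly that you take $\mu$ to be a finite Borel measure (as is standard, and as compactness of $\Xcal$ essentially forces for Radon measures) in order to conclude $K\in L^2(\mu\otimes\mu)$ from boundedness; and the spectral theorem yields an orthonormal basis of $\overline{\mathrm{ran}\,T_K}$, which you complete to a basis of $L^2(\Xcal;\mu)$ with an orthonormal basis of $\ker T_K$ assigned eigenvalue $0$.
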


\begin{example}[Finite Input Space Case, \cite{wainwright2019high}]\label{ex:mercer_finite}
    Suppose that $\Xcal$ is finite and let $N = |\Xcal|$. Let $K: \Xcal \times \Xcal \to \R$ be a PSD kernel and $\Gbf$ be its associated $N \times N$ Gram matrix. Let $\mu$ be the counting measure on $\Xcal$. Then, the Mercer eigenvalues and eigenfunctions of $K$ with respect to $\mu$ are the eigenvalues and eigenvectors of the matrix $\Gbf$.
\end{example}

As pointed out in \cite{wainwright2019high}, Mercer's Theorem shows that a PSD kernel (under some assumptions) induces an embedding of $\Xcal$ into $\ell^2$, the space of square-summable sequences. Indeed, we may define a feature map $\Phibf: \Xcal \to \ell^2$ such that
\begin{equation}\label{eq:infinite_embedding}
    \Phibf(x) = \big(\sqrt{\lambda_1}\phi_1(x),\sqrt{\lambda_2}\phi_2(x),\dots\big).
\end{equation}
Then, by (\ref{eq:eigenfunction_expansion}), we have
\begin{equation}
    K(x,z) = \ev{\Phibf(x),\Phibf(z)}_{\ell^2(\N)}.
\end{equation}

To obtain low-dimensional features, we can truncate the vector (\ref{eq:infinite_embedding}) to a fixed dimension $d$. This idea will manifest in our discussion of the Nyström method in Section \ref{section:kernel_approx} and Neural Eigenmaps \cite{deng2022neural} in Section \ref{section:learning_mercer_eigenfunctions}.

\chapter{Dimensionality Reduction}\label{chapter:dim_reduction}

While there is no agreed upon definition for what makes an adequate vector representation of data, it is commonly said that the goal of feature learning is to uncover the low-dimensional latent structure from which the observed data is generated \cite{bengio2013representation}. For example, a raw image contains hundreds or thousands of pixels. However, there may only be a small number of latent factors of variation which influence a classification decision, such as pose and lighting \cite{tenenbaum2000global}. These latent variables cannot immediately be gleaned from raw pixels, and therefore we desire a mapping from observed (input) space to latent (feature) space. 

There is a variety of dimensionality reduction methods which produce a mapping between observed and latent space. Linear methods, such as principal component analysis (PCA) \cite{pearson1901on,hotelling1933analysis} and multidimensional scaling (MDS) \cite{cox2000multidimensional,wang2012geometric}, assume that the observations and the latent representations are related by a linear map. We explore these in Section \ref{section:linear_dim_reduction}. The manifold learning approaches discussed in Section \ref{section:manifold_learning} do not make this assumption. They rely on a notion of similarity (or dissimilarity) between examples in the training set. In Section \ref{section:kernel_approx}, we explore methods which, given a kernel (i.e., a notion of similarity), aim to directly learn vector representations whose inner products approximate that kernel.

It is important to note that the notions of similarity employed by these dimensionality reduction methods do not incorporate any additional knowledge of what makes examples similar. Instead, they make use of standard functions that can be directly computed in the input space (e.g., Euclidean distance, the Gaussian kernel). Hence, we still consider them unsupervised methods. Throughout this chapter, we assume $\Xcal \subseteq \R^{n_0}$ and that we have a dataset $\Dcal_{\xbf} = \{\xbf_i\}_{i=1}^N$ sampled from a distribution $p_{\xbf}$ on $\Xcal$.

\section{Linear Methods}\label{section:linear_dim_reduction}

\textbf{Principal Component Analysis (PCA).} PCA learns a low-rank orthogonal projection of the data in the input space that maximally preserves variance. We provide an overview of the theory based on \cite{wainwright2019high,mohri2018foundations}.

We wish to solve the following minimization problem:
\begin{equation}\label{eq:original_pca_problem}
    \underset{\Pbf \in \Pcal_d^{n_0}}{\argmin} \,\, \E_{p_{\xbf}}\big[||\xbf-\Pbf\xbf||_2^2\big],
\end{equation}
where $\Pcal_d^{n_0} \subseteq \R^{n_0 \times n_0}$ denotes the set of orthogonal projection matrices of rank $d$. This objective encourages projected inputs to remain as close as possible to original inputs, with higher priority given to regions of the input space with higher probability under $p_{\xbf}$.

Re-writing the objective, we have
\begin{align*}
    \E\big[||\xbf-\Pbf\xbf||_2^2\big] &= \E\big[(\xbf-\Pbf\xbf)^{\top}(\xbf-\Pbf\xbf)\big] \\
    &= \E\big[||\xbf||_2^2\big] + \E\big[\xbf^{\top}\Pbf\xbf\big] - \E\big[\xbf^{\top}\Pbf^{\top}\xbf\big] + \E\big[\xbf^{\top} \Pbf^{\top}\Pbf\xbf\big] \\
    &= \E\big[||\xbf||_2^2\big] - \E\big[\xbf^{\top} \Pbf \xbf\big],
\end{align*}
where the last line follows from the fact that $\Pbf$ is symmetric and idempotent (since $\Pbf$ is an orthogonal projection matrix). Hence, our optimization problem simplifies to
\begin{equation}\label{eq:pca_problem}
    \underset{\Pbf \in \Pcal_d^{n_0}}{\argmax} \,\, \E\big[\xbf^{\top} \Pbf \xbf\big].
\end{equation}

\begin{theorem}
    The matrix $\Pbf = \Vbf \Vbf^{\top}$, where the columns of $\Vbf \in \R^{n_0 \times d}$ are the top $k$ eigenvectors (i.e., the eigenvectors associated with the $k$ largest eigenvalues) of the covariance matrix $\E[\xbf \xbf^{\top}]$, solves (\ref{eq:pca_problem}).
\end{theorem}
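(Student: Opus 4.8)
The plan is to turn the matrix optimization (\ref{eq:pca_problem}) into a linear program over a polytope. First I would rewrite the objective using the cyclic invariance of the trace: for any $\Pbf \in \Pcal_d^{n_0}$,
\[
\E\big[\xbf^{\top}\Pbf\xbf\big] = \E\big[\mathrm{tr}(\Pbf\,\xbf\xbf^{\top})\big] = \mathrm{tr}\big(\Pbf\,\mathbf{M}\big), \qquad \mathbf{M} := \E[\xbf\xbf^{\top}],
\]
so (\ref{eq:pca_problem}) is equivalent to $\argmax_{\Pbf \in \Pcal_d^{n_0}} \mathrm{tr}(\Pbf\,\mathbf{M})$. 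Every rank-$d$ orthogonal projection factors as $\Pbf = \mathbf{U}\mathbf{U}^{\top}$ for some $\mathbf{U} \in \R^{n_0 \times d}$ with orthonormal columns $\mathbf{u}_1,\dots,\mathbf{u}_d$, and then $\mathrm{tr}(\Pbf\,\mathbf{M}) = \mathrm{tr}(\mathbf{U}^{\top}\mathbf{M}\mathbf{U}) = \sum_{i=1}^{d}\mathbf{u}_i^{\top}\mathbf{M}\,\mathbf{u}_i$. So it suffices to maximize $\sum_{i=1}^{d}\mathbf{u}_i^{\top}\mathbf{M}\,\mathbf{u}_i$ over all orthonormal $d$-tuples in $\R^{n_0}$.

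Next I would diagonalize $\mathbf{M}$. Being symmetric and PSD, it admits an orthonormal eigenbasis $\mathbf{v}_1,\dots,\mathbf{v}_{n_0}$ with eigenvalues $\lambda_1 \ge \lambda_2 \ge \dots \ge \lambda_{n_0} \ge 0$. Expanding each $\mathbf{u}_i$ in this basis gives $\mathbf{u}_i^{\top}\mathbf{M}\,\mathbf{u}_i = \sum_{j=1}^{n_0}\lambda_j\,\ev{\mathbf{u}_i,\mathbf{v}_j}^2$, hence
\[
\sum_{i=1}^{d}\mathbf{u}_i^{\top}\mathbf{M}\,\mathbf{u}_i = \sum_{j=1}^{n_0}\lambda_j\,c_j, \qquad c_j := \sum_{i=1}^{d}\ev{\mathbf{u}_i,\mathbf{v}_j}^2.
\]
The crucial observation is that $(c_1,\dots,c_{n_0})$ always lies in the polytope $\Delta_d := \{\,c \in \R^{n_0} : 0 \le c_j \le 1,\ \sum_{j} c_j = d\,\}$: nonnegativity is clear; $\sum_j c_j = \sum_{i=1}^{d}\|\mathbf{u}_i\|^2 = d$ (equivalently $\mathrm{tr}\,\Pbf = d$); and $c_j \le 1$ follows by completing $\mathbf{u}_1,\dots,\mathbf{u}_d$ to an orthonormal basis of $\R^{n_0}$ and applying Parseval's identity to $\mathbf{v}_j$, which gives $c_j \le \sum_{i=1}^{n_0}\ev{\mathbf{u}_i,\mathbf{v}_j}^2 = \|\mathbf{v}_j\|^2 = 1$.

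Finally I would solve the linear program: maximizing $c \mapsto \sum_{j}\lambda_j c_j$ over $\Delta_d$ has value $\lambda_1 + \dots + \lambda_d$, attained at $c = (1,\dots,1,0,\dots,0)$ with exactly $d$ ones, since the $\lambda_j$ are non-increasing and any mass placed on a coordinate $j > d$ can be shifted to an under-full coordinate $j' \le d$ without decreasing the objective (and $\Delta_d$ is compact, so the maximum is attained at such a vertex). This yields the upper bound $\mathrm{tr}(\Pbf\,\mathbf{M}) \le \sum_{i=1}^{d}\lambda_i$ for all $\Pbf \in \Pcal_d^{n_0}$. Choosing $\mathbf{U} = \Vbf$, the matrix whose columns are the top $d$ eigenvectors (so $\mathbf{u}_i = \mathbf{v}_i$ for $i \le d$), realizes $c_j = \1(j \le d)$ and hence $\mathrm{tr}(\Vbf\Vbf^{\top}\mathbf{M}) = \sum_{i=1}^{d}\mathbf{v}_i^{\top}\mathbf{M}\,\mathbf{v}_i = \sum_{i=1}^{d}\lambda_i$, so the bound is met and $\Pbf = \Vbf\Vbf^{\top}$ is a maximizer.

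I expect the middle step to be the main obstacle: showing that the attainable set of coefficient vectors is contained in $\Delta_d$, in particular the bound $c_j \le 1$, which requires the completion-to-a-basis / Parseval argument rather than being immediate. The trace manipulations before it are routine bookkeeping, and the exchange argument for the linear program afterwards is elementary. This is the special case of Ky Fan's maximum principle for symmetric PSD matrices, which could be cited instead, but the self-contained argument above is short enough to include.
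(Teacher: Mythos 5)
Your proof is correct, but it takes a genuinely different route from the one in the thesis. The paper argues greedily by induction on $d$: it first solves the case $d=1$ by expanding a single unit vector $\ubf$ in the eigenbasis of $\E[\xbf\xbf^{\top}]$ and maximizing $\sum_i \lambda_i a_i^2$ subject to $\|\abf\|_2 = 1$, and then, assuming the first $d-1$ columns have been found to be $\vbf_1,\dots,\vbf_{d-1}$, restricts the next column to $(\mathrm{span}\{\vbf_1,\dots,\vbf_{d-1}\})^{\perp}$ and repeats the argument. You instead prove a global upper bound: you show that the coefficient vector $(c_1,\dots,c_{n_0})$ with $c_j = \sum_{i=1}^{d}\langle \ubf_i,\vbf_j\rangle^2$ always lies in the polytope $\{0 \le c_j \le 1,\ \sum_j c_j = d\}$ (the bound $c_j \le 1$ via completion to a basis and Parseval), reduce to a linear program whose optimum is $\lambda_1+\dots+\lambda_d$, and then exhibit $\Pbf = \Vbf\Vbf^{\top}$ as attaining it. This is Ky Fan's maximum principle, as you note. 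What your version buys is rigour: the paper's induction quietly assumes that a global maximizer can be built one column at a time with each new column orthogonal to the previously chosen eigenvectors, i.e., that the optimal $d$-dimensional subspace is nested over the optimal $(d-1)$-dimensional one — a fact that is true here but is not justified in the paper and is exactly the kind of thing the greedy argument needs to establish. Your LP bound applies uniformly to every feasible $\Pbf$ at once and so avoids the issue entirely. What the paper's version buys is a more constructive, coordinate-by-coordinate picture of why each principal direction is the variance-maximizing direction in the remaining orthogonal complement, which is the interpretation emphasized in the surrounding text. Both proofs are essentially the same length, and your assessment of where the real work lies (the containment in the polytope, specifically $c_j \le 1$) is accurate.
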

\begin{proof}
    For every $\Pbf \in \Pcal_d^{n_0}$, we may write $\Pbf = \Ubf \Ubf^{\top}$ for some $\Ubf \in \R^{n_0 \times d}$ with orthonormal columns. Then,
    \begin{equation}
        \E\big[\xbf^{\top} \Pbf \xbf\big] = \E\big[\xbf^{\top} \Ubf \Ubf^{\top}\xbf\big] = \E\big[||\Ubf^{\top}\xbf||_2^2\big].
    \end{equation}
    Let $\ubf_1,\dots,\ubf_d$ denote the columns of $\Ubf$. Then,
    \begin{equation}\label{eq:variance_maximization}
        \E\big[||\Ubf^{\top}x||_2^2\big] = \sum_{j=1}^d \E\big[\ev{\ubf_j,\xbf}^2\big] = \sum_{j=1}^d \ubf_j^{\top} \E[\xbf \xbf^{\top}]\ubf_j.
    \end{equation}
    We may write $\E[\xbf \xbf^{\top}] = \sum_{i=1}^{n_0} \lambda_i \vbf_i \vbf_i^{\top}$, where $\{\lambda_i\}_{i=1}^{n_0}$, $\{\vbf_i\}_{i=1}^{n_0}$ are the eigenvalues in decreasing order and corresponding normalized eigenvectors of $\E[\xbf \xbf^{\top}]$ respectively. By the spectral theorem, $\{\vbf_i\}_{i=1}^{n_0}$ is an orthonormal basis of $\R^{n_0}$.

    Suppose now that $d = 1$. Write $\Ubf$ as $\ubf \in \R^{n_0}$. Then,
    \begin{equation}
        \ubf^{\top} \E[\xbf \xbf^{\top}] \ubf = \sum_{i=1}^{n_0} \lambda_1 \ubf^{\top} \vbf_i \vbf_i^{\top} \ubf = \sum_{i=1}^{n_0} \lambda_i \ev{\vbf_i,\ubf}.
    \end{equation}
    Since $\{\vbf_i\}_{i=1}^{n_0}$ is an orthonormal basis of $\R^{n_0}$, we may write $\ubf = a_1\vbf_1 + \dots + a_{n_0}\vbf_{n_0}$ for some $\abf \in \R^{n_0}$ with unit norm. We have
    \begin{equation}
        \sum_{i=1}^{n_0} \lambda_i \ev{\vbf_i,\ubf} = \sum_{i=1}^{n_0} \lambda_i \bigg \langle \vbf_i, \sum_{j=1}^{n_0} a_j \vbf_j\bigg \rangle = \sum_{j=1}^{n_0} \lambda_i a_i.
    \end{equation}
    It is clear that this expression is maximized when $a_1 = 1$ and all other entries are zero. That is, $\ubf = \vbf_1$.

    Now assume $d > 1$ and suppose that we have found that $\ubf_1,\dots,\ubf_{d-1}$ correspond to $\vbf_1,\dots,\vbf_{d-1}$. Then, we must select $\ubf_d \in (\mathrm{span}\{\vbf_1,\dots,\vbf_{d-1}\})^{\perp}$. That is, $\ubf_d = a_d \vbf_d + a_{d+1} \vbf_{d+1} + \dots + a_{n_0} \vbf_{n_0}$. Following the same argument as in the case $d = 1$, we see that $\ubf_d = \vbf_d$ is optimal.
\end{proof}

Hence, the optimal rank $d$ projection is the one that maps data to the subspace of $\R^{n_0}$ spanned by the top $d$ eigenvectors of the covariance matrix $\E[\xbf \xbf^{\top}]$. As a result, we may obtain a $d$-dimensional representation of data in $\Xcal$ by projecting it onto this subspace and then writing the resulting vector in the basis $\{\vbf_1,\dots,\vbf_d\}$.

As was mentioned at the beginning of this section, PCA is also interpreted as finding the $d$-dimensional subspace of $\Xcal$ which maximally preserves the variance in the data. For this to be valid, assume that $p_{\xbf}$ has mean zero. Recalling (\ref{eq:variance_maximization}), we see that the expression
\begin{equation}
    \sum_{j=1}^d \E\big[\ev{\ubf_j,\xbf}^2\big]
\end{equation}
is the sum of the variance of the random vector $\xbf$ in the directions $\ubf_1,\dots,\ubf_d$.

In practice, the data distribution $p_{\xbf}$ and its covariance matrix are unknown. Instead, we have access to a dataset $\Dcal_{\xbf} = \{\xbf_i\}_{i=1}^N$. In this case, we replace the expectation with the sample mean. Our original optimization problem (\ref{eq:original_pca_problem}) becomes
\begin{equation}
    \underset{\Pbf \in \Pcal_d^{n_0}} \argmin \,\, \frac{1}{N} \sum_{i=1}^N ||\xbf_i-\Pbf \xbf_i||_2^2.
\end{equation}

Assuming zero mean and substituting the expectation with the sample mean in all derivations shows that the optimal rank $k$ subspace is spanned by the top $k$ eigenvectors of the (biased) sample covariance matrix
\begin{equation}
    \frac{1}{N} \sum_{i=1}^N \xbf \xbf^{\top}.
\end{equation}
Naturally, error is induced by using the sample covariance matrix instead of the true covariance matrix. This is beyond the scope of this thesis, but is addressed in \cite{wainwright2019high}, among others.

\textbf{Multidimensional Scaling (MDS).} MDS is a linear dimensionality reduction method that relies solely on a notion of dissimilarity between inputs. Depending on the context, this dissimilarity may or may not be a distance. It is the first method we introduce that explicitly factors a matrix of pairwise similarities. The idea of factoring a Gram matrix will henceforth be a recurring theme. 

There are several types of MDS depending on what assumptions are made about the notion of dissimilarity \cite{wang2012geometric}. In classical MDS, the dissimilarity is the Euclidean distance. Metric MDS handles the more general case where the dissimilarity is a metric. Finally, as the name suggests, non-metric MDS does not require the dissimilarity to be a metric. We follow the introduction of MDS in \cite{cox2000multidimensional} and restrict our attention to the classical case. 

Let $\Dbf \in \R^{N \times N}$ be the matrix of pairwise Euclidean distances associated with $\Dcal_{\xbf}$ so that $D_{ij} = \sqrt{(\xbf_i-\xbf_j)^{\top}(\xbf_i-\xbf_j)}$. Let $\Sbf$ be the entry-wise square of $\Dbf$, i.e., $S_{ij} = (D_{ij})^2$. MDS only assumes that we have access to the matrix $\Dbf$ (we need not even have access to $\Dcal_{\xbf}$). 

The high-level idea is to use $\Dbf$ to form the Gram matrix $\Gbf$ of inner products between the elements of $\Dcal_{\xbf}$. That is, $\Gbf = \Xbf \Xbf^{\top}$, where $\Xbf \in \R^{N \times d}$ is the matrix with $i$th row corresponding to $\xbf_i$. $\Gbf$ can be directly computed if we have access to $\Dcal_{\xbf}$, but the relationship between the dot product and the Euclidean distance also enables $\Gbf$ to be computed from $\Dbf$ alone. Either way, once $\Gbf$ is obtained, we seek a low-rank approximation of $\Gbf$ of the form $\Gbf_d = \Phibf_d \Phibf_d^{\top}$, where $\Phibf_d \in \R^{N \times d}$. The rows of $\Phibf_d$ provide a $d$-dimensional representation of the data that approximately preserves the inner product (and Euclidean distance).
 
Assume that the dataset is centred, i.e,
\begin{equation}
    \frac{1}{N} \sum_{i=1}^N x_{i,r} = 0 \hs\hs \forall \, r \in [n_0].
\end{equation}
Since we are employing Euclidean distance as our notion of dissimilarity, we have
\begin{equation}\label{eq:mds_inner_1}
    S_{ij} = \xbf_i^{\top} \xbf_i + \xbf_j^{\top} \xbf_j - 2\xbf_i^{\top} \xbf_j,
\end{equation}
which implies
\begin{align}\label{eq:mds_inner_2}
    \frac{1}{N} \sum_{i=1}^N S_{ij} = \frac{1}{N} \sum_{i=1}^N \xbf_i^{\top} \xbf_i + \xbf_j^{\top} \xbf_j \\
    \frac{1}{N} \sum_{j=1}^N S_{ij} = \xbf_i^{\top} \xbf_i + \frac{1}{N} \sum_{j=1}^N \xbf_j^{\top} \xbf_j.
\end{align}
We manipulate the last equation to obtain
\begin{equation}\label{eq:mds_inner_3}
    \frac{1}{N^2} \sum_{i=1}^N \sum_{j=1}^N S_{ij} = \frac{1}{N}\bigg(\sum_{i=1}^N \xbf_i^{\top} \xbf_i + \frac{1}{N} \sum_{i=1}^N \sum_{j=1}^N \xbf_j^{\top} \xbf_j\bigg) = \frac{2}{N}\sum_{i=1}^N \xbf_i^{\top} \xbf_i.
\end{equation}
Then, using (\ref{eq:mds_inner_1}), (\ref{eq:mds_inner_2}), and (\ref{eq:mds_inner_3}), we can obtain the entries of the Gram matrix $\Gbf$ from the entries of the square-distance matrix $\Sbf$ as follows:
\begin{align*}
    G_{ij} &= \xbf_i^{\top} \xbf_j \\
    &= -\frac{1}{2}\big(S_{ij} - \xbf_i^{\top} \xbf_i - \xbf_j^{\top} \xbf_j\big) \\
    &= -\frac{1}{2}\bigg(S_{ij} - \frac{1}{N}\sum_{j=1}^n S_{ij} - \frac{1}{N}\sum_{i=1}^N S_{ij} + \frac{1}{N^2} \sum_{i=1}^N \sum_{j=1}^N S_{ij} \bigg).
\end{align*}
Now, since $\Gbf = \Xbf \Xbf^{\top}$, where $\Xbf \in \R^{N \times n_0}$, $\Gbf$ is symmetric and positive semidefinite. It has the spectral decomposition $\Gbf = \Vbf \Lambdabf \Vbf^{\top}$. The Eckart-Young-Mirsky Theorem \cite{eckart1936approximation} states that
\begin{equation}\label{eq:mds_objective}
    \underset{\substack{\Gbf_d \in \R^{N \times N}\\ \mathrm{rank}(\Gbf_d)=d}}{\argmin} ||\Gbf-\Gbf_d||_F^2 = \Vbf_d \Lambdabf_d \Vbf_d^{\top},
\end{equation}
where $\Lambdabf_d = \diag(\lambda_1,\dots,\lambda_d)$ for $\lambda_1 \geq \dots \geq \lambda_d$ the top $k$ eigenvalues of $\Gbf$, and $\Vbf_d \in \R^{N \times d}$ is the matrix with the corresponding top $k$ eigenvectors as columns.

We obtain $d$-dimensional embeddings via the rows of
\begin{equation}\label{eq:mds_embeddings}
    \Phibf_d = \Vbf_d \Lambdabf_d^{\frac{1}{2}}.
\end{equation}
By (\ref{eq:mds_objective}), the dot products between these embeddings are the best approximation to the similarities encoded by the Gram matrix $\Gbf$.

While MDS is theoretically appealing, it has a key limitation: it cannot be applied to new data. If presented with an input $\xbf \in \Xcal \setminus \Dcal_{\xbf}$, the only means of embedding it is to re-run the MDS algorithm. On the other hand, PCA can handle this by projecting each new input onto the subspace spanned by the principal components.

\section{Manifold Learning}\label{section:manifold_learning}

The methods discussed in the previous section are of rather limited scope. Both PCA and MDS assume linear structure in the data, which may not be true for general machine learning problems. Specifically, PCA assumes that there exists a low-dimensional linear subspace (and hence an orthogonal projection from $\Xcal$ onto that subspace) which captures the latent structure of the data. MDS assumes that the Euclidean distances in the input space $\Xcal$ are meaningful and must be preserved.

A counterexample to the above assumptions is the "Swiss Roll dataset" introduced in \cite{tenenbaum2000global} and depicted in Figure \ref{fig:swiss_roll}. In this example, the data resides not in a linear subspace but rather on a nonlinear manifold structure. It is clear that the geodesic distance between two points on the data manifold does not correspond to Euclidean distance.

\begin{figure}
    \centering
    \includegraphics[width=0.5\linewidth]{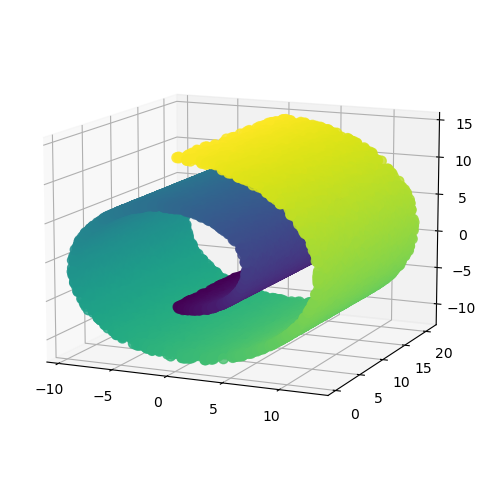}
    \caption{The Swiss Roll dataset from \cite{tenenbaum2000global}. This figure was generated using publicly available code from scikit-learn.\protect\footnotemark}
    \label{fig:swiss_roll}
\end{figure}

\textbf{ISOMAP.} The authors of \cite{tenenbaum2000global} adopt the manifold hypothesis introduced in Section \ref{section:paradigms}. Their ISOMAP algorithm seeks to produce low-dimensional representations that preserve the geodesic distance between inputs as opposed to the Euclidean distance. It does so in two steps: (1) approximating the geodesic distance for every pair of inputs, (2) applying MDS to the approximate geodesic distances.

In Step (1), we assume that the geodesic distance is locally well-approximated by the Euclidean distance. The idea is to approximate a geodesic between two points by a piecewise linear curve, where each line has length at most $\eps$. This is done by constructing a graph $G$, with vertex set equal to the set of distinct inputs in $\Dcal_{\xbf}$. \footnotetext{\url{https://scikit-learn.org/stable/auto_examples/manifold/plot_swissroll.html}} Two vertices $\xbf_i$ and $\xbf_j$ are connected by an edge of weight $||\xbf_i-\xbf_j||_2$ if and only if $||\xbf_i - \xbf_j||_2 < \eps$ for some fixed $\eps > 0$.\footnote{One can also instead choose to connect each vertex to its $K$ nearest neighbours (i.e., the $K$ closest inputs in terms of Euclidean distance).} Then, the approximate geodesic distance between any two points $\xbf_k$ and $\xbf_l$ is taken to be the length of the shortest path between them in $G$. This can be found by Dijkstra's algorithm \cite{dijkstra1959note}, for example.

There is theoretical justification for Step (1) for a certain class of manifolds. We present the main result in \cite{bernstein2000graph} for completeness. Assume throughout that $\Mcal$ is a compact submanifold of $\R^{n_0}$ which is isometric to a convex domain in $\R^d$. Let $d_{\Mcal}$ denote the geodesic distance, $V$ denote the volume of $\Mcal$, and $\eta_d$ denote the volume of the unit ball in $\R^d$.

\begin{definition}[Minimum Radius of Curvature]
    The minimum radius of curvature $r_0$ of $\Mcal$ is defined by
    \begin{equation}
        r_0 = \frac{1}{\max_{\gamma,t}||\gamma''(t)||_2},
    \end{equation}
    where $\gamma$ varies over geodesics in $\Mcal$ with $||g'||_2 = 1$ and $t$ is in the domain of $\gamma$.
\end{definition}

\begin{definition}[Minimum Branch Separation]
    The minimum branch separation of $\Mcal$ is the largest real number $s_0$ such that for all $\xbf, \ybf \in \Mcal$, if $||\xbf-\ybf||_2 < s_0$ then $d_{\Mcal}(x,y) \leq \pi r_0$.
\end{definition}

\begin{theorem}[Theoretical Guarantee for ISOMAP, \cite{bernstein2000graph}]
    Let $\lambda_1,\lambda_2, \mu \in (0,1)$, and $\eps > 0$ such that $\eps < s_0$ and $\eps \leq (2/\pi)r_0 \sqrt{24\lambda_1}$. Let $\Dcal_{\xbf} = \{\xbf_i\}_{i=1}^N$ be sampled i.i.d.\@ from a distribution with density $\alpha$ over $\Mcal$ with minimum value $\alpha_{\min}$. Suppose we construct a graph $G$ on $\Dcal_{\xbf}$ using Step (1) with parameter $\eps$. Let $d_G$ denote the shortest path distance in the graph. Suppose also that
    \begin{equation}
        \alpha_{\min} > \frac{1}{\eta_d \big(\lambda_2 \eps / 8\big)^d} \log\bigg(\frac{V}{\mu \eta_d (\lambda_2 \eps/16)^d}\bigg).
    \end{equation}
    Then, with probability at least $1-\mu$ and neglecting boundary effects,
    \begin{equation}
        (1-\lambda_1)d_{\Mcal}(x,y) \leq d_G(x,y) \leq (1+\lambda_2)d_{\Mcal}(x,y) \hs \forall \, x,y \in \Mcal.
    \end{equation}
\end{theorem}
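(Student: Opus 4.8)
My plan is to follow the proof of \cite{bernstein2000graph}, whose structure I now outline. The two inequalities are obtained by completely different means, so I would treat them separately: the lower bound $d_G(x,y)\geq(1-\lambda_1)d_{\Mcal}(x,y)$ is deterministic and geometric, whereas the upper bound is where the sampling hypothesis on $\alpha_{\min}$ enters.

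\textbf{Lower bound.} The key ingredient is a \emph{short-chord lemma}: whenever $\xbf_i,\xbf_j\in\Mcal$ with $\|\xbf_i-\xbf_j\|_2<\eps$, one has $\|\xbf_i-\xbf_j\|_2\geq(1-\lambda_1)\,d_{\Mcal}(\xbf_i,\xbf_j)$. First, $\eps<s_0$ and minimum branch separation give $d_{\Mcal}(\xbf_i,\xbf_j)\leq\pi r_0$, so $\xbf_i,\xbf_j$ are joined by a unit-speed minimizing geodesic $\gamma$ of length $L=d_{\Mcal}(\xbf_i,\xbf_j)\leq\pi r_0$. A two-step bootstrap sharpens this: the semicircle comparison gives the crude bound $L\leq\frac{\pi}{2}\|\xbf_i-\xbf_j\|_2<\frac{\pi}{2}\eps\leq r_0\sqrt{24\lambda_1}$ (this is where $\eps\leq(2/\pi)r_0\sqrt{24\lambda_1}$ is used), and then integrating $\|\gamma''\|_2\leq 1/r_0$ and comparing $\gamma(L)-\gamma(0)$ with the straight segment yields $\|\xbf_i-\xbf_j\|_2\geq L\big(1-\tfrac{1}{24}(L/r_0)^2\big)\geq(1-\lambda_1)L$. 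Since edges of $G$ exist only when the Euclidean distance is below $\eps$, every edge on a $G$-shortest path from $x$ to $y$ satisfies the lemma; summing and applying the triangle inequality for $d_{\Mcal}$ gives $d_G(x,y)=\sum_k\|\xbf_{i_k}-\xbf_{i_{k+1}}\|_2\geq(1-\lambda_1)\sum_k d_{\Mcal}(\xbf_{i_k},\xbf_{i_{k+1}})\geq(1-\lambda_1)d_{\Mcal}(x,y)$.

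\textbf{Upper bound.} I would first show that, with probability at least $1-\mu$ and away from $\partial\Mcal$, the sample $\Dcal_{\xbf}$ is a $\delta$-net of $\Mcal$ with $\delta:=\lambda_2\eps/8$: cover $\Mcal$ by roughly $V/\big(\eta_d(\delta/2)^d\big)$ balls of radius $\delta/2$, and bound the probability that some ball is missed by a union bound whose each term is at most $\big(1-\alpha_{\min}\eta_d(\delta/2)^d\big)^N\leq\exp(-\alpha_{\min}\eta_d(\delta/2)^d N)$; the stated lower bound on $\alpha_{\min}$ is exactly what forces this to be $\leq\mu$. Given the net, for $x,y\in\Mcal$ take a minimizing geodesic $\gamma$, subdivide it into $M$ arcs each of length just below $\eps-2\delta$, choose a sample point within distance $\delta$ (Euclidean, hence by the short-chord lemma essentially geodesic) of each subdivision node, and verify via the triangle inequality that consecutive chosen samples are within Euclidean distance $\eps$, hence adjacent in $G$. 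The Euclidean length of the resulting $G$-path is at most the total geodesic length plus $2\delta$ per arc, i.e.\ at most $d_{\Mcal}(x,y)+2\delta M$; choosing the subdivision so that $M\approx d_{\Mcal}(x,y)/(\eps-2\delta)$ makes this $\leq(1+\lambda_2)d_{\Mcal}(x,y)$.

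\textbf{Main obstacle.} The delicate part is the upper bound. Tuning the three small quantities (net radius $\delta$, arc length, and the curvature slack) so that the per-arc errors telescope into exactly the factor $1+\lambda_2$ is fiddly bookkeeping, and the covering/union-bound step is where the precise constant in the $\alpha_{\min}$ hypothesis must be extracted. Moreover, the ``neglecting boundary effects'' caveat is genuine: a ball of radius $\delta/2$ centred at a point of $\Mcal$ near $\partial\Mcal$ can have volume far below $\eta_d(\delta/2)^d$, so the net property — and hence this whole argument — must either be restricted to a compact interior region or patched with a separate collar argument near the boundary.
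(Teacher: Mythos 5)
The paper states this theorem without proof, deferring entirely to \cite{bernstein2000graph}, and your outline is a faithful reconstruction of that source's argument: the deterministic short-chord lemma with the $2r_0\sin(\ell/2r_0)\geq \ell(1-\ell^2/24r_0^2)$ bootstrap for the lower bound, and the $\delta$-net-plus-geodesic-subdivision argument (with the union bound over a ball cover absorbing the $\alpha_{\min}$ hypothesis) for the upper bound. The only caveat is minor bookkeeping you already flagged: in \cite{bernstein2000graph} the net radius is $\delta=\lambda_2\eps/4$, with the radii $\lambda_2\eps/8$ and $\lambda_2\eps/16$ in the density condition arising as $\delta/2$ and $\delta/4$ in the covering/packing step, rather than $\delta$ itself being $\lambda_2\eps/8$.
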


The theoretical guarantee of ISOMAP's ability to capture global manifold structure is a major strength of the algorithm. Moreover, ISOMAP has been successfully employed in the healthcare, epidemiology, and natural language processing, among others \cite{ayesha2020overview}. However, ISOMAP also suffers from key weaknesses that can inhibit its success. It is particularly sensitive to the choice of neighbourhood radius $\eps$. A larger value of $\eps$ results in a worse approximation to the manifold structure. Shortest path distances in the graph can be much smaller than the true geodesic distance. This is referred to as "short-circuiting" \cite{balasubramanian2002isomap,van2007dimensionality}. Conversely, choosing $\eps$ too small risks disconnecting the graph, making approximation of all pairwise geodesic distances impossible \cite{van2008visualizing}. Finally — and most importantly to this thesis — ISOMAP is a non-parametric method that only produces features for inputs in $\Dcal_{\xbf}$. Like MDS, it is unable to handle new data in $\Xcal \setminus \Dcal_{\xbf}$.

\textbf{Locally Linear Embeddings (LLE).} Like ISOMAP, the LLE algorithm \cite{roweis2000nonlinear} is a nonlinear dimensionality reduction method inspired by the manifold hypothesis. As above, suppose that the data $\Dcal_{\xbf} = \{\xbf_i\}_{i=1}^N$ lies on a manifold $\Mcal$ embedded in $\R^{n_0}$. Locally, the manifold is approximately linear. LLE takes inspiration from this to approximately express each point $\Dcal_{\xbf}$ as a linear combination of its $K$ nearest neighbours through the following minimization:
\begin{equation}\label{eq:lle_input_space_lincomb}
    \min_{\Wbf \in \R^{N \times N}} \sum_{i=1}^N \bigg|\bigg|\xbf_i - \sum_{j=1}^N W_{ij}\xbf_j \bigg|\bigg|_2^2,
\end{equation}
subject to the constraints that $W_{ij} = 0$ if $\xbf_j$ is not one of the $K$ nearest neighbours to $\xbf_i$ and $\sum_{j=1}^N W_{ij} = 1$ for all $i \in [N]$. Note that the optimal choice of weights is invariant under rotation, re-scaling and translation of the data. This minimization problem may be solved in closed form with the method of Lagrange multipliers \cite{roweis2000nonlinear}.

After finding the optimal $\Wbf$, low-dimensional embeddings $\{\vbf_i\}_{i=1}^N \subseteq \R^d$ are produced by enforcing that the relationship (\ref{eq:lle_input_space_lincomb}) holds in the embedding space with the same weights. That is, a second minimization problem is solved:
\begin{equation}\label{eq:lle_emb_space_lincomb}
    \min_{\vbf_1,\dots,\vbf_N \in \R^d} \sum_{i=1}^N \bigg|\bigg|\vbf_i - \sum_{j=1}^N W_{ij}\vbf_j \bigg|\bigg|_2^2,
\end{equation}
subject to a zero mean constraint $\sum_{i=1}^N \vbf_i = \mathbf{0}$ and an identity covariance constraint $\frac{1}{N}\sum_{i=1}^N \vbf_i \vbf_i^{\top} = \Ibf_d$. The zero mean constraint captures the translation-invariance of the objective (\ref{eq:lle_emb_space_lincomb}). The identity covariance matrix prevents a degenerate solution. In other words, it ensures $\mathrm{span}\{\vbf_1,\dots,\vbf_N\} = \R^d$.

To solve this minimization problem, consider the expansion
\begin{align*}
    \sum_{i=1}^N \bigg|\bigg|\vbf_i - \sum_{j=1}^N W_{ij} \vbf_j \bigg|\bigg|_2^2 &= \sum_{i=1}^N \bigg[||\vbf_i||_2^2 - 2 \sum_{j=1}^N W_{ij} \vbf_i^{\top} \vbf_j + \sum_{j=1}^N \sum_{k=1}^N W_{ij} W_{ik} \vbf_j^{\top} \vbf_k \bigg] \\
    &= \sum_{i=1}^N \sum_{j=1}^N \vbf_i^{\top} \vbf_j \bigg(\delta_{ij} - 2W_{ij} + \sum_{k=1}^N W_{ki}W_{kj}\bigg) \\
    &= \sum_{i=1}^N \sum_{j=1}^N \vbf_i^{\top} \vbf_j \bigg(\delta_{ij} - W_{ij} - W_{ji} + \sum_{k=1}^N W_{ki} W_{kj}\bigg) \\
    &= \sum_{i=1}^N \sum_{j=1}^N M_{ij} (\vbf_i^{\top} \vbf_j) \\
    &= \tr(\Vbf^{\top} \Mbf \Vbf).
\end{align*}
where $\Vbf \in \R^{N \times d}$ is the matrix with $\vbf_1,\dots,\vbf_N$ as rows and $\Mbf \in \R^{N \times N}$ is the symmetric positive semidefinite matrix defined by
\begin{equation}
    M_{ij} = \delta_{ij} - W_{ij} - W_{ji} + \sum_{k=1}^N W_{ki} W_{kj}.
\end{equation}
Hence, our minimization problem becomes
\begin{equation}\label{eq:lle_trace_minimization}
    \min_{\Vbf \in \R^{N \times d}} \tr(\Vbf^{\top} \Mbf \Vbf)
\end{equation}
subject to the constraints $\Vbf^{\top} \onebf = \mathbf{0}$ and $\frac{1}{N} \Vbf^{\top} \Vbf = \Ibf_d$. Here, $\onebf$ denotes the vector of ones in $\R^d$. By the cyclic property of the trace and the fact that $\Mbf$ is PSD, we have
\begin{equation}
    \tr(\Vbf^{\top} \Mbf \Vbf) = \tr(\Mbf \Vbf \Vbf^{\top}) = \tr\big((\Mbf^{1/2})^{\top} \Vbf \Vbf^{\top} \Mbf^{1/2}\big).
\end{equation}
Moreover,
\begin{align*}
    \tr\big((\Mbf^{1/2})^{\top} \Vbf \Vbf^{\top} \Mbf^{1/2}\big) &= \sum_{i=1}^N \big((\Mbf^{1/2})^{\top} \Vbf \Vbf^{\top} \Mbf^{1/2}\big)_{ii} \\
    &= \sum_{i=1}^N \sum_{j=1}^N \sum_{k=1}^N M_{ji}^{1/2} (VV^{\top})_{jk} M_{ki}^{1/2} \\
    &= \sum_{i=1}^N \mbf_i \Vbf \Vbf^{\top} \mbf_i \\
    &= \sum_{i=1}^N ||\Vbf^{\top} \mbf_i||_2^2,
\end{align*}
where $\mbf_i$ is the $i$th column of $\Mbf^{1/2}$.

Let $\lambda_0 \leq \lambda_1 \leq \dots \leq \lambda_{N-1}$ be the eigenvalues of $\Mbf$ and let $\ubf_0,\ubf_1,\dots,\ubf_{N-1}$ be the associated normalized eigenvectors. The same argument as in our discussion of PCA in Section \ref{section:linear_dim_reduction} shows that the optimal columns of $\Vbf$ for the unconstrained (\ref{eq:lle_trace_minimization}) correspond to the eigenvectors $\ubf_0,\dots,\ubf_{d-1}$. However, $\lambda_0 = 0$ since $\onebf$ is an eigenvector of $\Mbf$ associated with the eigenvalue 0. Indeed, this is because the row sums of $\Mbf$ are zero:
\begin{equation}
    \sum_{j=1}^N M_{ij} = \sum_{j=1}^N \bigg(\delta_{ij} - W_{ij} - W_{ji} + \sum_{k=1}^N W_{ki} W_{kj}\bigg) = -1 + \sum_{k=1}^N W_{ki} \sum_{j=1}^N W_{kj} = 0.
\end{equation}

As a consequence, taking $\onebf$ to be a column of $\Vbf$ violates the constraint $\Vbf^{\top} \onebf = 0$. Hence, we instead take the columns of $\Vbf$ to be $\ubf_1,\dots,\ubf_d$. Since $\Mbf$ is symmetric, each of these vectors is orthogonal to $\onebf$, ensuring that the constraint is satisfied. The constraint $\frac{1}{N} \Vbf^{\top} \Vbf = \Ibf_d$ is also satisfied due to orthogonality. Having found the optimal $\Vbf$, the LLE representations for the inputs in $\Dcal_{\xbf}$ are the rows of $\Vbf$. 

In solving the optimization problems (\ref{eq:lle_input_space_lincomb}) and (\ref{eq:lle_emb_space_lincomb}), LLE produces representations which best preserve local neighbourhood structure rather than seeking to capture global manifold structure. Unlike ISOMAP — which involves the expensive step of running Dijkstra's algorithm for every point — LLE does not explicitly seek to approximate all pairwise geodesic distances, nor is there a theoretical guarantee that it does so. This reduces the effect of the aforementioned short-circuiting problem on LLE. Instead, a major shortcoming of LLE is that the identity covariance constraint does not prevent collapse to a near-trivial solution where the majority of inputs in $\Dcal_{\xbf}$ are clustered together in the embedding space and a small minority are distant outliers \cite{van2007dimensionality}.

Just as with MDS and ISOMAP, LLE cannot produce embeddings for inputs outside of $\Dcal_{\xbf}$. To compensate for this, its authors suggests training a neural network to approximate the LLE features on $\Dcal_{\xbf}$. Since the neural network is parametric, it can then produce features for any input in $\Xcal$.

\textbf{Laplacian Eigenmaps.} The Laplacian Eigenmaps method \cite{belkin2003laplacian} produces representations by building a graph structure on the input data using neighbourhood relationships as in ISOMAP and solves a similar optimization problem to (\ref{eq:lle_trace_minimization}) in LLE. 

Once again, suppose $\Dcal_{\xbf}$ is sampled from a manifold $\Mcal$ embedded in $\R^{n_0}$. We define the same graph $G$ as in ISOMAP, but re-weight the edges via the Gaussian kernel with hyperparameter $t > 0$:
\begin{equation}
    W_{ij} = \exp\bigg(-\frac{||\xbf_i-\xbf_j||_2^2}{t}\bigg).
\end{equation}
Let $\Dbf \in \R^{N \times N}$ be the diagonal matrix of row sums (equivalently, column sums) of $\Wbf$:
\begin{equation}
    \Dbf := \diag\bigg(\sum_{j=1}^N W_{j1}, \dots, \sum_{j=1}^N W_{jN}\bigg).
\end{equation}
Then, we define the \textit{graph Laplacian} $\Lbf := \Dbf - \Wbf$. The spectrum of this matrix will arise in the context of the minimization problem
\begin{equation}\label{eq:laplacian_emaps_minimization_problem}
    \min_{\vbf_1,\dots,\vbf_N \in \R^d} \sum_{i=1}^N \sum_{j=1}^N W_{ij} ||\vbf_i - \vbf_j||_2^2.
\end{equation}
This objective captures the intuition that nearby data points in the input space $\R^{n_0}$ should be mapped to nearby points in the embedding space $\R^d$. Manipulation of this objective yields
\begin{align*}
    \sum_{i=1}^N \sum_{j=1}^N W_{ij} ||\vbf_i - \vbf_j||_2^2 &= \sum_{i=1}^N \sum_{j=1}^N W_{ij} \big(||\vbf_i||_2^2 + ||\vbf_j||_2^2 - 2\vbf_i^{\top}\vbf_j\big) \\
    &= \sum_{i=1}^N D_{ii} ||\vbf_i||_2^2 + \sum_{j=1}^N D_{jj} ||\vbf_j||_2^2 - 2 \sum_{i=1}^N \sum_{j=1}^N W_{ij} \vbf_i^{\top} \vbf_j \\
    &= \sum_{i=1}^N \sum_{j=1}^N L_{ij} \vbf_i^{\top} \vbf_j \\
    &= \tr(\Vbf^{\top} \Lbf \Vbf),
\end{align*}
where $\Vbf \in \R^{N \times d}$ is the matrix with $\vbf_1,\dots,\vbf_N$ as rows. From the second-to-last line, we see that $\Lbf$ is PSD. We are in a similar setting to the LLE minimization problem (\ref{eq:lle_trace_minimization}) but with the difference that the rows of $\Wbf$ need not sum to 1. Put another way, the vertices in the graph may not have the same degree. To account for this in the optimization, we modify the translation-invariance and full-rank constraints from LLE. We impose $\Vbf^{\top} \Dbf \onebf = \mathbf{0}$ and $\Vbf^{\top} \Dbf \Vbf = \Ibf_d$. Hence, the optimization problem becomes
\begin{equation}\label{eq:laplacian_opt_problem}
    \underset{\substack{\Vbf^{\top}\Dbf \onebf = \mathbf{0} \\ \Vbf^{\top}\Dbf \Vbf = \Ibf_d}}{\argmin} \tr(\Vbf^{\top} \Lbf \Vbf).
\end{equation}
The optimal $\Vbf$ then arises from the generalized eigenvalue problem
\begin{equation}
    \Lbf \ubf = \lambda \Dbf \ubf.
\end{equation}
As in LLE, this has a zero eigenvalue $\lambda_0$ since $\ubf = \onebf$ is a solution. Hence, we take the eigenvectors $\ubf_1,\dots,\ubf_d$ associated with the next smallest eigenvalues $\lambda_1,\dots,\lambda_d$ to be the columns of $\Vbf$.

The Laplacian Eigenmaps approach on discrete datasets is a special case of a more general optimization problem involving the Laplace-Beltrami operator on the manifold $\Mcal$. In the discrete setting, we sought embeddings which preserve the graph structure on inputs. In the general case, we seek a function $\phibf: \Mcal \to \R^d$ that preserves local neighbourhoods. This is achieved by minimizing a continuous generalization of (\ref{eq:laplacian_emaps_minimization_problem}):
\begin{equation}
    \min_{||f||_{L^2(\Mcal; \mu)} = 1} \int_{\Mcal} ||\nabla f(\xbf)||^2 \, d\mu(\xbf),
\end{equation}
where $\mu$ is the standard measure on the Riemannian manifold $\Mcal$. The solution to this problem is to take $d$ eigenfunctions associated with the smallest nonzero eigenvalues of the Laplace-Beltrami operator, $L(f) := - \mathrm{div} \nabla f$. See \cite{belkin2003laplacian} for details.

Laplacian Eigenmaps is similar to LLE in that it prioritizes the preservation of distances for neighbouring points. The resulting optimization problem (\ref{eq:laplacian_opt_problem}) is akin to (\ref{eq:lle_trace_minimization}). As a result, like LLE, the Laplacian Eigenmaps method avoids the cost associated with running Dijkstra's algorithm, but is susceptible to collapse to a near-trivial solution \cite{van2007dimensionality,van2008visualizing}. We also note that, like the other approaches in this section, the discrete Laplacian Eigenmaps method does not produce features for inputs in $\Xcal \setminus \Dcal_{\xbf}$.

\section{Kernel Approximation}\label{section:kernel_approx}

In the previous two sections, we have introduced methods that aim to produce representations which map nearby inputs to nearby features, as measured by a notion of (dis)similarity or distance. Recall from Chapter \ref{chapter:rkhs} that positive semidefinite kernels provide a formalism for similarity between inputs.

In Section \ref{section:linear_dim_reduction}, we saw that classical MDS recovers features from the dot product kernel $\xbf \cdot \zbf$ defined on the input space $\Xcal \subseteq \R^{n_0}$ through a spectral decomposition of its associated Gram matrix on $\Dcal_{\xbf}$. In fact, features can be recovered from any PSD kernel $K$ in this way (consider (\ref{eq:mds_objective}) and (\ref{eq:mds_embeddings}) and replace $\Gbf$ by the Gram matrix of $K$). However, neither MDS nor the above manifold learning methods produce features for inputs in $\Xcal \setminus \Dcal_{\xbf}$. Additionally, computing the spectral decomposition of an $N \times N$ matrix has time complexity $O(N^3)$ \cite{trefethen2022numerical}. 

In this section, we discuss methods that go beyond Gram matrices to approximate PSD kernels on their entire domain $\Xcal$ rather than just the inputs in $\Dcal_{\xbf}$.

\textbf{Nyström Method.} Recall Mercer's Theorem (Theorem \ref{thm:mercer}), which states that any PSD kernel $K: \Xcal \times \Xcal \to \R$ satisfying Mercer's condition (\ref{eq:mercer_condition}) can be expressed as
\begin{equation}
    K(x,z) = \sum_{j=1}^{\infty} \lambda_j \phi_j(x) \phi_j(z),
\end{equation}
where $\{\lambda_j\}_{j=1}^{\infty}$ and $\{\phi_j\}_{j=1}^{\infty}$ are the eigenvalues and (orthonormal) eigenfunctions of the integral operator
\begin{equation}
    T_K(f) = \int_{\Xcal} K(x,z)f(z) \, p_x(dz).
\end{equation}
A natural choice of embedding map $\phibf: \Xcal \to \R^d$ is then the concatenation of eigenfunctions $\phi_1,\dots,\phi_d$ associated with the $d$ largest eigenvalues $\lambda_1,\dots,\lambda_d$ of $T_K$. That is, for $x \in \Xcal$, we have
\begin{equation}
    \phibf(x) = \big[\phi_1(x),\dots,\phi_d(x)\big].
\end{equation}
The Nyström method, introduced in the context of kernel methods by \cite{williams2000using}, provides a means of approximating the Mercer eigenfunctions. It approximates the eigenvalue problem for $T_K$ with Monte Carlo integration. Let $M \in \N$ and draw $M$ samples $x_1,\dots,x_M$ i.i.d.\@ from $p$ (or simply take $M$ samples from $\Dcal_x$). Then, we approximate the equation $T_K \phi_i = \lambda_i \phi_i$ for $i \in [d]$ with
\begin{equation}\label{eq:nystrom_approximate_eigenproblem}
    \frac{1}{M} \sum_{k=1}^M K(x_k,z) \phi_i(x_k) \approx \lambda_i \phi_i(z) \hs\hs \forall \, z \in \Xcal.
\end{equation}
Furthermore, we approximate the orthogonality constraint
\begin{equation}
    \int_{\Xcal} \phi_i(x)\phi_j(x) \, p_x(dz) = \delta_{i,j}
\end{equation}
by
\begin{equation}
    \frac{1}{M} \sum_{k=1}^M \phi_i(x_k)\phi_j(x_k) \approx \delta_{i,j}.
\end{equation}
Define $\vbf_i = [\phi_i(x_1),\dots,\phi_i(x_M)]^{\top}$. From the above, $||\vbf_i||_2 \approx \sqrt{M}$. Let $\Gbf \in \R^{M \times M}$ be the Gram matrix of $K$ associated with $x_1,\dots,x_M$. Then, from (\ref{eq:nystrom_approximate_eigenproblem}), we have
\begin{equation}\label{eq:nystrom_gram_eigenproblem}
    \Gbf \vbf_i \approx (M\lambda_i)\vbf_i.
\end{equation}
Let $\lambda_1^{(\Gbf)} \geq \dots \geq \lambda_M^{(\Gbf)}$ and $\ubf_1^{(\Gbf)},\dots,\ubf_M^{(\Gbf)}$ be the eigenvalues and associated (orthonormal) eigenvectors of $\Gbf$. Then, from (\ref{eq:nystrom_gram_eigenproblem}), we have the \textit{Nyström approximations}
\begin{equation}
    \lambda_i \approx \frac{\lambda_i^{(\Gbf)}}{M} \hs\hs \vbf_i \approx \sqrt{M} \ubf_i.
\end{equation}
for all $i \in [d]$.

To generalize the approximation to inputs $z \in \Xcal$ which were not in the sample, we plug back into (\ref{eq:nystrom_approximate_eigenproblem}) to obtain
\begin{equation}
    \phi_i(z) \approx \frac{\sqrt{M}}{\lambda_i^{(\Gbf)}} \sum_{k=1}^M K(x_k,z) u_{i,k}^{(\Gbf)}.
\end{equation}
for all $i \in [d]$, where $u_{i,k}^{(\Gbf)}$ denotes the $k$th entry of the eigenvector $\ubf_i^{(\Gbf)}$.

Practically speaking, this means that we can approximately compute a Mercer feature map using $M$ data samples. We think of $M$ as being much smaller than the size $N$ of the available dataset $\Dcal_{\xbf}$. The eigendecomposition of $\Gbf$ has an $O(M^3)$ cost and the evaluation of eigenfunctions on new data has an $O(M)$ cost. 

Empirically, \cite{williams2000using} find that the Nyström method produces performant features for small-scale classification tasks. However, they do not provide a theoretical performance guarantee for the method. It is unclear how many samples $M$ are required to ensure low error (with high probability) and if any assumptions other than Mercer's condition must be satisfied. The authors of \cite{drineas2005nystrom} propose a similar method with a more sophisticated sampling strategy and provide a theoretical performance guarantee in the context of approximating the Gram matrix of a large dataset.

\textbf{Random Fourier Features.} The seminal work of \cite{rahimi2007random} proposes a method for approximating shift-invariant PSD kernels without the use of training data.

\begin{definition}[Shift-Invariant Kernel]
    A kernel $K: \Xcal \times \Xcal \to \R$ is \textbf{shift-invariant} if there exists a univariate function $g: \Xcal \to \R$ such that
    \begin{equation}
        K(x,z) = g(x-z) \hs\hs \forall \, x,z \in \Xcal.
    \end{equation}
    In other words, a shift-invariant kernel depends only on the difference between its arguments.
\end{definition}

We have the following key result from harmonic analysis concerning shift-invariant kernels.

\begin{theorem}[Bochner]
    A continuous shift-invariant kernel $K: \R^{n_0} \times \R^{n_0} \to \R$ is positive semidefinite if and only if it is the Fourier transform of a non-negative measure $\mu$ on $\R^{n_0}$, i.e.,
    \begin{equation}
        K(\xbf,\zbf) = g(\xbf-\zbf) = \int_{\R^{n_0}} e^{i\xibf^{\top}(\xbf-\zbf)} \, d\mu(\xibf) \hs\hs \forall \, \xbf,\zbf \in \R^{n_0}.
    \end{equation}
    Moreover, if $K$ is properly scaled, then the measure $\mu$ in the above is a probability measure.
\end{theorem}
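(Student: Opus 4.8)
The plan is to prove both implications. Sufficiency is a one-line computation; necessity is the substantive direction and I would obtain it through a regularization argument culminating in a tightness/compactness step.

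\textbf{Sufficiency.} Suppose $K(\xbf,\zbf) = g(\xbf-\zbf) = \int_{\R^{n_0}} e^{i\xibf^{\top}(\xbf-\zbf)}\,d\mu(\xibf)$ for a non-negative measure $\mu$. Since $K$ is real-valued we may take $\mu$ symmetric, so that $K(\xbf,\zbf) = \int_{\R^{n_0}} \cos(\xibf^{\top}(\xbf-\zbf))\,d\mu(\xibf)$. For any $x_1,\dots,x_n \in \R^{n_0}$ and $a \in \R^n$, expanding $\cos(\xibf^{\top}(x_j-x_k))$ via the angle-difference identity gives
\[
    \sum_{j,k} a_j a_k K(x_j,x_k) = \int_{\R^{n_0}} \Bigl[\Bigl(\textstyle\sum_j a_j \cos(\xibf^{\top}x_j)\Bigr)^2 + \Bigl(\textstyle\sum_j a_j \sin(\xibf^{\top}x_j)\Bigr)^2\Bigr]\,d\mu(\xibf) \ge 0,
\]
so every Gram matrix of $K$ is PSD.

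\textbf{Necessity.} Let $K(\xbf,\zbf) = g(\xbf-\zbf)$ be continuous, shift-invariant, and PSD. Then $g$ is even (by symmetry of $K$) and bounded, $|g(t)| \le g(0)$ (from the $2\times 2$ Gram matrix on $\{0,t\}$); and if $g(0)=0$ that same matrix forces $g\equiv 0$ and the claim is trivial, so assume $g(0)>0$. First I would upgrade the discrete positive-semidefiniteness to its integral form: because $g$ is real, even, and continuous, Riemann-sum approximation shows $\int\int g(x-y)f(x)\overline{f(y)}\,dx\,dy \ge 0$ for every compactly supported continuous $f$ (with possibly complex values). Since $g$ need not be integrable, I would regularize by $g_{\epsilon}(x) := g(x)\,e^{-\epsilon\|x\|_2^2/2}$; the Gaussian factor is itself a Fourier transform of a Gaussian, hence positive definite, so $g_{\epsilon}$ still satisfies the integral inequality and now lies in $L^1(\R^{n_0})$, so $\widehat{g_{\epsilon}}$ is a bona fide continuous function. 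Testing the integral inequality against $f(x)=e^{-i\xibf^{\top}x}\psi(x)$ with $\psi$ an $L^2$-normalized indicator of a large box and letting the box grow shows $\widehat{g_{\epsilon}}(\xibf)\ge 0$ for all $\xibf$. Since $\widehat{g_{\epsilon}}\ge 0$ and $g_{\epsilon}$ is continuous at $0$, a Gaussian-mollifier/monotone-convergence argument gives $\widehat{g_{\epsilon}}\in L^1$, the inversion formula $g_{\epsilon}(x)=(2\pi)^{-n_0}\int e^{i\xibf^{\top}x}\widehat{g_{\epsilon}}(\xibf)\,d\xibf$, and in particular $\int \widehat{g_{\epsilon}}(\xibf)\,d\xibf=(2\pi)^{n_0}g_{\epsilon}(0)=(2\pi)^{n_0}g(0)$, a constant independent of $\epsilon$.

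\textbf{Passing to the limit.} Normalize $d\mu_{\epsilon}(\xibf) := \bigl((2\pi)^{n_0}g(0)\bigr)^{-1}\widehat{g_{\epsilon}}(\xibf)\,d\xibf$, a probability measure. From the inversion formula, $g_{\epsilon}(0)-\mathrm{Re}\,g_{\epsilon}(t) = (2\pi)^{n_0}g(0)\int (1-\cos(\xibf^{\top}t))\,d\mu_{\epsilon}(\xibf)$; since $g_{\epsilon}\to g$ uniformly on compact sets and $g$ is continuous at $0$, averaging this identity over $t$ in a small cube (exactly as in the proof of Lévy's continuity theorem) yields uniform tightness of $\{\mu_{\epsilon}\}_{\epsilon}$ along a sequence $\epsilon\downarrow 0$. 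By Prokhorov's theorem a subsequence $\mu_{\epsilon_k}\rightharpoonup\mu$ for a probability measure $\mu$; testing weak convergence against the bounded continuous function $\xibf\mapsto e^{i\xibf^{\top}x}$ for each fixed $x$, and using $g_{\epsilon_k}\to g$, gives $g(x)=(2\pi)^{n_0}g(0)\int e^{i\xibf^{\top}x}\,d\mu(\xibf)$. Hence $K(\xbf,\zbf)=\int e^{i\xibf^{\top}(\xbf-\zbf)}\,d\bigl((2\pi)^{n_0}g(0)\,\mu\bigr)(\xibf)$, a non-negative measure, which is a probability measure precisely when $K$ is scaled so that $g(0)=K(x,x)=1$; symmetry of $\mu$ is automatic since $g$ is real. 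I expect the main obstacle to be this final step: ensuring that no mass escapes to infinity when extracting the limit measure, which is exactly what the tightness estimate — squeezed out of continuity of $g$ at the origin — is there to guarantee, together with verifying that the weak limit reproduces $g$ itself rather than a rescaled or shifted version.
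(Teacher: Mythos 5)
The paper states Bochner's theorem without proof --- it is quoted as a known result from harmonic analysis in the lead-up to Random Fourier Features --- so there is no in-paper argument to compare against. Your proof is the standard one and is essentially correct. The sufficiency direction (symmetrize $\mu$, expand the cosine, get a sum of squares) is fine; note that even without symmetrizing one gets $\sum_{j,k} a_j \overline{a_k} K(x_j,x_k) = \int |\sum_j a_j e^{i\xibf^{\top}x_j}|^2\,d\mu \ge 0$ directly. The necessity direction correctly assembles the classical ingredients: boundedness and evenness of $g$ from $2\times 2$ Gram matrices, passage from discrete to integral positive-definiteness by Riemann sums, Gaussian damping to land in $L^1$ (using that a product of positive definite functions is positive definite), non-negativity of $\widehat{g_{\epsilon}}$ by testing against modulated box functions, integrability of $\widehat{g_{\epsilon}}$ by Gaussian mollification plus Fatou, and finally tightness and Prokhorov to extract the limit measure --- which is exactly the content of L\'evy's continuity theorem applied to the characteristic functions $g_{\epsilon}/g(0) \to g/g(0)$, continuous at the origin. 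The one genuine (though harmless) defect is a constant-tracking slip at the end: with your normalization $d\mu_{\epsilon} = \bigl((2\pi)^{n_0}g(0)\bigr)^{-1}\widehat{g_{\epsilon}}\,d\xibf$ and the inversion formula $g_{\epsilon}(x) = (2\pi)^{-n_0}\int e^{i\xibf^{\top}x}\widehat{g_{\epsilon}}(\xibf)\,d\xibf$, the characteristic function of $\mu_{\epsilon}$ is $g_{\epsilon}(x)/g(0)$, so the limit identity should read $g(x) = g(0)\int e^{i\xibf^{\top}x}\,d\mu(\xibf)$ rather than carrying the extra factor $(2\pi)^{n_0}$; the same spurious factor appears in your tightness identity. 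Correcting it changes nothing structurally: the representing measure is $g(0)\,\mu$, which is a probability measure exactly when $K(x,x)=g(0)=1$, as the theorem's ``properly scaled'' clause intends.
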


Henceforth, suppose we are in the setting of Bochner's Theorem with an appropriately scaled kernel $K$ so that $\mu$ is a probability measure. We can recover $\mu$ by interpreting $K$ as a univariate function and computing its Fourier transform. Since $K$ is real-valued, we have 
\begin{equation}
    K(\xbf,\zbf) = \int_{\R^{n_0}} e^{i\xibf^{\top} (\xbf-\zbf)} \, d\mu(\xibf) = \int_{\R^{n_0}} \cos\big(\xibf^{\top}(\xbf-\zbf)\big) \, d\mu(\xibf) \hs\hs \forall \, \xbf,\zbf \in \R^{n_0}.
\end{equation}

Using the trigonometric sum identity, we have for all $\xbf,\zbf \in \R^{n_0}$,
\begin{equation}
    \cos\big(\xibf^{\top}(\xbf-\zbf)\big) = \cos(\xibf^{\top}\xbf)\cos(\xibf^{\top}\zbf) + \sin(\xibf^{\top}\xbf)\sin(\xibf^{\top}\zbf) = 
    \begin{bmatrix}
        \cos(\xibf^{\top} \xbf) \\
        \sin(\xibf^{\top} \xbf)
    \end{bmatrix}
    \cdot
    \begin{bmatrix}
        \cos(\xibf^{\top} \zbf) \\
        \sin(\xibf^{\top} \zbf)
    \end{bmatrix}.
\end{equation}

Thus, defining the two-dimensional representation $\phibf_{\xibf}(\xbf) = [\cos(\xibf^{\top} \xbf), \sin(\xibf^{\top} \xbf)]^{\top}$ for all $\xbf \in \R^{n_0}$, we have
\begin{equation}
    K(\xbf,\zbf) = \E_{\xibf \sim \mu}\big[\phibf_{\xibf}(\xbf) \cdot \phibf_{\xibf}(\zbf)] \hs\hs \forall \, \xbf,\zbf \in \R^n.
\end{equation}
In this way, we have obtained an unbiased estimator for $K$ at all pairs of points in the input space $\R^n$ using the dot product  between two-dimensional representations. To reduce the variance of this estimator, we draw $d$ samples $\xibf_1,\dots,\xibf_d$ i.i.d.\@ from $\mu$. Then, for all $\xbf \in \R^{n_0}$, we have the $2d$-dimensional representation
\begin{equation}\label{eq:rff}
    \phibf(\xbf) = \frac{1}{\sqrt{d}} \big[\cos(\xibf_1^{\top}\xbf),\dots,\cos(\xibf_d^{\top}\xbf),\sin(\xibf_1^{\top}\xbf),\dots,\sin(\xibf_d^{\top}\xbf)\big].
\end{equation}
Then, we have the dot product
\begin{equation}
    \phibf(\xbf)^{\top}\phibf(\zbf) = \frac{1}{d}\sum_{j=1}^d \phibf_{\xibf_j}(\xbf)\phibf_{\xibf_j}(\zbf).
\end{equation}

The authors of \cite{rahimi2007random} prove pointwise and uniform approximation guarantees for the representations (\ref{eq:rff}), which they call \textit{Random Fourier Features}. The key technical tool is Hoeffding's inequality, a standard concentration of measure result \cite{vershynin2018high}.

\begin{lemma}[Hoeffding's Inequality]
    Let $X_1,\dots,X_d$ be independent real-valued random variables on a probability space $(\Omega, \Fcal, \Prob)$. Assume that there exist $m_i, M_i \in \R$ such that $m_i \leq X_i \leq m_i$ for every $1 \leq i \leq d$. Then, for any $\eps > 0$,
    \begin{equation}
        \Prob\bigg(\bigg|\sum_{i=1}^d \big(X_i - \E[X_i]\big)\bigg| \geq \eps\bigg) \leq 2\exp\bigg(-\frac{2\eps^2}{\sum_{i=1}^d (M_i-m_i)^2}\bigg).
    \end{equation}
\end{lemma}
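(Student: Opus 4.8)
The plan is to use the exponential-moment (Chernoff) method. First I would pass to the centered variables $Y_i := X_i - \E[X_i]$, which satisfy $a_i \leq Y_i \leq b_i$ with $a_i := m_i - \E[X_i]$ and $b_i := M_i - \E[X_i]$, so that $b_i - a_i = M_i - m_i$. It then suffices to bound the upper tail $\Prob(\sum_i Y_i \geq \eps)$ and the lower tail $\Prob(\sum_i Y_i \leq -\eps)$ separately; applying the upper-tail bound to $-Y_i$ (which is also mean-zero and lies in an interval of the same length) handles the lower tail, and a union bound over the two events produces the factor of $2$ in the statement.

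For the upper tail, fix $s > 0$, apply Markov's inequality to the nonnegative random variable $\exp(s\sum_i Y_i)$, and factor the expectation using independence:
\begin{equation*}
\Prob\bigg(\sum_{i=1}^d Y_i \geq \eps\bigg) \leq e^{-s\eps}\,\E\bigg[\exp\bigg(s\sum_{i=1}^d Y_i\bigg)\bigg] = e^{-s\eps}\prod_{i=1}^d \E\big[e^{sY_i}\big].
\end{equation*}
Everything now reduces to bounding the moment generating function of a single bounded, mean-zero random variable.

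The key lemma (Hoeffding's lemma) states: if $Y$ satisfies $\E[Y] = 0$ and $a \leq Y \leq b$ almost surely, then $\E[e^{sY}] \leq \exp\big(s^2(b-a)^2/8\big)$ for all $s \in \R$. I would prove it via convexity of $t \mapsto e^{st}$: writing $y \in [a,b]$ as the convex combination $\frac{b-y}{b-a}\,a + \frac{y-a}{b-a}\,b$ gives $e^{sy} \leq \frac{b-y}{b-a}e^{sa} + \frac{y-a}{b-a}e^{sb}$; taking expectations and using $\E[Y] = 0$ bounds $\E[e^{sY}]$ by $e^{\psi(s)}$ for an explicit smooth function $\psi$ with $\psi(0) = \psi'(0) = 0$. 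A short computation rewrites $\psi''$ in the form $q(1-q)(b-a)^2$ for some $q \in [0,1]$, hence $\psi''(s) \leq (b-a)^2/4$ everywhere, and Taylor's theorem with Lagrange remainder gives $\psi(s) \leq s^2(b-a)^2/8$.

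Combining, $\prod_i \E[e^{sY_i}] \leq \exp\big(\frac{s^2}{8}\sum_i (M_i-m_i)^2\big)$, so the upper tail is at most $\exp\big(-s\eps + \frac{s^2}{8}\sum_i (M_i-m_i)^2\big)$. The exponent is a quadratic in $s$ minimized at $s^\star = 4\eps / \sum_i (M_i-m_i)^2 > 0$, and substituting back gives $\exp\big(-2\eps^2/\sum_i (M_i-m_i)^2\big)$; the lower tail contributes an identical bound, yielding the factor $2$. The main obstacle is the proof of Hoeffding's lemma, and within it the estimate $\psi'' \leq (b-a)^2/4$; the reduction via Markov's inequality, the use of independence, and the final optimization over $s$ are all routine.
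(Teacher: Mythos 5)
Your proof is correct and is the standard Chernoff-method argument: center the variables, bound each moment generating function via Hoeffding's lemma (proved by convexity and the estimate $\psi'' \leq (b-a)^2/4$), optimize over $s$, and union-bound the two tails. The paper states this lemma without proof, citing a standard reference, and the cited proof is exactly the one you give, so there is nothing to reconcile; your computation of the optimal $s^\star = 4\eps/\sum_i (M_i - m_i)^2$ and the resulting exponent $-2\eps^2/\sum_i(M_i-m_i)^2$ checks out.
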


\begin{theorem}[Pointwise Kernel Approximation Guarantee, \cite{rahimi2007random}]
    Let $\xbf,\zbf \in \R^n$ and let $\phibf: \R^{n_0} \to \R^{2d}$ be the random Fourier feature map specified in (\ref{eq:rff}). Then, for any $\eps > 0$,
    \begin{equation}
        \Prob\big(|\phibf(\xbf)^{\top}\phibf(\zbf) - K(\xbf,\zbf)| \geq \eps\big) \leq 2\exp\bigg(-\frac{d\eps^2}{2}\bigg).
    \end{equation}
\end{theorem}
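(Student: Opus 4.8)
The plan is to reduce the statement directly to Hoeffding's inequality. First I would expand the dot product of the feature maps: by the definition in (\ref{eq:rff}) and the trigonometric identity already derived just above the theorem,
\begin{equation*}
    \boldsymbol{\phi}(\xbf)^{\top}\boldsymbol{\phi}(\zbf) = \frac{1}{d}\sum_{j=1}^d \big[\cos(\xibf_j^{\top}\xbf)\cos(\xibf_j^{\top}\zbf) + \sin(\xibf_j^{\top}\xbf)\sin(\xibf_j^{\top}\zbf)\big] = \frac{1}{d}\sum_{j=1}^d \cos\big(\xibf_j^{\top}(\xbf-\zbf)\big).
\end{equation*}
Then I would set $X_j := \cos(\xibf_j^{\top}(\xbf-\zbf))$ for $j \in [d]$. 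Since the $\xibf_j$ are i.i.d.\@ draws from $\mu$, the $X_j$ are i.i.d.\@ real-valued random variables, and by Bochner's Theorem (with $\mu$ the probability measure associated with the scaled kernel $K$) each has mean $\E_{\xibf \sim \mu}[\cos(\xibf^{\top}(\xbf-\zbf))] = K(\xbf,\zbf)$.

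Next I would record the boundedness needed for Hoeffding: since cosine takes values in $[-1,1]$, we have $-1 \leq X_j \leq 1$ for every $j$, so $M_j - m_j = 2$. With this, $\boldsymbol{\phi}(\xbf)^{\top}\boldsymbol{\phi}(\zbf) - K(\xbf,\zbf) = \frac{1}{d}\sum_{j=1}^d (X_j - \E[X_j])$, and the event $\{|\boldsymbol{\phi}(\xbf)^{\top}\boldsymbol{\phi}(\zbf) - K(\xbf,\zbf)| \geq \eps\}$ coincides with $\{|\sum_{j=1}^d (X_j - \E[X_j])| \geq d\eps\}$. Applying Hoeffding's Inequality with this deviation parameter gives
\begin{equation*}
    \Prob\bigg(\bigg|\sum_{j=1}^d (X_j - \E[X_j])\bigg| \geq d\eps\bigg) \leq 2\exp\bigg(-\frac{2(d\eps)^2}{\sum_{j=1}^d 2^2}\bigg) = 2\exp\bigg(-\frac{2d^2\eps^2}{4d}\bigg) = 2\exp\bigg(-\frac{d\eps^2}{2}\bigg),
\end{equation*}
which is exactly the claimed bound.

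There is essentially no hard step here; the proof is a bookkeeping exercise. The only points requiring mild care are (i) invoking Bochner's Theorem correctly so that the summands have the right mean, which relies on $K$ being real-valued so that the complex exponential can be replaced by its cosine (already noted in the text preceding the theorem), and (ii) keeping the constants straight in Hoeffding's bound — in particular that the per-term range is $2$, not $1$, and that the threshold is $d\eps$ rather than $\eps$ because of the $1/d$ normalization in the feature map. Tracking these two factors is what produces the clean exponent $-d\eps^2/2$.
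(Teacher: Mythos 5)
Your proposal is correct and follows exactly the route the paper takes: the paper's proof simply cites Hoeffding's inequality together with the boundedness $\phibf_{\xibf}(\xbf)^{\top}\phibf_{\xibf}(\zbf) \in [-1,1]$, and your write-up just fills in the bookkeeping (the mean via Bochner, the range of $2$, and the threshold $d\eps$) that the paper declares immediate. No gaps.
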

\begin{proof}
    This follows immediately from Hoeffding's inequality and the fact that $\phibf_{\xibf}(\xbf)^{\top}\phibf_{\xibf}(\zbf) \in [-1,1]$ for any $\xibf \in \R^n$.
\end{proof}

\begin{theorem}[Uniform Kernel Approximation Guarantee, \cite{rahimi2007random}]
    Let $\Mcal \subseteq \R^{n_0}$ be compact with diameter $\diam(\Mcal)$ and $\phibf: \R^{n_0} \to \R^{2d}$ be the random Fourier feature map specified in (\ref{eq:rff}). Then, for any $\eps > 0$,
    \begin{equation}
        \Prob\bigg(\sup_{\xbf,\zbf \in \Mcal} \big|\phibf(\xbf)^{\top} \phibf(\zbf) - K(\xbf,\zbf)\big| \geq \eps\bigg) \leq 2^8 \bigg(\frac{\sigma \diam(\Mcal)}{\eps}\bigg)^2 \exp\bigg(-\frac{d\eps^2}{4(n+2)}\bigg),
    \end{equation}
    where $\sigma^2 = \E_{\xibf \sim \mu}[\xibf^{\top} \xibf]$. Moreover, $\sup_{\xbf,\zbf \in \Mcal} \big|\phibf(\xbf)^{\top} \phibf(\zbf) - K(\xbf,\zbf)\big| \leq \eps$ with any constant probability if
    \begin{equation}
        d = \Omega\bigg(\frac{d}{\eps^2} \log \frac{\sigma \diam(\Mcal)}{\eps}\bigg).
    \end{equation}
\end{theorem}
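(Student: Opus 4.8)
The plan is the classical $\eps$-net argument for uniform laws.

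\textbf{Reduction to a function of the difference.} Since $\phibf_{\xibf}(\xbf)^{\top}\phibf_{\xibf}(\zbf)=\cos\big(\xibf^{\top}(\xbf-\zbf)\big)$ and, by Bochner's theorem, $K(\xbf,\zbf)=g(\xbf-\zbf)$, the error $f(\Delta):=\phibf(\xbf)^{\top}\phibf(\zbf)-K(\xbf,\zbf)$ is a well-defined function of the single variable $\Delta=\xbf-\zbf$, namely $f(\Delta)=\tfrac1d\sum_{j=1}^d\cos(\xibf_j^{\top}\Delta)-\int_{\R^{n_0}}\cos(\xibf^{\top}\Delta)\,d\mu(\xibf)$. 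As $\xbf,\zbf$ range over $\Mcal$, $\Delta$ ranges over $\Mcal_\Delta:=\{\xbf-\zbf:\xbf,\zbf\in\Mcal\}$, which is contained in the Euclidean ball of radius $2\diam(\Mcal)$ about the origin; so it suffices to bound $\Prob\big(\sup_{\Delta\in\Mcal_\Delta}|f(\Delta)|\ge\eps\big)$.

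\textbf{Net, pointwise bound, Lipschitz bound.} Fix a radius $\rho>0$ (optimized later) and cover $\Mcal_\Delta$ by $T\le(4\diam(\Mcal)/\rho)^{n_0}$ balls of radius $\rho$ with centres $\Delta_1,\dots,\Delta_T$ (volumetric bound). On the net, the pointwise guarantee (Hoeffding applied to the i.i.d.\ summands, which lie in $[-1,1]$) gives $\Prob(|f(\Delta_i)|\ge\eps/2)\le 2\exp(-d\eps^2/8)$, hence by a union bound $\Prob(\max_i|f(\Delta_i)|\ge\eps/2)\le 2T\exp(-d\eps^2/8)$. Between net points we exploit smoothness: $f$ is differentiable with $\nabla f(\Delta)=\E_{\xibf\sim\mu}[\xibf\sin(\xibf^{\top}\Delta)]-\tfrac1d\sum_j\xibf_j\sin(\xibf_j^{\top}\Delta)$, so, using $|\sin|\le1$ to bound each term by $\|\xibf_j\|_2$, the Lipschitz constant $L_f=\sup_\Delta\|\nabla f(\Delta)\|_2$ satisfies $\E[L_f^2]\le\sigma^2$ with $\sigma^2=\E_{\xibf\sim\mu}[\xibf^{\top}\xibf]$. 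By Markov's inequality, $\Prob(L_f\ge\eps/(2\rho))\le 4\sigma^2\rho^2/\eps^2$.

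\textbf{Combine and optimize.} On the complement of the two failure events, every $\Delta\in\Mcal_\Delta$ is within $\rho$ of some $\Delta_i$, whence $|f(\Delta)|\le|f(\Delta_i)|+L_f\rho<\eps/2+\eps/2=\eps$. Therefore
\[
    \Prob\Big(\sup_{\Delta\in\Mcal_\Delta}|f(\Delta)|\ge\eps\Big)\;\le\;2\Big(\tfrac{4\diam(\Mcal)}{\rho}\Big)^{n_0}\exp\!\Big(-\tfrac{d\eps^2}{8}\Big)+\tfrac{4\sigma^2\rho^2}{\eps^2}.
\]
The right-hand side has the form $a\rho^{-n_0}+b\rho^2$; minimizing over $\rho$ (the optimum is $\rho^{n_0+2}\propto a/b$) and simplifying the resulting constants exactly as in \cite{rahimi2007random} yields the claimed bound $2^8(\sigma\diam(\Mcal)/\eps)^2\exp\big(-d\eps^2/(4(n+2))\big)$ (with $n=n_0$). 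The sample-complexity statement then follows by setting this bound equal to a fixed constant and solving for $d$.

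\textbf{Main obstacle.} The routine parts are the covering-number estimate, the union bound, Markov's inequality, and the $\rho$-optimization. The delicate step is the Lipschitz/gradient estimate: one must check that $f$ extends smoothly to the convex ball containing $\Mcal_\Delta$ (so that $|f(\Delta)-f(\Delta')|\le L_f\|\Delta-\Delta'\|_2$), that $L_f=\sup_\Delta\|\nabla f(\Delta)\|_2$ is a genuine (measurable, a.s.\ finite) random variable attained on the compact set, and that the bound $\E[L_f^2]\le\sigma^2$ survives the supremum over $\Delta$ — this is exactly where the uniform control $\sup_\Delta\big\|\tfrac1d\sum_j\xibf_j\sin(\xibf_j^{\top}\Delta)\big\|_2\le\tfrac1d\sum_j\|\xibf_j\|_2$ does the work.
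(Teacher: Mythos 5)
The thesis itself does not prove this theorem --- it is stated with a pointer to \cite{rahimi2007random} and only the pointwise version is proved (via Hoeffding) --- so there is no in-paper proof to compare against; what you have written is, in structure, exactly the net argument of Claim~1 in the cited source. The reduction to a function of $\Delta=\xbf-\zbf$, the covering bound $T\le(4\diam(\Mcal)/\rho)^{n_0}$, Hoeffding plus a union bound on the net with accuracy $\eps/2$, the Markov bound on the Lipschitz constant, and the optimization of $a\rho^{-n_0}+b\rho^2$ are all the right steps and do yield a bound of the claimed form. The one place where your argument as written does not quite deliver the stated constant is the estimate $\E[L_f^2]\le\sigma^2$: the triangle-inequality route you describe bounds $L_f$ by $\sup_\Delta\|\E_{\xibf}[\xibf\sin(\xibf^\top\Delta)]\|_2+\tfrac1d\sum_j\|\xibf_j\|_2$, and each of the two terms has second moment at most $\sigma^2$, so you only get $\E[L_f^2]\le 4\sigma^2$; Rahimi and Recht instead evaluate the gradient at the maximizer $\Delta^*$ and use $\E\|\nabla f(\Delta^*)\|^2\le\E\|\xibf\|^2=\sigma^2$ (itself a slightly informal step, since $\Delta^*$ is random). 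This discrepancy only perturbs the absolute constant, which is in any case absorbed into the $2^8$, so I would call your proposal correct in approach with a constant-factor gap at the Lipschitz step rather than a genuine flaw; you are also right to single that step out as the delicate one. (Incidentally, the sample-complexity display in the theorem as stated in the thesis has a typo --- the $d$ inside the $\Omega(\cdot)$ should be the ambient dimension $n_0$, not the feature count $d$ --- which your final sentence implicitly corrects.)
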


Unlike MDS and the manifold learning methods in the previous section, the Nyström method and Random Fourier Features produce representations for every element of $\Xcal$. However, both require that the PSD kernel $K$ be explicitly defined for every pair of inputs. Random Fourier Features are even more restrictive by also requiring that $K$ be shift-invariant. In the next chapter, we will see how contrastive learning circumvents these assumptions.

\chapter{Contrastive Learning}\label{chapter:contrastive}

The dimensionality reduction methods explored in the previous chapter fulfill the desideratum of producing low-dimensional features from high-dimensional data. However, these methods also have key limitations that motivated the development of contrastive learning. Each method from the previous chapter has at least one of the following three shortcomings: the inability to generalize to new data, restrictive assumptions, and computational expense. 

PCA and classical MDS assume that data is concentrated on a low-dimensional linear subspace of the high-dimensional input space. This strong assumption is violated by datasets such as the Swiss Roll in Figure \ref{fig:swiss_roll}. As discussed at the end of Section \ref{section:linear_dim_reduction}, PCA can generalize to new inputs by projecting them onto the subspace spanned by the principal components, but MDS cannot. Moreover, these algorithms compute the spectral decomposition of an $N \times N$ matrix, which is an $O(N^3)$ operation.

The manifold learning algorithms in Section \ref{section:manifold_learning} do away with this linearity assumption but still involve the expensive spectral decomposition step. Additionally, none of them can generalize to new data as they are non-parametric methods which require that the embeddings for each input in the training set be explicitly stored.

Finally, the kernel approximation algorithms in Section \ref{section:kernel_approx} do exhibit an ability to generalize to new data but require that a PSD kernel $K$ be explicitly specified over all pairs in $\Xcal \times \Xcal$. In the case of Random Fourier Features, $K$ must be a shift-invariant kernel. In the Nyström method, $K$ must be evaluated $M$ times (where $M$ is the number of samples used in the eigenfunction approximation) for every new input, resulting in slow inference.

Contrastive learning addresses the deficiencies highlighted above by learning features from data with neural networks. This leverages the fact that neural networks are parametric function approximators which have empirically exhibited a strong generalization ability. The only assumption made on the structure of the data is that similar inputs are "close" in latent space (this will be formalized in the coming sections). Furthermore, important algorithmic, software, and hardware advances have facilitated training and inference with neural networks on graphical processing units (GPUs) \cite{rumelhart1986learning,kingma2014auto,paszke2019pytorch}.

At its core, contrastive learning adopts the same idea as manifold learning in that it wishes to map similar points to nearby representations and dissimilar points far away from each other. Unlike in MDS or kernel approximation, the machine learning practitioner is not required to define a notion of similarity (or kernel) on every pair of points in the data. Instead, it suffices to devise a means to automatically sample pairs of similar inputs (referred to as \textit{positive samples}) and pairs of dissimilar inputs (referred to as \textit{negative samples}). The desired representations are then learned via the minimization of a loss function --- called the \textit{contrastive loss} --- with stochastic gradient descent. The loss relies on a user-defined notion of similarity in the feature space (e.g., dot product between representations). It encourages this similarity to be as high as possible for positive samples and as low as possible for negative samples.

In this chapter, we explore the development of contrastive learning and specific instantiations of it in NLP and computer vision. From there, we cover recent theoretical works which derive explicit connections between contrastive learning and earlier dimensionality reduction methods. This will illustrate that contrastive learning is, in essence, a parametric way of learning a low-rank factorization of a kernel. 

\section{Origins of Contrastive Learning}\label{section:origins}

The contrastive loss first appeared in \cite{chopra2005learning}, which sought to learn a notion of distance between images for the task of face verification. The latter is a supervised learning problem that involves verifying whether an image of a face is associated with the correct person. Let $\Xcal$ denote the input space of face images under consideration and let $\Ycal$ denote the set of individuals whose faces appear in $\Xcal$. We are given an image $x \in \Xcal$ and a claimed identity $z \in \Ycal$ as input. Let $y \in \Ycal$ denote the true identity (i.e., the label) of the person in the image $x$. The problem is to determine whether $z = y$.

Viewing this problem through the lens of classification, each distinct individual forms their own class. However, when $|\Ycal|$ is large (e.g., on the order of thousands), standard classification methods from Section \ref{section:classification} become impractical. Instead, given a new input $(x,z)$, the authors propose comparing the image $x$ to the images in the training set with label $z$. The problem is then to determine whether or not $x$ is "sufficiently similar" to these images. 

To do this, the authors learn a neural network feature map $\phibf_{\theta}: \Xcal \to \R^d$. The semantic distance between two images $x_1,x_2$ is then taken to be $d_{\theta}(x_1,x_2) = ||\phibf_{\theta}(x_1)-\phibf_{\theta}(x_2)||_2$. We desire $d_{\theta}(x_1,x_2)$ to be small for positive samples (i.e., two images of the same person) and to be large for negative samples (i.e, images of different people). The contrastive loss function from \cite{chopra2005learning} is then\footnote{We present a simplified form of the loss which does away with additional constants.}:
\begin{equation}\label{eq:contrastive_loss}
    \ell(\theta; x, y) = y\,d_{\theta}^2(x_1,x_2) + (1-y)\exp\big(- d_{\theta}(x_1,x_2)\big),
\end{equation}
where $y=1$ if $(x_1,x_2)$ is a positive sample and $y=0$ if $(x_1,x_2)$ is a negative sample.

This loss indeed encourages minimizing the distance for positive samples and maximizing it for negative samples. Up to constants, we have
\begin{equation}
    \ell(\theta;x,y) =
    \begin{cases}
        \frac{d_{\theta}^2}{2} & \text{if } y = 1 \\
        e^{-d_{\theta}} & \text{if } y = 0.
    \end{cases}
\end{equation}
Taking derivatives, we have
\begin{equation}\label{eq:contrastive_derivative}
    \frac{\partial \ell}{\partial d_{\theta}} = 
    \begin{cases}
        d_{\theta} & \text{if } y = 1 \\
        -e^{-d_{\theta}} & \text{if } y = 0
    \end{cases}
\end{equation}
and
\begin{equation}\label{eq:contrastive_hessian}
    \frac{\partial^2 \ell}{\partial d_{\theta}^2} = 
    \begin{cases}
        1 & \text{if } y = 1 \\
        e^{-d_{\theta}} & \text{if } y = 0.
    \end{cases}
\end{equation}
From (\ref{eq:contrastive_derivative}), we see that the loss is increasing in $d_{\theta}$ for positive samples and decreasing in $d_{\theta}$ for negative samples. From (\ref{eq:contrastive_hessian}), we have that the loss is convex in $d_{\theta}$.

The follow-up work \cite{hadsell2006dimensionality} further establishes the motivation for contrastive learning. The authors highlight the deficiencies in manifold learning, with particular emphasis on the inability to generalize to new samples and the need for a pre-specified notion of similarity (or dissimilarity) over all pairs of inputs. Their contrastive loss, shown below, is similar to \cite{chopra2005learning}:
\begin{equation}\label{eq:hadsell_loss}
    \ell(\theta; x_1,x_2,y) = y d_{\theta}^2(x_1,x_2) + (1-y)\big(\max\, \{0, m-d_{\theta}(x_1,x_2)\}\big)^2,
\end{equation}
where $m$ is a margin hyperparameter and $d_{\theta}$ is taken to be the $\ell_2$ distance between features (instead of the $\ell_1$ distance). The margin hyperparameter effectively acts as a soft constraint that encourages dissimilar inputs to be at a distance of at least $m$ from each other in feature space.

It should be noted that the approach in \cite{hadsell2006dimensionality} requires that all pairs of inputs $(x_1,x_2) \in \Xcal$ receive a binary label (i.e., "similar" or "dissimilar"). The loss over the full dataset is then the average over all pairs. In the supervised setting, where two inputs can be deemed similar if and only if they have the same label, this is reasonable. However, in the unsupervised setting, defining a notion of similarity over all pairs is nontrivial. In subsequent sections, we venture into self-supervised learning by exploring methods that define a strategy for sampling positive and negative pairs.

Additionally, we note that it is necessary to draw both positive and negative samples. Had the contrastive loss been defined only for positive samples, it would be trivially minimized by mapping all inputs to the same point in $\R^d$.

\section{Noise-Contrastive Estimation}\label{section:nce}

As we have seen, contrastive learning boils down to the problem of distinguishing between positive and negative samples. Put another way, we wish to determine whether a pair of inputs was drawn from the distribution of positive samples or the distribution of negative samples. Noise-contrastive estimation (NCE) \cite{gutmann2010noise,gutmann2012noise} is a statistical method for modelling an unknown distribution that makes use of this idea of distinguishing between two distributions. The major breakthroughs in contrastive learning for NLP \cite{mikolov2013distributed} and computer vision \cite{oord2018representation,chen2020simple} involved contrastive loss functions based on NCE. 

NCE was proposed as an alternative to maximum likelihood estimation (MLE), whose shortcomings we will briefly illustrate. Let $p_1$ denote the unknown distribution of interest over a data space $\Xcal$ (not necessarily finite). Let $p_{\theta}$, $\theta \in \Theta$, denote a parametric model which has the goal of approximating $p_1$. 

In MLE, $\theta$ is chosen so as to maximize the \textit{log-likelihood function}
\begin{equation}\label{eq:log_likelihood}
    \log \Lcal(\theta) = \sum_{i=1}^{N^{(1)}} \log p_{\theta}(x_i)
\end{equation}
for some dataset $\Dcal_1 = \{x_1,\dots,x_{N^{(1)}}\} \subseteq \Xcal$ sampled i.i.d.\@ from $p_1$. Assume there exists $\theta^* \in \Theta$ such that $p_1 = p_{\theta^*}$. A classic example of MLE is estimating the mean and variance of a Gaussian distribution given a dataset of samples from that distribution.

MLE has a key limitation: it requires normalized probability densities. If we have a model (e.g., a neural network) which outputs a value $g_{\theta}(x)$ for each $x \in \Xcal$, we need to compute the normalization factor\footnote{This is often referred to as the \textit{partition function} in the literature.} $\int_{\Xcal} g_{\theta}(x) \, dx$ to obtain a probability density. However, computing this integral over a large space $\Xcal$ after every update to $\theta$ would be prohibitively expensive.

NCE overcomes the above issue by moving to the binary classification setting that we introduced in Section \ref{section:classification}. We construct a supervised problem of distinguishing (contrasting) between our target distribution $p_1$ and a known noise distribution $p_0$ that is easy to sample from (e.g., the uniform distribution over a subset of $\Xcal$). We form a dataset $\Dcal$ by taking the $N^{(1)}$ positive samples from $\Dcal_1$ and drawing $N^{(0)}$ negative samples from $p_0$. Let $k = N^{(0)}/N^{(1)}$ and $N = N^{(0)} + N^{(1)}$.

We use the modified sigmoid activation function $\sigma_k$ and the binary cross-entropy classification loss, which we recall here:
\begin{equation}
    \ell(\theta; x,y) = -y \log \sigma_k\big(s_{\theta}(x)\big) - (1-y)\log \sigma_k\big(1-s_{\theta}(x)\big),
\end{equation}
where $y=1$ for positive samples and $y=0$ for negative samples.

Summing over all samples, we have the loss on $\Dcal$,
\begin{equation}\label{eq:nce_loss}
    L_{\Dcal}(\theta) = -\frac{1}{N^{(1)}} \sum_{i=1}^{N^{(1)}} \log \sigma_k\big(s_{\theta}(x_i)\big) - \frac{1}{N^{(0)}} \sum_{j=1}^{N^{(0)}} \log \sigma_k\big(1-s_{\theta}(x_j)\big).
\end{equation}

As discussed in Section \ref{section:classification}, this is a proper scoring rule. Hence, we are in the setting of Corollary \ref{cor:centred_pmi_limit}, i.e.,
\begin{equation}
    \lim_{N \to \infty} s_{\theta^*_N}(x) = \log\bigg(\frac{p_1(x)}{p_0(x)}\bigg).
\end{equation}
In the limit of infinite samples, we may approximate $p_1$ using the learned function $s_{\theta}$ and the known noise distribution $p_0$.

In the context of contrastive learning, we will take $p_1$ to be the distribution over positive samples and $p_0$ to be the distribution over negative samples. We interpret $x$ as a pair of inputs $x = (x_1,x_2)$. By definition of a proper scoring function and the Strong Law of Large Numbers, we have that, in the limit of infinite samples, $\sigma_k(s_{\theta}) = p(y=1|x)$. That is, $\sigma_k(s_{\theta})$ indicates how likely $(x_1,x_2)$ is to be a pair of similar inputs. 

\section{Word Embeddings}\label{section:word_embeddings}

Word embeddings constitute one of the first successful applications of NCE to representation learning. The word2vec algorithm \cite{mikolov2013distributed} --- also called skip-gram with negative sampling (SGNS) --- converts the pretext task of language modelling (i.e., predicting missing words in a span of text) to the contrastive problem of distinguishing between positive and negative samples. In this context, a positive sample is a pair of words which appear together in the data and a negative sample is a pair of independently sampled words. With a contrastive loss function similar to NCE, word representations which can subsequently be used as inputs to downstream task models are learned.

Let us formalize the language modelling task. We iterate over the words in a large collection of text (called a \textit{corpus}) $\mathcal{D}^{pre} = \{x_1,x_2,\dots,x_T\}$.\footnote{We understand $\Dcal^{pre}$ to be a multiset since the same word may appear multiple times in a corpus.} At each step, the task is to predict the surrounding words ($m$ words to the left, $m$ words to the right) given only the current (\textit{target}) word. Hence, the objective function is the log-likelihood of the true context words given the target word
\begin{equation}\label{eq:next_word_prediction}
    \frac{1}{T} \sum_{t=1}^T \sum_{-m \leq j \leq m, j \neq 0} \log p_{\theta}(x_{t+j}|x_t).
\end{equation}
For our model $p_{\theta}$, we make use of two word embedding maps, $\phibf: \Xcal \to \R^d$ and $\psibf: \Xcal \to \R^d$. We use $\phibf$ to represent the word being conditioned on and $\psibf$ to represent the context words we wish to predict. Since the vocabulary $\Xcal$ is finite, we can learn the embedding vectors directly. That is, the parameters of our model are the entries of the embeddings themselves. To measure the compatibility of a context word $z$ with a target word $x$, we compute the dot product $\phibf(x)^{\top}\psibf(z)$. To convert this to a probability, we perform the softmax operation
\begin{equation}\label{eq:softmax_vocab}
    p_{\theta}(z|x) = \frac{\exp\big(\phibf(x)^{\top} \psibf(z)\big)}{\sum_{z' \in \Xcal} \exp\big(\phibf(x)^{\top} \psibf(z')\big)}.
\end{equation}
This softmax is illustrative of the problem with MLE that we discussed in Section \ref{section:nce}. A sum over the vocabulary (which may contain hundreds of thousands or even millions of words) is not practical. NCE provides us with a more efficient alternative.

In this context, we define positive samples to be (target word, context word) pairs observed in the data. On the other hand, negative samples are taken to be (target word, random word) pairs. Hence, as we iterate over the words $x_t$ in the corpus, we take $(x_t,x_{t-m}),(x_t,x_{t-m+1}), \dots, (x_t,x_{t-1}), (x_t, x_{t+1}), \dots (x_t,x_{t+m})$ to be positive samples. For each positive sample, we produce $k$ negative samples 
$(x_t,z_1)$, $\dots$, $(x_t,z_k)$ by drawing $z_1,\dots,z_k$ i.i.d.\@ from the unigram distribution\footnote{The unigram distribution $p$ over $\Xcal$ is such that $p(w) = \frac{N_w}{T}$, where $N_w$ denotes the number of times the word $w$ appears in the corpus. In reality, SGNS employs a smoothed version of the unigram distribution so as to undersample the most frequent words (e.g., "the", "a", "of").}.

Hence, we are comparing the distribution over (context, target) pairs to the one over (random, target) pairs. We have $p_1(x,z) = \phat(x,z)$, which we define to be the (empirical) joint distribution over contexts and targets. That is,
\begin{equation}
    \phat(x,z) := \frac{N_{(x,z)}^{(1)}}{N},
\end{equation}
where $N_{(x,z)}^{(1)}$ is the number of times $(x,z)$ appears as as a positive pair in the corpus and $N = 2mT$ is the total number of positive pairs. Meanwhile, our noise distribution is $p_0(x,z) = \phat(x)\phat(z)$, the product of unigram (i.e., marginal) distributions. 

\begin{definition}[SGNS Loss]
    For a single positive sample $(x,z)$, the \textbf{SGNS loss} is a modified binary cross-entropy loss
    \begin{equation}\label{eq:w2v_one_sample}
        \ell(\phibf,\psibf;x,z,\{z_j\}_{j=1}^k) = -\log \sigma\big(\phibf(x)^{\top} \psibf(z)\big) - \sum_{j=1}^k \log\big(1-\sigma\big(\phibf(x)^{\top} \psibf(z_j)\big)\big).
    \end{equation}
\end{definition}
The loss over the full corpus is then
\begin{equation}\label{eq:w2v_corpus_loss}
    L_{\Dcal^{pre}}(\phibf,\psibf) = \frac{1}{N} \sum_{(x,z) \in \Xcal \times \Xcal} \bigg[-N_{(x,z)}^{(1)} \log \sigma\big(\phibf(x)^{\top} \psibf(z)\big) - N^{(0)}_{(x,z)} \log \big(1-\sigma\big(\phibf(x)^{\top}\psibf(z)\big)\big)\bigg],
\end{equation}
where $N^{(0)}_{(x,z)}$ denotes the number of times $(x,z)$ is drawn as a negative sample.
\begin{theorem}\label{thm:word2vec_kernel}
    The minimizer $(\phibf,\psibf)$ of $L_{\Dcal^{pre}}$ satisfies
    \begin{equation}\label{eq:word_emb_as_matrix_factorization}
        \phibf(x)^{\top} \psibf(z) = \log\bigg(\frac{\phat(x,z)}{\phat(x)\phat(z)}\bigg) - \log k \hs\hs \forall \, x, z \in \Xcal.
    \end{equation}
\end{theorem}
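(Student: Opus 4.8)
The plan is to recognize (\ref{eq:w2v_corpus_loss}) as a family of independent binary cross-entropy problems indexed by the word pair $(x,z) \in \Xcal \times \Xcal$, so that Theorem \ref{thm:shifted_pmi} applies term by term. Writing $s(x,z) := \phibf(x)^{\top}\psibf(z)$ and grouping the summand of (\ref{eq:w2v_corpus_loss}) by $(x,z)$, the quantity $N\,L_{\Dcal^{pre}}$ becomes a sum over pairs of terms of the form $-N^{(1)}_{(x,z)}\log\sigma(s(x,z)) - N^{(0)}_{(x,z)}\log\big(1-\sigma(s(x,z))\big)$. This is exactly the negative log-likelihood of Section \ref{section:classification} for a binary classification problem whose input is the pair $(x,z)$, with the label ``positive'' occurring $N^{(1)}_{(x,z)}$ times and ``negative'' occurring $N^{(0)}_{(x,z)}$ times; each such term is convex in $\sigma(s(x,z))$, so --- treating the values $\{s(x,z)\}$ as free parameters --- the minimization decouples across pairs, and each term is a proper scoring function in the sense of the definition preceding Theorem \ref{thm:shifted_pmi}.

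First I would invoke Theorem \ref{thm:shifted_pmi} for each pair. The class-conditional empirical distributions of this problem are identified as follows: ``positive'' pairs are distributed as the empirical joint $\phat(x,z)$ by the very definition $\phat(x,z) = N^{(1)}_{(x,z)}/N$, while ``negative'' pairs are drawn i.i.d.\ from $p_0(x,z) = \phat(x)\phat(z)$; the overall positive-to-negative imbalance is $1:k$, matching the constant in Theorem \ref{thm:shifted_pmi}. The theorem then gives, at the minimizer,
\begin{equation*}
    \phibf(x)^{\top}\psibf(z) = s(x,z) = \log\bigg(\frac{\phat(x,z)}{\phat(x)\phat(z)}\bigg) - \log k
\end{equation*}
for every $(x,z)$, which is precisely (\ref{eq:word_emb_as_matrix_factorization}). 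Equivalently, one can run the short computation from the proof of Theorem \ref{thm:shifted_pmi} directly: solve $\sigma(s(x,z)) = N^{(1)}_{(x,z)}/\big(N^{(1)}_{(x,z)}+N^{(0)}_{(x,z)}\big)$ via $e^{-s} = 1/\sigma(s) - 1$, then substitute $N^{(1)}_{(x,z)} = N\,\phat(x,z)$ and, as in Corollary \ref{cor:shifted_pmi_limit} by the Strong Law of Large Numbers, $N^{(0)}_{(x,z)} = kN\,\phat(x)\phat(z)$.

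The main obstacle is really a caveat that should accompany the statement: $\phibf(x)^{\top}\psibf(z)$ is not a free function of $(x,z)$. The $|\Xcal| \times |\Xcal|$ matrix with entries $\phibf(x)^{\top}\psibf(z)$ is a product of a $|\Xcal| \times d$ matrix (rows $\phibf(x)^{\top}$) and a $d \times |\Xcal|$ matrix (columns $\psibf(z)$), hence has rank at most $d$. The decoupled pointwise minimization above is therefore exact only when $d$ is large enough that the shifted pointwise-mutual-information matrix on the right-hand side of (\ref{eq:word_emb_as_matrix_factorization}) has rank at most $d$ (e.g.\ $d \ge |\Xcal|$); for smaller $d$ the minimizing $(\phibf,\psibf)$ instead yields the best rank-$d$ approximation of that matrix in the loss-weighted sense, and (\ref{eq:word_emb_as_matrix_factorization}) holds only approximately. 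This is the familiar ``word2vec as implicit matrix factorization'' phenomenon. A secondary minor point is the passage from the sampled negative counts $N^{(0)}_{(x,z)}$ to $kN\,\phat(x)\phat(z)$, which requires the i.i.d.\ negative-sampling assumption together with a law-of-large-numbers argument, or, cleanly, replacing $L_{\Dcal^{pre}}$ by its expectation over the draw of negative samples.
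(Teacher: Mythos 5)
Your proof takes essentially the same route as the paper's: recognize $L_{\Dcal^{pre}}$ as a proper scoring function whose minimization decouples over pairs $(x,z)$, and apply Theorem \ref{thm:shifted_pmi} pointwise. Your caveat that the matrix $\big(\phibf(x)^{\top}\psibf(z)\big)_{x,z}$ has rank at most $d$, so the pointwise optimum is attainable only for sufficiently large $d$, is a genuine qualification that the paper's statement and proof leave implicit.
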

\begin{proof}
    It is immediate that $L_{\Dcal^{pre}}$ is a proper scoring function (the same proof can be followed as for the cross-entropy loss in Section \ref{section:classification}). Therefore, we can apply Theorem \ref{thm:shifted_pmi} to conclude (\ref{eq:word_emb_as_matrix_factorization}) at the optimum of the loss.
\end{proof}

\begin{remark}
    Had we used the activation function $\sigma_k$ instead of $\sigma$, we would have obtained $\phibf(x)^{\top} \psibf(z) = \log \frac{\phat(x,z)}{\phat(x)\phat(z)}$ by Corollary \ref{cor:centred_pmi}.
\end{remark}

The quantity $\log \frac{\phat(x,z)}{\phat(x)\phat(z)}$ is the (empirical) log-odds of sampling $(x,z)$ as a positive pair. Moreover, the authors of \cite{levy2014neural} note that it is the \textit{pointwise mutual information} (PMI) between the words $x$ and $z$. It is an information-theoretic quantity that measures the dependence between $x$ and $z$ in the following sense: if $x$ and $z$ appear together more frequently than would be expected if they were "independent", their PMI is positive; if they appear less frequently than expected, their PMI is negative. Equivalently, we can interpret PMI as a measure of how the presence of the word $x$ affects the probability of observing $z$.

Hence, SGNS embeddings implicitly approximate the kernel $\mathrm{PMI}(x,z)$ by the quantity $\phibf(x)^{\top}\psibf(z)$. We remark that the former is not PSD \cite{allen2019vec} and the latter is not symmetric in general. However, $\exp(\mathrm{PMI}(x,z))$ is a PSD kernel by Example \ref{ex:exp_pmi}. If we modified the algorithm to learn a single embedder $\phibf: \Xcal \to \R^d$ and used the score $\phibf(x)^{\top}\phibf(z)$ to measure compatibility, then $\exp\big(\phibf(x)^{\top}\phibf(z)\big)$ would be a PSD kernel by Proposition \ref{prop:exp_kernel}. In this way, we would be approximating a PSD kernel with another that is more straightforward to compute.

The reason why SGNS and other word embedding algorithms train two embedders $\phibf$ and $\psibf$ is linguistically motivated \cite{allen2019vec}. $\mathrm{PMI}$ (and thus $\phibf^{\top}\psibf$) captures word co-occurrences in the data, which is a proxy for \textit{word relatedness}. The latter concerns whether a pair of words fall under the same topic (e.g., sports, politics, science). Practitioners are ultimately concerned with learning \textit{word similarity}, which concerns whether a word can be substituted for another in most contexts. Empirical results have shown that, after jointly training $\phibf$ and $\psibf$ with SGNS, the dot products $\phibf^{\top}\phibf$ and $\psibf^{\top}\psibf$ capture word similarity well \cite{mikolov2013distributed,newell2019deconstructing}.

Later work has shown that this approximate parametrization of PMI holds for other popular word embedding algorithms \cite{kenyon2020deconstructing}, such as GloVe \cite{pennington2014glove} and FastText \cite{bojanowski2017enriching}. Moreover, there are works which have sought to learn embeddings by explicitly approximating PMI \cite{arora2016latent,newell2019deconstructing}.

\section{Identifying the Positive Pair}\label{section:identifying_positive_pair}

Thus far, we have posed the contrastive pre-training task as determining whether a given pair $(x,z) \in \Xcal \times \Xcal$ is a positive or negative sample. This approach is a staple of word embedding training, but, in computer vision, a modification of this idea is adopted. Instead of treating one pair at a time, a batch $\Bcal = \{(x_i,z_i)\}_{i=1}^B$ is sampled, where exactly one of the pairs is a positive sample and the remaining $B-1$ pairs are negative samples. The objective of the pre-training task then becomes to identify the positive pair.

This idea is captured by the triplet loss \cite{schroff2015facenet}, which was developed with the aim of learning features for images of faces. Given an input $x \in \Xcal$ (called the \textit{anchor}), another input $x^{(+)}$ is sampled such that $(x,x^{(+)})$ is a positive sample. Similarly, an input $x^{(-)}$ is sampled so that $(x,x^{(-)})$ is a negative sample.
\begin{definition}[Triplet Loss]
    Assume we are in the setting described above. Then, the \textbf{triplet loss} is
    \begin{equation}\label{eq:triplet_loss}
        \ell(\phibf; x,x^{(+)},x^{(-)}) = \max\big\{||\phibf(x) - \phibf(x^{(+)})||_2^2 - ||\phibf(x) - \phibf(x^{(-)})||_2^2 + \alpha, 0\big\},
    \end{equation}
    where $\alpha$ is a margin hyperparameter and $\phibf = \phibf_{\theta}: \Xcal \to \R^d$ is a neural network. 
\end{definition}
The loss is similar to (\ref{eq:hadsell_loss}) from \cite{hadsell2006dimensionality}. It encourages a short distance between the elements of a positive pair and a large distance between elements of a negative pair. The key distinction is that, in the triplet loss, positive and negative pairs are being compared against one another. As opposed to asking that elements of a negative pair have a distance at least $m$ from each other as in \cite{hadsell2006dimensionality}, we ask that the distance between them is large relative to the distance between elements of a positive pair.
 
The InfoNCE loss (also called the multi-class $N$-pair loss) \cite{sohn2016improved,oord2018representation} was proposed as an extension of the triplet loss. Rather than comparing a positive sample against a negative sample, we compare it against many negative samples. This brings us into the multiclass classification setting from Section \ref{section:classification}.

Let $x$ be an anchor example, $(x,x^{(+)})$ be a positive pair, and $\{(x,x_i^{(-)})\}_{i=1}^{B-1}$ be negative pairs. Suppose we have a model that produces a similarity function $s = s_{\theta}: \Xcal \times \Xcal \to \R$. Then, we can use the softmax function to compute our model's probability that $(x,x^{(+)})$ is indeed the true positive pair:
\begin{equation}\label{eq:mcnp_softmax}
    \frac{\exp(s(x,x^{(+)}))}{\exp(s(x,x^{(+)})) + \sum_{i=1}^{B-1} \exp(s(x,x_i^{(-)}))}.
\end{equation}
\begin{definition}[InfoNCE Loss]
    In the setting described above, the \textbf{InfoNCE loss} is the $B$-class cross-entropy loss
    \begin{equation}\label{eq:mcnp_loss}
    \ell_{\text{InfoNCE}}(s; x,x^{(+)}, \{x_i^{(-)}\}_{i=1}^{B-1}) = -\log \bigg(\frac{\exp(s(x,x^{(+)}))}{\exp(s(x,x^{(+)})) + \sum_{i=1}^{B-1} \exp(s(x,x_i^{(-)}))}\bigg).
\end{equation}
\end{definition}

This loss simultaneously encourages that positive pairs be assigned a high similarity and that negative pairs be assigned a low similarity. Since we are interested in learning representations of inputs, we take the similarity function to be the cosine similarity between feature vectors,
\begin{equation}\label{eq:mcnp_cossim}
    s(x,x') = \frac{\phibf(x)^{\top}\phibf(x')}{||\phibf(x)||\,||\phibf(x')||} \in [-1,1],
\end{equation}
where $\phibf = \phibf_{\theta}: \Xcal \to \R^d$ as usual.

Suppose that feature vectors are $\ell^2$-normalized. Then, the relationship between Euclidean distance and cosine similarity is clear:
\begin{equation}
    ||\phibf(x)-\phibf(x')||_2^2 = ||\phibf(x)||_2^2 + ||\phibf(x')||_2^2 - 2\phibf(x)^{\top} \phibf(x') = 2(1-\phibf(x)^{\top}\phibf(x')) \in [0,4].
\end{equation}


The authors of \cite{sohn2016improved} work in the supervised setting where a pair of inputs is positive if and only if the inputs belong to the same class. However, we are ultimately interested in feature learning on large unlabelled datasets, as was done for word embeddings. The authors of \cite{chen2020simple} adopt the InfoNCE loss along with a computationally efficient strategy for drawing positive and negative samples in the self-supervised setting. We detail this in the next section, where we also derive the optimum of the InfoNCE loss.

\section{Contrastive Learning with Data Augmentations}\label{section:contrastive_augmentation}

To perform contrastive learning with unlabelled data, it is necessary to define positive and negative pairs in such a way that they can be easily automatically extracted from the data. As we saw in Section \ref{section:word_embeddings}, this is done for word embeddings by defining positive pairs as genuine (context, target) pairs from the text corpus. Hence, we may identify positive pairs by iterating over the words in the text and extracting all pairs of words within a certain distance of each other. Negative samples are drawn from the (empirical) unigram distribution, which is specified by word counts in the corpus.

In computer vision, \cite{oord2018representation} and \cite{chen2020simple} propose making use of \textit{data augmentations} to define positive pairs. Examples of image data augmentations include Gaussian blur, colour jitter, random cropping and resizing, rotation, conversion to grayscale, and Sobel filtering\footnote{See Figure 4 from \cite{chen2020simple} for an illustration.}. Then, we can define two augmentations of the same source image to be a positive pair and two augmentations originating from different source images to be a negative pair. For example, two different random crops of the same dog are taken to be similar, while a random crop of a cat and a random crop of a dog are dissimilar.

With this simple definition of positive and negative pairs, we can immediately move to the setting of the InfoNCE loss from the previous section. Suppose we have an unlabelled pre-training image dataset $\Dcal_x$. Sample a batch of $B$ images $\{x_i\}_{i=1}^B$ from $\Dcal_x$. For each image $x_i$ in the batch, apply data augmentation to generate two transformed images\footnote{SimCLR employs a composition of Gaussian blur, random crop and resize, and colour jitter.} $\xtilde_{2i-1}, \xtilde_{2i}$. This process produces an augmented batch $\tilde{\Bcal} = \{\xtilde_j\}_{j=1}^{2B}$. For every $i \in [B]$, $(\xtilde_{2i-1},\xtilde_{2i})$ and $(\xtilde_{2i},\xtilde_{2i-1})$ are positive pairs. ($\xtilde_{2i-1}$ is the anchor in the first pair, while $\xtilde_{2i}$ is the anchor in the second pair). Meanwhile, $(\xtilde_{2i-1},\xtilde')$ and $(\xtilde_{2i},\xtilde')$ are negative samples for all $\xtilde' \in \tilde{\Bcal} \setminus \{\xtilde_{2i-1},\xtilde_{2i}\}$.
\begin{definition}[SimCLR Loss, \cite{chen2020simple}]
    In the above, setting the \textbf{SimCLR loss} is defined by
    \begin{equation}\label{eq:simclr_loss}
        \ell^{\tilde{\Bcal}}_{\mathrm{SimCLR}}(s) = \frac{1}{2B} \sum_{i=1}^B \ell_{\text{InfoNCE}}(s; \xtilde_{2i-1}, \xtilde_{2i}, \tilde{\Bcal} \setminus \{\xtilde_{2i-1},\xtilde_{2i}\}) + \ell_{\text{InfoNCE}}(s; \xtilde_{2i}, \xtilde_{2i-1}, \tilde{\Bcal} \setminus \{\xtilde_{2i-1},\xtilde_{2i}\}),
    \end{equation}
    where
    \begin{equation}
        s(x,z) = \frac{\phibf(x)^{\top}\phibf(z)}{\tau||\phibf(x)|| \, ||\phibf(z)||}
    \end{equation}
    with $\phibf = \phibf_{\theta}: \Xcal \to \R^d$ and $\tau > 0$ a hyperparameter.
\end{definition}
Recalling the intuition from InfoNCE loss, we can view each term in $\ell_{\mathrm{SimCLR}}$ as the categorical cross-entropy for a $(2B-1)$-class classification problem where the objective is to correctly identify the positive pair.

The authors of \cite{chen2020simple} pre-train the image encoder $\phibf$ with $\ell_{\mathrm{SimCLR}}$ on the ImageNet dataset \cite{russakovsky2015imagenet} using a batch size of 4096. Across a suite of image classification tasks, they find that linear probing with the resulting encoder has a higher accuracy compared to a fully supervised model\footnote{In fact, \cite{chen2020simple} conduct a modified version of linear probing. They pre-train $\phibf(\cdot) = g(\psibf(\cdot))$, where $g$ is a two-layer neural-network. Then, they perform linear probing on $\psibf$ rather than $\phibf$.}.

These impressive results invite a theoretical analysis of contrastive methods for computer vision. The authors of \cite{arora2019theoretical} prove upper bounds on the downstream task risk for models that are pre-trained with the InfoNCE objective. They assume that the data space $\Xcal$ can be subdivided into \textit{latent classes} such that positive pairs consist of two examples from the same latent class. Furthermore, they assume that the two examples in a positive pair are conditionally independent of each other given their latent class. The works \cite{tosh2021contrastive_a} and \cite{tosh2021contrastive_b} also prove similar guarantees under this conditional independence assumption. This assumption is problematic since in the case of SimCLR, a positive pair consists of two transformations of the same source image \cite{haochen2021provable}. 

The paper \cite{wang2020understanding} shows that, in the limit of infinite negative samples, the expected InfoNCE loss can be decomposed as the sum an \textit{alignment} term and a \textit{uniformity} term. The alignment term encourages the two elements of a positive pair to map to the same vector in feature space, while the uniformity term encourages that the pushforward measure of the data distribution by the embedding map $\phibf$ is the uniform distribution on the unit hypersphere in $\R^d$.

Under the assumption that data is generated by sampling from a low-dimensional space $\Zcal$ of latent variables and applying a (nonlinear) map $g: \Zcal \to \Xcal$, \cite{zimmermann2021contrastive} shows that contrastive learning with the InfoNCE loss can recover the latent variables up to orthogonal transformation. Recall from our discussion at the beginning of Chapter \ref{chapter:dim_reduction} that recovering a map from observables to latents is the main goal of dimensionality reduction. Furthermore, \cite{wen2021toward} shows that, when $g$ is a linear map plus additive noise, the resulting contrastive features are sensitive only to the latents and invariant to the noise.

In our study, we concentrate on works that derive the minima of contrastive loss functions in order to better understand the resulting representations. We proceed in similar fashion to Section \ref{section:word_embeddings} and follow the proofs in \cite{oord2018representation} and \cite{johnson2022contrastive}.

The data generating process from \cite{johnson2022contrastive} is as follows. Let $p(\cdot) = p_x(\cdot)$ denote the data distribution from which $\Dcal_x$ is sampled. Given an input $x \in \Xcal$, let $p(\cdot|x)$ denote the distribution over transformations of $x$, as specified by the data augmentations. Then, the probability of drawing $(\xtilde_1, \xtilde_2) \in \Xcal \times \Xcal$ as a positive pair is
\begin{equation}
    p_+(\xtilde_1,\xtilde_2) = \sum_{x \in \Xcal} p(x)p(\xtilde_1|x)p(\xtilde_2|x).
\end{equation}
To simplify the analysis, assume the negative samples $\xtilde_3,\dots,\xtilde_{2B}$ are drawn i.i.d.\@ from the marginal distribution $p$.\footnote{As is pointed out in \cite{johnson2022contrastive}, the actual data generating process of SimCLR takes $(\xtilde_3,\xtilde_4),\dots,(\xtilde_{2B-1},\xtilde_{2B})$ to be positive pairs.}

\begin{definition}[Positive-Pair Kernel, \cite{johnson2022contrastive}]
    Let $p,p_+$ be as above. Then, we define the \textbf{positive-pair kernel} $K: \Xcal \times \Xcal \to \R$ by
    \begin{equation}
        K_+(x,z) = \frac{p_+(x,z)}{p(x)p(z)} \hs\hs \forall \, x,z \in \Xcal.
    \end{equation}
\end{definition}

Assume $\Xcal$ is finite\footnote{Note that spaces of images are finite. For example, the set of RGB images of size $224 \times 224$ contains $(3 \cdot 224^2)^{256}$ elements.}. Then, the positive-pair kernel is PSD \cite{johnson2022contrastive}. Indeed, writing $\Xcal = \{x_1,\dots,x_{|\Xcal|}\}$, we can factor $K_+(x,z)$ as $\Phibf(x) \cdot \Phibf(z)$, where
\begin{equation}\label{eq:positive_pair_kernel_feature_map}
    \Phibf(x) = \bigg[\frac{p(x|x_1)\sqrt{p(x_1)}}{p(x)},\dots,\frac{p(x|x_{|\Xcal|})\sqrt{p(x_{|\Xcal|})}}{p(x)}\bigg].
\end{equation}

\begin{theorem}[SimCLR Implicitly Approximates a Kernel, \cite{johnson2022contrastive}]\label{thm:simclr_kernel}
    Consider the data generating process specified above. Then, the minimizer $\phibf$ of $\E[\ell_{\mathrm{SimCLR}}]$ satisfies
    \begin{equation}
        \exp\big(\phibf(x)^{\top} \phibf(z)/\tau\big) = C K_+(x,z) \hs\hs \forall \, x,z \in \Xcal
    \end{equation}
    for some constant $C > 0$.
\end{theorem}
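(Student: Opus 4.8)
The plan is to recognize $\E[\ell_{\mathrm{SimCLR}}]$ as an average of expected multiclass cross-entropy losses, one per InfoNCE term, and to invoke the fact from Section~\ref{section:classification} that cross-entropy is a proper scoring rule: its minimizer reproduces the true Bayes posterior of the associated classification problem. Fix one such term. The anchor $\xtilde_1$ together with the positive $\xtilde_2$ is a draw from $p_+$, and the negatives $\xtilde_3,\dots,\xtilde_{2B}$ are i.i.d.\ from $p$; the pretext task is to identify which of the candidates is the positive one. First I would compute the Bayes-optimal posterior. Conditioning on the observed anchor value $x$ and candidate values $z_1,\dots,z_{2B-1}$, the joint likelihood that candidate $j$ is the positive and the rest are negatives is $p_+(x,z_j)\prod_{k\neq j}p(z_k)$; writing $p_+(x,z_j)\prod_{k\neq j}p(z_k) = K_+(x,z_j)\,p(x)\prod_k p(z_k)$ and normalizing over $j$ (the common factor $p(x)\prod_k p(z_k)$ cancels) gives
\begin{equation}
    \Prob\big(\text{candidate } j \text{ is positive} \mid x, z_1,\dots,z_{2B-1}\big) = \frac{K_+(x,z_j)}{\sum_{i} K_+(x,z_i)}.
\end{equation}

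Next I would match this against the model's prediction. The InfoNCE softmax~(\ref{eq:mcnp_softmax}) assigns candidate $j$ the probability $\exp(s(x,z_j))/\sum_i \exp(s(x,z_i))$ with $s(x,z)=\phibf(x)^{\top}\phibf(z)/\tau$. Because cross-entropy is a proper scoring rule, a minimizer of the expected loss makes these two distributions coincide for every configuration $(x,z_1,\dots,z_{2B-1})$ that occurs with positive probability. Taking the ratio of the softmax weights of two candidates then forces $\exp(s(x,z))/\exp(s(x,z'))=K_+(x,z)/K_+(x,z')$ whenever $z,z'$ can appear together as candidates against anchor $x$; since $B\ge 2$ and (under full-support augmentations, so that $K_+>0$) every such pair is admissible, we get $\exp(s(x,z)) = C(x)\,K_+(x,z)$ for all $x,z\in\Xcal$, where $C(x)>0$ depends only on the anchor. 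Finally, symmetry of $\phibf(x)^{\top}\phibf(z)$ and of $K_+$ gives $C(x)K_+(x,z)=C(z)K_+(x,z)$, so $C$ is constant (invoking, if $K_+$ is not strictly positive, connectivity of the positive-pair graph), which yields $\exp(\phibf(x)^{\top}\phibf(z)/\tau)=C\,K_+(x,z)$ as claimed.

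The main obstacle is the step asserting that the \emph{minimizer over the parametrized family} $\phibf$ actually attains the Bayes-optimal predictor: unlike the unconstrained per-input optimization of Section~\ref{section:classification}, here the scores are globally coupled through the shared map $\phibf$ and must factor as inner products of (unit-norm) feature vectors, so attainability relies on $\Xcal$ being finite and the feature dimension $d$ being large enough that the required Gram structure is realizable. I would either assume this expressiveness explicitly or phrase the conclusion as characterizing any minimizer that meets the Bayes-optimal value. A secondary, routine point is bookkeeping the admissible candidate configurations and the per-anchor constant $C(x)$ carefully enough to conclude it is genuinely constant.
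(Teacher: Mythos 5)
Your proposal follows essentially the same route as the paper's proof: recognize $\E[\ell_{\mathrm{SimCLR}}]$ as an expected cross-entropy, invoke the proper-scoring-rule property so that the softmax matches the Bayes posterior of the ``identify the positive'' task, compute that posterior as $K_+(x,z_j)/\sum_i K_+(x,z_i)$, and read off the proportionality. You are in fact somewhat more careful than the paper at the final step --- the paper simply writes $\exp(\phibf(x)^{\top}\phibf(z)/\tau)\propto K_+(x,z)$ without justifying that the proportionality constant is independent of $x$, whereas your per-anchor constant $C(x)$ plus symmetry argument, and your explicit flagging of the realizability assumption on $\phibf$, close gaps the paper leaves implicit.
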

That is, in the limit of infinite data (and under the simplifying assumption on how negative samples are generated), SimCLR learns a PSD kernel $\Khat(x,z) = \exp\big(\phibf(x)^{\top}\phibf(z)/\tau\big)$ (assuming w.l.o.g.\@ that $\phibf$ maps to the unit ball in $\R^d$) which approximates the positive-pair kernel $K_+(x,z)$ up to a constant factor. Just as we saw with the optimum of the SGNS risk in Section \ref{section:word_embeddings}, SimCLR learns representations so as to approximate the odds that a given pair is positive. 
\begin{proof}
    The expected loss is
    \begin{equation}
        \E[\ell_{\mathrm{SimCLR}}] = \underset{\substack{(\xtilde_1,\xtilde_2) \sim p_+(\cdot,\cdot) \\ \xtilde_3,\dots,\xtilde_{2B} \sim p(\cdot)}}{\E} \bigg[-\log \bigg(\frac{\exp\big(\phibf(\xtilde_1)^{\top}\phibf(\xtilde_2)/\tau\big)}{\sum_{k=2}^{2B} \exp\big(\phibf(\xtilde_1)^{\top}\phibf(\xtilde_k)/\tau\big)}\bigg)\bigg]
    \end{equation}
    For all $i \in \{2,\dots,2B\}$, let $p\big(y=\xtilde_i|\xtilde_1, (\xtilde_2,\dots,\xtilde_{2B})\big)$ denote the probability that $(\xtilde_1,\xtilde_i)$ is a positive sample given that exactly one of $\{(\xtilde_1,\xtilde_j)\}_{j=2}^{2B}$ is a positive sample. Then, the above expression is the expectation over all batches $\{\xtilde_j\}_{j=1}^{2B}$ of the cross-entropy between $p\big(y=\xtilde_2|\xtilde_1, (\xtilde_2,\dots,\xtilde_{2B})\big)$ and our model's estimate of that quantity. Therefore, at the minimum of the risk, we have, 
    \begin{equation}\label{eq:infonce_optim}
        \frac{\exp\big(\phibf(\xtilde_1)^{\top}\phibf(\xtilde_i)/\tau\big)}{\sum_{k=2}^{2B} \exp\big(\phibf(\xtilde_1)^{\top}\phibf(\xtilde_k)/\tau\big)} = p(y = \xtilde_i|\xtilde_1, (\xtilde_2,\dots,\xtilde_{2B})).
    \end{equation}
    Expanding the right-hand side, we have
    \begin{align*}
        p(y = \xtilde_i|\xtilde_1, (\xtilde_2,\dots,\xtilde_{2B})) &= \frac{p_+(\xtilde_1,\xtilde_i)\prod_{j=2}^{2B} \1_{j \neq i}\, p(\xtilde_j)}{\sum_{k=2}^{2B} p_+(\xtilde_1,\xtilde_k) \prod_{j=2}^{2B} \1_{j \neq k} \, p(\xtilde_j)} \\
        &= \frac{p_+(\xtilde_1,\xtilde_i)}{\sum_{k=2}^{2B} p_+(\xtilde_1,\xtilde_k) \frac{p(\xtilde_i)}{p(\xtilde_k)}} \\
        &= \frac{\frac{p_+(\xtilde_1,\xtilde_i)}{p(\xtilde_i)}}{\sum_{k=2}^{2B} \frac{p_+(\xtilde_1,\xtilde_k)}{p(\xtilde_k)}} \\
        &= \frac{\frac{p_+(\xtilde_1,\xtilde_i)}{p(\xtilde_1)p(\xtilde_i)}}{\sum_{k=2}^{2B} \frac{p_+(\xtilde_1,\xtilde_k)}{p(\xtilde_1)p(\xtilde_k)}}.
    \end{align*}
    From the above and (\ref{eq:infonce_optim}), we deduce
    \begin{equation}
        \label{eq:simclr_score}
        \exp\big(\phibf(x)^{\top}\phibf(z)/\tau\big) \propto \frac{p_+(x,z)}{p(x)p(z)} \hs\hs \forall \, x,z \in \Xcal.
    \end{equation}
\end{proof}
The authors of \cite{johnson2022contrastive} remark that the positive-pair kernel need not be explicitly specified on every pair of inputs as is the case for manifold learning and kernel approximation methods. Rather, the positive-pair kernel is implicitly specified by our definition of positive and negative pairs, which arise from data augmentation.

\begin{remark}
    Here, we worked with the PSD kernel $\Khat(x,z) = \exp(\phibf(x)^{\top}\phibf(z)/\tau)$, but the previous theorem can be generalized to any non-negative $K(x,z)$.
\end{remark}

\begin{remark}
    The same derivation as above can also be used to express the minimum of the empirical risk by replacing $p_+$ and $p$ with their count-based maximum likelihood estimates $\phat_+$ and $\phat$ respectively.
\end{remark}

\section{Connection to Dimensionality Reduction}

From Theorems \ref{thm:word2vec_kernel} and \ref{thm:simclr_kernel}, we see that SGNS and SimCLR are two prominent examples of contrastive methods which implicitly approximate a PSD kernel. This connection can be taken further to explicitly relate contrastive learning to the factorization of a Gram matrix (i.e., a kernel restricted to a finite space).

Assume that we are in the SimCLR setting with finite input space $\Xcal$. Define $N : = |\Xcal|$. Then, any kernel on $\Xcal$ is completely specified by its associated $N \times N$ Gram matrix. Similarly to ISOMAP and Laplacian Eigenmaps, the authors of \cite{haochen2021provable} adopt a graph formalism to encode similarity between elements of $\Xcal$. Let $G$ be a weighted graph (called the \textit{augmentation graph}) with vertices $\Xcal$ and edge weights $A_{xz} = p_+(x,z)$, i.e., the probability that $(x,z)$ is a positive pair. For every $x \in \Xcal$, its degree in the graph is
\begin{equation}
    \sum_{z \in \Xcal} A_{xz} = \sum_{z \in \Xcal} p_+(x,z) = \sum_{z \in \Xcal} \sum_{z' \in \Xcal} p(z')p(x|z')p(z|z') = p(x).
\end{equation}
Let $\Dbf \in \R^{N \times N}$ be the diagonal matrix with entries $D_{xx} = p(x)$. We then obtain the normalized adjacency matrix
\begin{equation}
    \bar \Abf := \Dbf^{-\frac{1}{2}} \Abf \Dbf^{-\frac{1}{2}}.
\end{equation}
That is, for $x,z \in \Xcal$,
\begin{equation}
    \bar{A}_{xz} = \frac{p_+(x,z)}{\sqrt{p(x)} \sqrt{p(z)}}.
\end{equation}
$\bar{\Abf}$ can be thought of as a normalized positive-pair kernel. Using a similar argument to (\ref{eq:positive_pair_kernel_feature_map}), it is easy to see that $\bar{\Abf}$ is PSD.

By the Eckart-Young-Mirsky Theorem \cite{eckart1936approximation},
\begin{equation}\label{eq:spectral_objective}
    \underset{\Fbf \in \R^{N \times d}}{\argmin} \, ||\bar{\Abf}-\Fbf \Fbf^{\top}||_F^2 = \Ubf_d \Lambdabf_d^{1/2}, 
\end{equation}
where $\Ubf_d$ is the matrix with the top $d$ normalized eigenvectors of $\bar{\Abf}$ as columns and $\Lambdabf_d = \mathrm{diag}(\lambda_1,\dots,\lambda_d)$ with $\lambda_1,\dots,\lambda_d$ the top $d$ eigenvalues of $\bar{\Abf}$. 

As was mentioned previously, spectral decomposition of the Gram matrix becomes prohibitively expensive for large $N$. At the beginning of this chapter, we posed contrastive learning as a remedy to expensive nonlinear dimensionality reduction techniques. The work \cite{haochen2021provable} puts this claim on rigourous footing by showing that minimizing a particular contrastive loss function is equivalent to solving (\ref{eq:spectral_objective}). 
\begin{definition}[Spectral Contrastive Loss]
    Let $\phibf: \Xcal \to \R^d$. Then, the \textbf{spectral contrastive loss} evaluated at $\phibf$ is
    \begin{equation}
        L_{\mathrm{spec}}(\phibf) := -2\underset{(x,x') \sim p_+(\cdot,\cdot)}{\E} \big[\phibf(x)^{\top}\phibf(x')\big] + \underset{z,z' \sim p(\cdot)}{\E}\big[\big(\phibf(z)^{\top}\phibf(z')\big)^2\big].
    \end{equation}
\end{definition}
\begin{theorem}[Equivalence between Spectral Contrastive Loss and Gram Matrix Factorization, \cite{haochen2021provable}]
    Let $\phibf: \Xcal \to \R^d$ and define the matrix $\Fbf \in \R^{N \times d}$ such that the $x$th row of $\Fbf$ is $\sqrt{p(x)}\phibf(x)$ for all $x \in \Xcal$. Then,
    \begin{equation}\label{eq:spectral_mf_equivalence}
        L_{\mathrm{spec}}(\phibf) = ||\bar{\Abf}-\Fbf \Fbf^{\top}||_F^2 + c
    \end{equation}
    for some constant $c \in \R$ which does not depend on $\phibf$.
\end{theorem}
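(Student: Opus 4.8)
The plan is a direct computation: expand the squared Frobenius norm on the right-hand side of (\ref{eq:spectral_mf_equivalence}) and match the resulting three terms against the two terms defining $L_{\mathrm{spec}}$ together with a constant. Since $\Xcal$ is finite, all sums below are finite and $\bar{\Abf}, \Fbf, \Fbf\Fbf^\top$ are genuine matrices. We have
\begin{equation}
    \|\bar{\Abf} - \Fbf\Fbf^\top\|_F^2 = \|\bar{\Abf}\|_F^2 - 2\sum_{x,z\in\Xcal} \bar{A}_{xz}(\Fbf\Fbf^\top)_{xz} + \|\Fbf\Fbf^\top\|_F^2.
\end{equation}
The first term does not involve $\phibf$, so we may set $c := -\|\bar{\Abf}\|_F^2$. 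It remains to identify the cross term with the alignment term of $L_{\mathrm{spec}}$ and the last term with the uniformity term.

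First I would record the key identity $(\Fbf\Fbf^\top)_{xz} = \sqrt{p(x)}\,\phibf(x)^\top\phibf(z)\,\sqrt{p(z)}$, which follows directly from the definition of $\Fbf$ (its $x$th row is $\sqrt{p(x)}\phibf(x)$). Combined with $\bar{A}_{xz} = p_+(x,z)/(\sqrt{p(x)}\sqrt{p(z)})$, the normalizing factors cancel in the cross term:
\begin{equation}
    \sum_{x,z\in\Xcal} \bar{A}_{xz}(\Fbf\Fbf^\top)_{xz} = \sum_{x,z\in\Xcal} p_+(x,z)\,\phibf(x)^\top\phibf(z) = \underset{(x,x')\sim p_+(\cdot,\cdot)}{\E}\big[\phibf(x)^\top\phibf(x')\big],
\end{equation}
so that $-2$ times this quantity is exactly the first summand of $L_{\mathrm{spec}}$. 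For the last term, squaring the same identity gives $(\Fbf\Fbf^\top)_{xz}^2 = p(x)p(z)\big(\phibf(x)^\top\phibf(z)\big)^2$, whence
\begin{equation}
    \|\Fbf\Fbf^\top\|_F^2 = \sum_{x,z\in\Xcal} p(x)p(z)\big(\phibf(x)^\top\phibf(z)\big)^2 = \underset{z,z'\sim p(\cdot)}{\E}\big[\big(\phibf(z)^\top\phibf(z')\big)^2\big],
\end{equation}
using independence of $z$ and $z'$. This is the uniformity term, and combining the three pieces yields (\ref{eq:spectral_mf_equivalence}).

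There is no real obstacle here; the argument is elementary once the Frobenius norm is expanded. The only thing to watch is the bookkeeping: one must check that the $\sqrt{p(x)}$ weights built into the rows of $\Fbf$ interact with the $1/\sqrt{p(x)}$ normalization in $\bar{\Abf}$ in precisely the right way to convert the plain sums over $\Xcal \times \Xcal$ into an expectation under $p_+$ (for the cross term) and under the product measure $p \otimes p$ (for the quadratic term). It is also worth noting explicitly that the constant $c = -\|\bar{\Abf}\|_F^2$ depends only on the data distribution and the augmentation process, not on $\phibf$, as the statement requires.
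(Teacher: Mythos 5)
Your proposal is correct and follows essentially the same route as the paper: the paper expands $||\bar{\Abf}-\Fbf\Fbf^{\top}||_F^2$ entry-wise, obtaining exactly your three pieces $\frac{p_+(x,z)^2}{p(x)p(z)}$ (the constant $-c=||\bar{\Abf}||_F^2$), $-2p_+(x,z)\phibf(x)^{\top}\phibf(z)$ (the alignment expectation), and $p(x)p(z)(\phibf(x)^{\top}\phibf(z))^2$ (the uniformity expectation). Your matrix-level bookkeeping via $(\Fbf\Fbf^\top)_{xz}=\sqrt{p(x)}\,\phibf(x)^\top\phibf(z)\sqrt{p(z)}$ is just a reorganization of the same computation, and your constant $c=-||\bar{\Abf}||_F^2$ agrees with the paper's.
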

\begin{proof}
    We have
    \begin{align*}
        ||\bar{\Abf} - \Fbf \Fbf^{\top}||_F^2 &= \sum_{x,z \in \Xcal} \bigg(\frac{p_+(x,z)}{\sqrt{p(x)}\sqrt{p(z)}} - \sqrt{p(x)}\sqrt{p(z)} \big(\phibf(x)^{\top}\phibf(z)\big)\bigg)^2 \\
        &= \sum_{x,z \in \Xcal} \bigg(\frac{p_+(x,z)^2}{p(x)p(z)} - 2p_+(x,z) \big(\phibf(x)^{\top}\phibf(z)\big) + p(x)p(z) \big(\phibf(x)^{\top}\phibf(z)\big)^2\bigg) \\
        &= -c - 2\underset{(x,x') \sim p_+(\cdot,\cdot)}{\E} \big[\phibf(x)^{\top}\phibf(x')\big] + \underset{z,z' \sim p(\cdot)}{\E}\big[\big(\phibf(z)^{\top}\phibf(z')\big)^2\big].
    \end{align*}
\end{proof}
From the first line of the proof, it is clear that for $\phibf^*$ the minimizer of $L_{\mathrm{spec}}$, we have the approximation
\begin{equation}
    \phibf^*(x)^{\top} \phibf^*(z) \approx K_+(x,z).
\end{equation}

The benefit of working with the spectral contrastive loss rather than an explicit spectral decomposition or the Nyström method is twofold. First, the spectral contrastive loss does not require the explicit computation of the positive-pair kernel (which is intractable) on all pairs of inputs. Instead, it suffices to sample positive and negative pairs. Second, we avoid the computational cost associated with a spectral decomposition or the evaluation of Nyström features.

Through their graph formalism, the authors of \cite{haochen2021provable} prove learning bounds for pre-training with spectral contrastive loss and linear probing under much weaker assumptions than prior work. Namely, the unrealistic conditional independence assumption from \cite{arora2019theoretical,tosh2021contrastive_a,tosh2021contrastive_b} is not necessary. Instead, the key assumption is that the vertices can be partitioned into $m$ clusters which have little overlap (i.e., edges between different clusters have small weight).

\begin{definition}[Dirichlet Conductance]
    Given an augmentation graph $G = (\Xcal, \Abf)$ and a subset of vertices $S \subseteq \Xcal$, the \textbf{Dirichlet conductance} of $S$ is
    \begin{equation}
        \nu_G(S) = \frac{\sum_{x \in S} \sum_{x' \notin S} A_{xx'}}{\sum_{x \in S} p(x)}.
    \end{equation}
\end{definition}
Intuitively, the Dirichlet conductance of $S$ can be thought of as measuring the amount of "overlap" between $S$ and $\Xcal \setminus S$. 

\begin{definition}[Sparsest $i$-partition]
    Let $G = (\Xcal,\Abf)$ be an augmentation graph. For every $i \in \{2,\dots,N\}$, define 
    \begin{equation}
        \rho_i := \min_{S_1,\dots,S_i} \max\big\{\nu_G(S_1),\dots,\nu_G(S_i)\big\},
    \end{equation}
    where $S_1,\dots,S_i$ are nonempty sets which form a partition of $\Xcal$.
\end{definition}
We see that $\rho_i$ measures the amount of overlap when $\Xcal$ is partitioned into $i$ clusters. This allows for the formalization of the key assumption that there are at most $m$ clusters in the data.

\begin{assumption}\label{asmp:m_clusters}
    Fix a constant $m \in \N$ and $\rho \in [0,1)$. Assume that $\rho_{m+1} \geq \rho$. 
\end{assumption}

\begin{assumption}\label{asmp:invariance_to_augs}
    Let $x \in \Xcal$ and let $h^*: \Xcal \to [C]$ be the labelling function for a downstream task of interest. Fix $\alpha \in [0,1)$. Assume there exists a classifier $h: \Xcal \to [C]$ such that
    \begin{equation}
        \underset{\xtilde \sim p(\cdot|x)}{\E}\big[\1\big(h(x) \neq h^*(\xtilde)\big)\big] \leq \alpha.
    \end{equation}
\end{assumption}
In other words, we assume that data augmentation changes the label of an example with probability at most $\alpha$. Recall that in the supervised setting, one can simply form positive pairs from examples belonging to the same class. This assumption states that data augmentation is a reasonable proxy in the absence of labelled data.

\begin{assumption}\label{asmp:realizability}
    Let $\Fcal$ be a hypothesis class of functions from $\Xcal$ to $\R^d$. Assume that $\Fcal$ contains a global minimizer of $L_{\mathrm{spec}}$.
\end{assumption}

\begin{definition}[Linear Probing Error]
    Let $\phibf: \Xcal \to \R^d$ be an encoder and $h^*: \Xcal \to [C]$ be the labelling function for a downstream task. For every $\Wbf \in \R^{C \times d}$, define the downstream linear classifier
    \begin{equation}
        h_{\phibf,\Wbf}(x) := \underset{j \in [C]}{\argmax} \big(\Wbf \phibf(x)\big)_j.
    \end{equation}
    Then, the \textbf{linear probing error} of $\phibf$ is
    \begin{equation}
        \Ecal(\phibf) = \min_{\Wbf \in \R^{C \times d}} \underset{x \sim p(\cdot)}{\E} \big[\1\big(h_{\phibf,\Wbf}(x) \neq h^*(x)\big)\big].
    \end{equation}
\end{definition}

\begin{theorem}[Linear Probing Error Under Infinite Data, \cite{haochen2021provable}]
    Suppose $d \geq 2\max\{C,m\}$ and Assumptions \ref{asmp:m_clusters} and \ref{asmp:invariance_to_augs} hold. Let $\Fcal$ be as in Assumption \ref{asmp:realizability} with $\phibf_{pop}^* \in \Fcal$ the global minimizer of $L_{\mathrm{spec}}$. Then,
    \begin{equation}
        \Ecal(\phibf_{pop}^*) \leq \tilde{O}\bigg(\frac{\alpha}{\rho^2}\bigg).
    \end{equation}
\end{theorem}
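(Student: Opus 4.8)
The plan is to combine the Eckart--Young--Mirsky description of the minimizer from (\ref{eq:spectral_mf_equivalence}) with a spectral-clustering argument: the minimizing features span the bottom eigenspace of the normalized Laplacian $\Lbf := \Ibf - \bar{\Abf}$; the downstream label function has small Dirichlet energy because augmentations rarely flip labels; and a higher-order Cheeger inequality converts Assumption \ref{asmp:m_clusters} into a quantitative spectral gap that controls how well the label function is approximated by that eigenspace. First I would note that, by (\ref{eq:spectral_mf_equivalence}), a minimizer $\phibf_{pop}^*$ corresponds to a minimizer $\Fbf^*$ of $\|\bar{\Abf} - \Fbf\Fbf^\top\|_F^2$ over $\Fbf \in \R^{N\times d}$, and by (\ref{eq:spectral_objective}) we may take $\Fbf^* = \Ubf_d\Lambdabf_d^{1/2}$ (a repeated $d$-th eigenvalue only enlarges the feasible set and is handled by any valid tie-break; a vanishing top-$d$ eigenvalue of $\bar{\Abf}$ is the trivial rank-deficient case). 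Thus the column span of $\Fbf^*$ is the span of the top-$d$ eigenvectors of $\bar{\Abf}$, equivalently the span of the $d$ eigenvectors of $\Lbf$ with smallest eigenvalues; and since the $x$-th row of $\Fbf^*$ is $\sqrt{p(x)}\,\phibf_{pop}^*(x)^\top$, for any $\Wbf \in \R^{C\times d}$ the $x$-th row of $\Fbf^*\Wbf^\top$ is $\sqrt{p(x)}\,\Wbf\phibf_{pop}^*(x)$.

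Next I would exhibit a good linear probe. For $j \in [C]$ let $\ybf_j \in \R^N$ have entries $\ybf_j(x) = \sqrt{p(x)}\,\1(h^*(x)=j)$, so that $\sum_j \|\ybf_j\|_2^2 = 1$. Using $\sqrt{p(x)}\,\bar{A}_{xz}\,\sqrt{p(z)} = p_+(x,z)$ and $\sum_z p_+(x,z) = p(x)$, a direct computation gives the normalized-Laplacian identity $\sum_j \ybf_j^\top \Lbf \ybf_j = \sum_{x,z} p_+(x,z)\,\1(h^*(x)\neq h^*(z))$, which is the probability, over a source $w \sim p$ and independent augmentations $\xtilde_1,\xtilde_2 \sim p(\cdot|w)$, that $h^*(\xtilde_1)\neq h^*(\xtilde_2)$; applying the triangle inequality through the classifier $h$ of Assumption \ref{asmp:invariance_to_augs} bounds this by $2\alpha$. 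Decomposing $\ybf_j = \ybf_j^{\le} + \ybf_j^{>}$ along the bottom-$d$ eigenspace of $\Lbf$ and its complement, the Rayleigh bound yields $\sum_j \|\ybf_j^{>}\|_2^2 \le \lambda_{d+1}(\Lbf)^{-1}\sum_j \ybf_j^\top\Lbf\ybf_j \le 2\alpha/\lambda_{d+1}(\Lbf)$, where $\lambda_{d+1}(\Lbf)$ is the $(d{+}1)$-st smallest eigenvalue. Each $\ybf_j^{\le}$ lies in the column span of $\Fbf^*$, so some $\Wbf \in \R^{C\times d}$ realizes all of them simultaneously, and the regression residual equals $\sum_x p(x)\,\|\Wbf\phibf_{pop}^*(x) - e_{h^*(x)}\|_2^2 = \sum_j \|\ybf_j^{>}\|_2^2 \le 2\alpha/\lambda_{d+1}(\Lbf)$. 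A margin argument — whenever $\argmax_j(\Wbf\phibf_{pop}^*(x))_j \neq h^*(x)$ the squared residual at $x$ is at least $1/4$ — then gives $\Ecal(\phibf_{pop}^*) \le 8\alpha/\lambda_{d+1}(\Lbf)$.

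Finally I would lower-bound the gap. Since $d \ge 2\max\{C,m\} \ge m$, we have $\lambda_{d+1}(\Lbf) \ge \lambda_{m+1}(\Lbf)$. The higher-order Cheeger inequality for the normalized Laplacian states $\rho_{m+1} \le O(m^2)\sqrt{\lambda_{m+1}(\Lbf)}$, where $\rho_{m+1}$ is precisely the sparsest-$(m{+}1)$-partition quantity of Assumption \ref{asmp:m_clusters}; hence $\lambda_{m+1}(\Lbf) \ge \Omega(\rho_{m+1}^2/m^4) \ge \Omega(\rho^2/m^4)$. Substituting into the previous bound gives $\Ecal(\phibf_{pop}^*) \le O(m^4\alpha/\rho^2) = \tilde{O}(\alpha/\rho^2)$ with $m$ and $C$ treated as constants; the hypothesis $d \ge 2\max\{C,m\}$ is what guarantees the bottom-$d$ eigenspace captures the at most $m$ cluster-structure eigenvectors and leaves room to represent the $C$ one-hot label directions.

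The main obstacle I expect is the bookkeeping in the middle step: formulating the label vector so its Dirichlet energy is genuinely $O(\alpha)$ (the triangle-inequality passage between $h$ on clean inputs and $h^*$ on augmentations, and the edge cases where $\bar{\Abf}$ is rank-deficient or has a repeated $d$-th eigenvalue, all need care), and importing the higher-order Cheeger inequality in exactly the form required while tracking the $\mathrm{poly}(m)$ factors that the $\tilde{O}(\cdot)$ absorbs. The spectral characterization in Step 1 and the $L^2$-to-$0/1$ conversion in Step 2 are routine by comparison.
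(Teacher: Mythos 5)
Your outline is correct and is essentially the argument of \cite{haochen2021provable}, whose proof this thesis does not reproduce (it states the theorem and defers to the original paper): Eckart--Young identifies the optimal features with the top-$d$ eigenspace of $\bar{\Abf}$, the label indicators have normalized-Laplacian Dirichlet energy at most $2\alpha$ by the triangle inequality through Assumption \ref{asmp:invariance_to_augs}, the Rayleigh/margin step converts the residual outside the bottom-$d$ eigenspace into $0$-$1$ error, and higher-order Cheeger turns Assumption \ref{asmp:m_clusters} into a lower bound on $\lambda_{d+1}$. The only refinement worth noting is that the factor $2$ in $d \geq 2\max\{C,m\}$ exists precisely so one can invoke the improved higher-order Cheeger inequality $\rho_{m+1} \lesssim \sqrt{\lambda_{2(m+1)}\log m}$, which keeps the hidden factor polylogarithmic in $m$ as the $\tilde{O}$ suggests, rather than the $O(m^2)\sqrt{\lambda_{m+1}}$ version you quote, which costs $\mathrm{poly}(m)$.
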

In other words, assuming infinite pre-training and downstream data, the downstream error is controlled (up to logarithmic factors) by the invariance of the labelling function to data augmentations and the number of clusters in the data.

In practice, we have a finite pre-training dataset $\Dcal_{pre} = \{x_1,\dots,x_{N_{pre}}\}$ and downstream task dataset $\Dcal_{down} = \{(x_1',y_1),\dots,(x_{N_{down}}',y_{N_{down}})\}$ with $N_{down} \ll N_{pre}$. With this in mind, the authors of \cite{haochen2021provable} also prove finite-sample learning bounds using tools from statistical learning theory. We defer to their paper for further details.

Relatedly, the authors of \cite{balestriero2022contrastive} work with a different choice of graph\footnote{Unlike \cite{haochen2021provable}, edge weights are not a modification of the positive-pair kernel.} $G'$ to explicitly characterize optimal representations that result from minimizing a modification of the SimCLR loss. They show a correspondence between SimCLR and ISOMAP on $G'$. Furthermore, they assess the conditions under which self-supervised representations will aid downstream task performance.

\section{Learning Mercer Eigenfunctions}\label{section:learning_mercer_eigenfunctions}

In the previous section, we related contrastive features to the eigenvectors of the Gram matrix associated with $K_+$. Recall from Example \ref{ex:mercer_finite} that these eigenvectors are in fact the Mercer eigenfunctions associated with $K_+$ with respect to the counting measure. In Section \ref{section:kernel_approx}, we saw that the Nyström method is a means of approximating Mercer eigenfunctions with respect to the data distribution $p$. The paper \cite{johnson2022contrastive} shows that the latter eigenfunctions in fact comprise the the optimal low-rank encoder for a linear probe when the labelling function $h^*$ is approximately invariant to data augmentations.
\begin{theorem}[Optimality of Mercer Eigenfunctions, \cite{johnson2022contrastive}]
    Let $\Hcal_{\phibf} := \{x \mapsto \wbf^{\top} \phibf(x): \wbf \in \R^d\}$ be the set of linear predictors composed with an encoder $\phibf$. Let
    \begin{equation*}
        K_+(x,z) = \sum_{j=1}^N \lambda_j \psi_j(x) \psi_j(z),
    \end{equation*}
    where $\lambda_1 \geq \dots \geq \lambda_N$, be the Mercer decomposition of $K_+$ with respect to the measure $p$ as in Theorem \ref{thm:mercer}. Define $\phibf(\cdot) = [\psi_1(\cdot),\dots,\psi_d(\cdot)]$. Then, $\Hcal_{\phibf}$ is the set of maximally invariant predictors in the following sense:
    \begin{equation}
        \Hcal_{\phibf} = \underset{\dim(\Hcal) = d}{\argmin} \max_{\substack{h \in \Hcal \\ \E_p[h(x)^2] = 1}} \underset{(x_1,x_2) \sim p_+}{\E} \bigg[\big(h(x_1) - h(x_2)\big)^2\bigg].
    \end{equation}
    Furthermore, let $\eps > 0$ and
    \begin{equation}
        S_{\eps} := \bigg\{h: \Xcal \to \R: \underset{(x_1,x_2) \sim p_+}{\E} \bigg[\big(h(x_1)-h(x_2)\big)^2\bigg] \leq \eps \bigg\}.
    \end{equation}
    Then, $\Hcal_{\phibf}$ is the optimal hypothesis class of dimension $d$ for approximating invariant labelling functions, i.e.,
    \begin{equation}\label{eq:mercer_least_squares_optimality}
        \Hcal_{\phibf} = \underset{\dim(\Hcal) = d}{\argmin} \, \max_{h^* \in S_{\eps}} \,\min_{h \in \Fcal}\, \E_p \bigg[\big(h(x) - h^*(x)\big)^2\bigg].
    \end{equation}
\end{theorem}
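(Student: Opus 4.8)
The plan is to rewrite everything inside the finite-dimensional Hilbert space $L^2(\Xcal;p)$---which, since $\Xcal$ is finite, is just $\R^{|\Xcal|}$ with $\langle f,g\rangle := \sum_x p(x)f(x)g(x)$---and to reduce both minimax problems to the Courant--Fischer min-max theorem applied to the operator $T_{K_+}$ of Theorem \ref{thm:mercer} (here the $|\Xcal|\times|\Xcal|$ matrix $(x,z)\mapsto K_+(x,z)p(z)$, self-adjoint for $\langle\cdot,\cdot\rangle$, with eigenpairs $(\lambda_j,\psi_j)$). The first ingredient is that the marginal of $p_+$ onto either coordinate equals $p$, so that for every $h:\Xcal\to\R$,
\begin{equation*}
    \E_{(x_1,x_2)\sim p_+}\big[(h(x_1)-h(x_2))^2\big] = 2\E_p[h^2] - 2\E_{p_+}[h(x_1)h(x_2)] = 2\big(\|h\|^2 - \langle h, T_{K_+}h\rangle\big),
\end{equation*}
where the last step uses $p_+(x,z)=K_+(x,z)p(x)p(z)$. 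Expanding $h=\sum_j c_j\psi_j$ gives $\|h\|^2=\sum_j c_j^2$ and $\langle h,T_{K_+}h\rangle=\sum_j\lambda_j c_j^2$, and a one-line Cauchy--Schwarz argument shows $0\le\lambda_j\le 1$ for all $j$, with $\lambda_1=1$ and $\psi_1\equiv 1$ (the eigenvalue $1$ being simple when the augmentation graph is connected).

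For the first identity I would note that $\max_{h\in\Hcal,\ \E_p[h^2]=1}\E_{p_+}[(h(x_1)-h(x_2))^2] = 2\big(1-\min_{h\in\Hcal,\ \E_p[h^2]=1}\langle h,T_{K_+}h\rangle\big)$, so minimizing this over $d$-dimensional subspaces $\Hcal$ is the same as maximizing $\min_{h\in\Hcal,\ \|h\|=1}\langle h,T_{K_+}h\rangle$ over such subspaces. By Courant--Fischer this maximum equals $\lambda_d$ and is attained exactly at $\Hcal=\mathrm{span}\{\psi_1,\dots,\psi_d\}=\Hcal_{\phibf}$, with uniqueness of the maximizer under the spectral gap $\lambda_d>\lambda_{d+1}$ (without a gap, $\Hcal_{\phibf}$ is merely one of several optimal subspaces, which I would state as a hypothesis).

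For the second identity, observe that $\min_{h\in\Hcal}\E_p[(h-h^*)^2]=\|(I-P_{\Hcal})h^*\|^2$ is the squared $L^2(p)$-distance from $h^*$ to $\Hcal$, and that $S_{\eps}=\{h:\langle h,Ah\rangle\le\eps/2\}$, where $A$ is the PSD operator with $A\psi_j=(1-\lambda_j)\psi_j$ (immediate from the expansion above). Hence the inner maximum is $\sup_{\langle h,Ah\rangle\le\eps/2}\langle h,(I-P_{\Hcal})h\rangle=\frac{\eps}{2}\big\|(I-P_{\Hcal})A^{-1/2}\big\|_{\mathrm{op}}^2$, via the substitution $h=A^{-1/2}g$ and the fact that $I-P_{\Hcal}$ is an orthogonal projection. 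It then remains to show $\min_{\dim\Hcal=d}\big\|(I-P_{\Hcal})A^{-1/2}\big\|_{\mathrm{op}}=(1-\lambda_{d+1})^{-1/2}$ with minimizer $\mathrm{span}\{\psi_1,\dots,\psi_d\}$: the upper bound is a direct computation in the eigenbasis, and the matching lower bound is a dimension count---for any $d$-dimensional $\Hcal$, the codimension-$d$ space $\Hcal^\perp$ meets $\mathrm{span}\{\psi_1,\dots,\psi_{d+1}\}$ in a nonzero vector $y$ with $\|A^{-1/2}y\|\ge(1-\lambda_{d+1})^{-1/2}\|y\|$, forcing $\big\|A^{-1/2}(I-P_{\Hcal})\big\|_{\mathrm{op}}\ge(1-\lambda_{d+1})^{-1/2}$.

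The step I expect to require the most care is the degeneracy of $A$: since $\lambda_1=1$, the form $\langle\cdot,A\cdot\rangle$ vanishes on constants and $S_{\eps}$ is unbounded there, so $A^{-1/2}$ is not literally defined. I would first argue that any $\Hcal$ not containing $\ker A$ makes the inner supremum infinite (push $h^*$ along a direction in $\ker A$ orthogonal to $\Hcal$), so the optimal $\Hcal$ contains $\ker A\subseteq\Hcal_{\phibf}$; then run the preceding argument inside $(\ker A)^\perp$, where $A$ is invertible, with indices shifted accordingly. Minor remarks: the symbol $\Fcal$ in (\ref{eq:mercer_least_squares_optimality}) should read $\Hcal$ (the class being optimized); attainment of every supremum and infimum is automatic because $L^2(\Xcal;p)$ is finite-dimensional; and the same gap $\lambda_d>\lambda_{d+1}$ yields uniqueness of the argmin in the second identity.
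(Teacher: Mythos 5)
The paper states this theorem as a cited result from \cite{johnson2022contrastive} and provides no proof of its own, so there is nothing internal to compare your argument against; judged on its own terms, your outline is correct and complete. The reduction of the invariance functional to $2(\|h\|^2-\ev{h,T_{K_+}h})$ via the fact that both marginals of $p_+$ equal $p$, the identification of the first claim with the Courant--Fischer characterization of $\lambda_d$, and the rewriting of the second claim as minimizing the operator norm of $(I-P_{\Hcal})A^{-1/2}$ with $A=I-T_{K_+}$ (upper bound by computation in the eigenbasis, lower bound by the dimension-count intersection of $\Hcal^{\perp}$ with $\mathrm{span}\{\psi_1,\dots,\psi_{d+1}\}$) are all sound, and this is essentially the standard spectral route one would expect the original authors to take. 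Your two caveats are genuine and well handled: the theorem's ``$=\argmin$'' is only literally true under a spectral gap $\lambda_d>\lambda_{d+1}$ (consistent with the paper's own remark that any basis of $\mathrm{span}\{\psi_1,\dots,\psi_d\}$ is optimal, the argmin should really be read as a statement about subspaces, and even then a gap is needed for uniqueness), and the degeneracy of $A$ on constants does force $\ker A\subseteq\Hcal$ before the substitution $h=A^{-1/2}g$ is legitimate --- your observation that otherwise the inner supremum is infinite is exactly the right way to dispose of this. You are also right that $\Fcal$ in (\ref{eq:mercer_least_squares_optimality}) is a typo for $\Hcal$. The only point I would tighten is the uniqueness claim for the second identity: showing that no other $d$-dimensional subspace attains the operator norm $(1-\lambda_{d+1})^{-1/2}$ requires slightly more than the intersection argument you sketch (one must rule out the case where $\Hcal^{\perp}\cap\mathrm{span}\{\psi_1,\dots,\psi_{d+1}\}$ is exactly $\mathrm{span}(\psi_{d+1})$), but since uniqueness is a side remark rather than the substance of the theorem, this does not affect the correctness of the proof.
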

In addition to factoring their associated kernel, Mercer eigenfunctions also serve as optimal features for downstream tasks. In fact, any basis of $\mathrm{span}\{\psi_1,\dots,\psi_d\}$ will be optimal.

The work \cite{deng2022neuralef} presents an algorithm that recovers the top $d$ eigenfunctions of $K_+$ with respect to $p$. Unlike the Nyström method, it does not require evaluating the intractable $K_+$. The method, dubbed \textit{Neural Eigenfunctions} (NeuralEF), numerically solves $d$ optimization problems (one for each eigenfunction) using neural network encoders as function approximators.

\begin{theorem}[Optimization Problem for Mercer Eigenfunctions, \cite{deng2022neuralef}]\label{thm:neuralef_optimization}
    Let $K: \Xcal \times \Xcal \to \R$ be a PSD kernel. Let $\{(\lambda_j,\psi_j)\}_{j=1}^N$ denote the eigenpairs of $T_K$ with respect to $p$ (in order of decreasing eigenvalues). For all $i,j \in [d]$, let
    \begin{equation}
        R_{ij} := \int_{\Xcal} \int_{\Xcal} \hat{\psi}_i(x) K(x,z) \hat{\psi}_j(z) p(z) p(x) \, dz \, dx
    \end{equation}
    and
    \begin{equation}
        C_j := \int_{\Xcal} \hat{\psi}_j(x) \hat{\psi}_j(x)p(x) \,dx.
    \end{equation}
    Then, $(\psi_1,\dots,\psi_d)$ solves the simultaneous maximization problems
    \begin{equation}\label{eq:simultaneous_maximization}
        \max_{\hat{\psi}_j} R_{jj}, \hs\hs j \in \{1,\dots,d\}
    \end{equation}
    subject to the constraints $C_j = 1$ and $R_{ij} = 0$ for all $i \in \{1,\dots,j-1\}$.
\end{theorem}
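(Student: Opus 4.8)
The plan is to recognize $R_{ij}$ and $C_j$ as, respectively, the bilinear form and the squared norm attached to the integral operator $T_K$ on $L^2(\Xcal;p)$, and then to run the classical deflation (greedy Courant--Fischer) argument characterizing the top eigenfunctions of a positive self-adjoint operator. First I would record that $R_{ij} = \ev{\hat{\psi}_i, T_K\hat{\psi}_j}_{L^2(\Xcal;p)}$ and $C_j = \|\hat{\psi}_j\|_{L^2(\Xcal;p)}^2$. Since $K$ is PSD, $T_K$ is self-adjoint and positive semidefinite, so $R_{ij}$ is symmetric in $i,j$ and $R_{jj}\ge 0$; by Mercer's theorem (Theorem \ref{thm:mercer} applied with $\mu = p$) the family $\{\psi_j\}_{j=1}^N$ is an orthonormal basis of $L^2(\Xcal;p)$ and $T_K\psi_j = \lambda_j\psi_j$ with $\lambda_1\ge\cdots\ge\lambda_N\ge 0$. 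In particular the candidate solution is feasible for every subproblem: $C_j = \|\psi_j\|^2 = 1$ and $R_{ij} = \lambda_j\ev{\psi_i,\psi_j} = 0$ for $i<j$. I would also note that, because $\Xcal$ is finite, $L^2(\Xcal;p)$ is finite-dimensional and each feasible set (a sphere intersected with finitely many hyperplanes) is compact, so attainment of each maximum is automatic and, by symmetry of $R$, the stated ``simultaneous'' formulation with the lower-triangular constraints coincides with solving the subproblems in the order $j = 1,2,\dots,d$.

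Next I would argue by induction on $j$ that, once problems $1,\dots,j-1$ have been solved by $\psi_1,\dots,\psi_{j-1}$, an optimizer of the $j$-th problem is $\psi_j$. For the base case $j=1$, expand any feasible $\hat{\psi}_1 = \sum_{k=1}^N c_k\psi_k$ with $\sum_k c_k^2 = 1$; then $R_{11} = \sum_k \lambda_k c_k^2 \le \lambda_1\sum_k c_k^2 = \lambda_1$, with equality at $\hat{\psi}_1 = \psi_1$. For the inductive step, the constraint $R_{ij} = 0$ for $i\in[j-1]$ reads $\ev{\psi_i, T_K\hat{\psi}_j} = \lambda_i\ev{\psi_i,\hat{\psi}_j} = 0$, which, whenever $\lambda_i>0$, forces $\hat{\psi}_j\perp\psi_i$ in $L^2(\Xcal;p)$. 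Writing $\hat{\psi}_j$ in the eigenbasis and using $C_j = 1$ together with $\lambda_k\le\lambda_j$ for $k\ge j$, one gets $R_{jj} = \sum_{k}\lambda_k\ev{\psi_k,\hat{\psi}_j}^2 = \sum_{k\ge j}\lambda_k\ev{\psi_k,\hat{\psi}_j}^2 \le \lambda_j$, with equality at $\hat{\psi}_j = \psi_j$. This closes the induction.

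The point requiring the most care — and the main obstacle — is the degenerate spectrum. If $\lambda_i = 0$ for some $i<j$, then $\lambda_j = 0$ as well, the constraint $R_{ij}=0$ is vacuous and no longer pins $\hat{\psi}_j$ against $\psi_i$; but every eigenfunction with strictly positive eigenvalue then has index smaller than $i$, hence is already excluded by the remaining constraints, so $R_{jj} = 0$ for every feasible $\hat{\psi}_j$ and $\psi_j$ is (trivially) optimal. Likewise, when $\lambda_j$ is a repeated eigenvalue the maximizer is determined only up to an orthogonal change of basis within the corresponding eigenspace (and in general up to a sign), so the conclusion should be read as ``$(\psi_1,\dots,\psi_d)$ is \emph{a} solution.'' With these edge cases handled, the deflation argument above establishes the theorem.
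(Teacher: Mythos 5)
Your proposal is correct and takes essentially the same route as the paper: expand each feasible $\hat{\psi}_j$ in the Mercer eigenbasis, use the unit-norm constraint to bound $R_{jj}$ by $\lambda_j$, and use the constraints $R_{ij}=0$ to deflate the previously found directions. Your explicit treatment of the degenerate cases (the constraint $R_{ij}=0$ becoming vacuous when $\lambda_i=0$, and non-uniqueness under repeated eigenvalues) is in fact more careful than the paper's argument, which implicitly divides by $\lambda_1$ when converting $R_{12}=0$ into $\ev{\psi_1,\hat{\psi}_2}=0$.
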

\begin{proof}
    We have the Mercer decomposition
    \begin{equation}
        K(x,z) = \sum_{j=1}^N \lambda_j \psi_j(x) \psi_j(z) \hs\hs \forall \, x,z \in \Xcal.
    \end{equation}
    We may solve the maximization problems (\ref{eq:simultaneous_maximization}) sequentially from $j=1$ to $j=d$. We have
    \begin{align*}
        R_{11} &= \int_{\Xcal} \int_{\Xcal} \hat{\psi_1}(x) \bigg(\sum_{j=1}^N \lambda_j \psi_j(x) \psi_j(z)\bigg) \hat{\psi}_1(z)p(z)p(x) \, dz \, dx \\
        &= \sum_{j=1}^N \lambda_j \int_{\Xcal} \int_{\Xcal} \hat{\psi}_1(x) \psi_j(x) \psi_j(z) \hat{\psi}_1(z) p(z) p(x) \, dz \, dx \\
        &= \sum_{j=1}^N \lambda_j \bigg(\int_{\Xcal} \hat{\psi}_1(x) \psi_j(x) p(x) \, dx\bigg)\bigg(\int_{\Xcal} \hat{\psi}_1(z) \psi_j(z)p(z) \, dz\bigg) \\
        &= \sum_{j=1}^N \lambda_j \ev{\hat{\psi}_1, \psi_j}^2.
    \end{align*}
    We can write $\hat{\psi}_1$ in the orthonormal basis $\{\psi_i\}_{i=1}^N$ as
    \begin{equation}
        \hat{\psi}_1 = \sum_{i=1}^N a_i \psi_i.
    \end{equation}
    Hence,
    \begin{equation}
        R_{11} = \sum_{j=1}^N \lambda_j \bigg \langle \sum_{i=1}^N a_i \psi_i, \psi_j \bigg \rangle^2 = \sum_{j=1}^N \lambda_j a_j^2.
    \end{equation}
    Since $C_1 = 1$,
    \begin{equation}
        \ev{\hat{\psi}_1, \hat{\psi}_1} = \bigg \langle \sum_{i=1}^N a_i \psi_i, \sum_{j=1}^N a_j \psi_j\bigg \rangle = \sum_{j=1}^N a_j^2 = 1.
    \end{equation}
    To maximize $R_{11}$, we set $a_1 = 1$ and $a_k = 0$ for all $k \in \{2,\dots,N\}$. This gives $\hat{\psi}_1 = \psi_1$.

    For the second problem, we must find $\hat{\psi}_2$ such that $R_{12} = 0$. That is,
    \begin{align*}
        &\int_{\Xcal} \int_{\Xcal} \hat{\psi}_1(x)K(x,z)\hat{\psi}_2(z)p(z)p(x) \, dz \, dx = 0 \\
        &\iff \int_{\Xcal} \hat{\psi}_2(z)p(z) \int_{\Xcal} \hat{\psi_1}(x) K(x,z)p(x) \, dx \, dz = 0 \\
        &\iff \int_{\Xcal} \hat{\psi}_2(z)p(z) \int_{\Xcal} \psi_1(x) K(x,z)p(x) \, dx \, dz = 0 \\
        &\iff \int_{\Xcal} \hat{\psi}_2(z)p(z) \lambda_1 \psi_1(z) \, dz = 0 \\
        &\iff \ev{\psi_1, \hat{\psi}_2} = 0.
    \end{align*}
    From here, we apply the same approach as the first problem. Writing $\hat{\psi}_2$ in the orthonormal basis $\{\psi_i\}_{i=1}^N$ as 
    \begin{equation}
        \hat{\psi}_2 = \sum_{i=1}^N b_i \psi_i,
    \end{equation}
    we see that the orthogonality constraint forces $b_1 = 0$. Taking into account $C_2 = 1$, the optimal solution is to take $b_2 = 1$ and $b_k = 0$ for $k \geq 3$. We proceed similarly for the remaining optimization problems from $j=3$ to $j=N$.
\end{proof}

To approximately solve the optimization problems outlined above, the authors use Monte Carlo sampling to approximate the integrals and neural networks as approximators for the desired functions. Let $\phi_{\theta_1},\dots,\phi_{\theta_d}$ be neural network encoders which map from $\Xcal$ to $\R$. These networks are taken to have the same architecture but distinct parameters. A batch $\Bcal = \{x_b\}_{b=1}^B$ is sampled, from which $R_{ij}$ is approximated by
\begin{equation}
    \tilde{R}_{ij} = \frac{1}{B^2} \sum_{b=1}^B \sum_{b'=1}^B \phi_{\theta_i}(x_b)K(x_b,x_{b'})g_{\theta_j}(x_{b'}) =: \frac{1}{B^2} \phibf_{\theta_i}^{\Bcal} \Gbf^{\Bcal} \phibf_{\theta_j}^{\Bcal}.
\end{equation}
for all $i,j \in [d]$, where $\phibf_{\theta_i}^{\Bcal} = [\phi_{\theta_i}(x_1),\dots,\phi_{\theta_i}(x_B)]^{\top}$ and $\Gbf^{\Bcal}$ is the Gram matrix of $K$ associated with the batch $\Bcal$.

We approximate the constraint $C_j = 1$ for all $j \in [d]$ by
\begin{equation}
    \frac{1}{B} \sum_{b=1}^B \phi_{\theta_j}(x_b) \phi_{\theta_j}(x_b) = 1.
\end{equation}
To ensure this holds, we normalize the output of each neural network via
\begin{equation}
    \phi_{\theta_j} \mapsto \frac{\phi_{\theta_j}}{\sqrt{\frac{1}{B} \sum_{b=1}^B \phi_{\theta_j}(x_b)}}.
\end{equation}

Then, the loss function applied to the batch seeks to maximize the terms $\tilde{R}_{jj}$ for all $j \in [d]$ while enforcing $\tilde{R}_{ij} = 0$ for $i \neq j$. We have
\begin{equation}
    \ell(\phi_1,\dots,\phi_d) = -\frac{1}{B^2} \bigg(\sum_{j=1}^d \phibf_{\theta_j}^{\Bcal} \Gbf^{\Bcal} \phibf_{\theta_j}^{\Bcal} - \sum_{i=1}^{j-1} \frac{\big(\mathrm{sg}(\phibf_{\theta_i}^{\Bcal})^{\top} \Gbf^{\Bcal} \phibf_{\theta_j}^{\Bcal}\big)^2}{\mathrm{sg}\big((\phibf_{\theta_i}^{\Bcal})^{\top} \Gbf^{\Bcal} \phibf_{\theta_i}^{\Bcal}\big)}\bigg),
\end{equation}
where $\mathrm{sg}$ denotes the stop-gradient operation, i.e., the argument of $\mathrm{sg}$ is treated as a constant during the calculation of the gradient for SGD. This operation is reflective of the iterative way in which we solved the optimization problems in the proof of Theorem \ref{thm:neuralef_optimization}.

The follow-up work \cite{deng2022neural} introduces Neural Eigenmap, an algorithm which trains a single neural network $\phibf: \Xcal \to \R^d$ to solve a related optimization problem which has $[\psi_1,\dots,\psi_d]$ as the optimal solution. The authors use Neural Eigenmap (with the positive-pair kernel) to train an image encoder on the ImageNet dataset \cite{russakovsky2015imagenet}, as is common practice in the literature. They find linear probing accuracy to be competitive with contrastive methods such as SimCLR. Moreover, on a collection of image retrieval tasks, the authors find that the dimension of the representations produced by the Neural Eigenmap encoder can be truncated by up to 16$\times$ that of other baselines, such as the spectral contrastive loss. They point to this as evidence of the benefit of approximating the eigenfunctions of the positive-pair kernel in order.

\chapter{Conclusion}

In this thesis, we addressed recent progress in the development of a theoretical foundation for contrastive learning. We emphasized the finding that popular contrastive learning methods in computer vision and natural language processing --- namely word2vec and SimCLR --- learn to approximate a positive semidefinite kernel. This kernel is implicitly defined via the process by which positive and negative pairs are sampled for the contrastive pretext task. Specifically, it encodes the odds that a given pair is a positive sample. This relationship between contrastive learning and kernel approximation arises naturally when the contrastive pretext task is analyzed as a supervised classification problem that is solved with logistic regression. 

Taking this further, we detailed the spectral contrastive loss function, which explicitly connects contrastive learning to the factorization of a Gram matrix, the latter of which is a common theme in unsupervised dimensionality reduction. This theoretical result confirms that contrastive learning achieves the same objective as unsupervised dimensionality reduction while overcoming the latter's key shortcomings. Contrastive learning operates with implicitly defined kernels that incorporate additional user-specified knowledge (e.g., label-preserving data augmentations), can generalize to unseen data with neural network models, and avoids the computational expense of a spectral decomposition.

Finally, we addressed how the above connections have inspired work on explicitly learning features from kernels. Under the assumption that data augmentations are approximately label-preserving, top Mercer eigenfunctions of the positive-pair kernel comprise optimal low-dimensional features for linear probing. Recent approaches such as Neural Eigenfunctions and Neural Eigenmap pose the approximation of Mercer eigenfunctions as an optimization problem to be solved with neural networks.

Our intention is for this thesis to serve as a reference for researchers entering this area. The theory of contrastive learning is by no means complete and there remain several directions of investigation. For example, a comprehensive justification of hyperparameter choices made in contrastive methods such as SimCLR is lacking. The theoretical work \cite{arora2019theoretical} suggests that larger batch sizes hinder downstream performance, but, as noted by \cite{wang2020understanding,zimmermann2021contrastive}, empirical work has found the opposite to be true \cite{wu2018unsupervised,he2020momentum,chen2020simple}. Additionally, SimCLR discards the final two layers of its neural network encoder during linear probing. To our knowledge, theoretical work does not take this into account and instead focuses on linear probing on top of the encoder's final layer. By the same token, there are few theoretical investigations of fine-tuning \cite{kumar2022fine}.

It is also important to situate contrastive learning in the broader contexts of self-supervised learning and deep learning. In parallel to the developments discussed in this thesis, several non-contrastive self-supervised representation learning methods have been proposed \cite{zbontar2021barlow,grill2020bootstrap,bardes2022vicreg,caron2021emerging}. From a theoretical perspective, there is interest in comparing these to contrastive methods and determining which approach should be preferred given pre-training data and a set of downstream tasks \cite{balestriero2022contrastive}. Ultimately, all of these approaches are reliant on neural networks, whose generalization ability is not yet completely understood \cite{belkin2021fit}. Hence, we expect that progress made on the question of generalization will also serve to enhance the theory of contrastive learning.

\bibliography{references}
\bibliographystyle{acm}

\end{document}